\newtheorem{theorem}{Theorem}
\newtheorem{corollary}[theorem]{Corollary}
\newtheorem{lemma}[theorem]{Lemma}
\newtheorem{proposition}[theorem]{Proposition}
\newtheorem{definition}[theorem]{Definition}
\newtheorem{assumption}[theorem]{Assumption}  
\theoremstyle{remark}
\theoremstyle{claim}
\newtheorem{claim}{Claim}[section]
\newtheorem*{remark}{Remark}
\title{Fault-Tolerant Federated Reinforcement Learning with Theoretical Guarantee}
\author[1, 3]{\textbf{Flint Xiaofeng Fan}}
\author[2]{\textbf{Yining Ma}}
\author[1]{\textbf{Zhongxiang Dai}}
\author[4]{\textbf{Wei Jing}}
\author[3]{\\\textbf{Cheston Tan}}
\author[1]{\textbf{Bryan Kian Hsiang Low}}
\affil[1]{Dept. of Computer Science, National University of Singapore, Republic of Singapore}
\affil[2]{Dept. of ISEM, National University of Singapore, Republic of Singapore}
\affil[3]{Institute for Infocomm Research, A*STAR, Republic of Singapore}
\affil[4]{Alibaba DAMO Academy, Hangzhou, China}
\affil[1]{\texttt{\{xiaofeng,daizhongxiang,lowkh\}@comp.nus.edu.sg,$^2$yiningma@u.nus.edu}}
\affil[3]{\texttt{\{stufanxf,cheston-tan\}@i2r.a-star.edu.sg,$^4$jw334405@alibaba-inc.com}}
\begin{document}

\maketitle

\begin{abstract}
    The growing literature of \textit{Federated Learning} (FL) has recently inspired \textit{Federated Reinforcement Learning} (FRL) to encourage multiple agents to federatively build a \emph{better} decision-making policy without sharing raw trajectories. Despite its promising applications, existing works on FRL fail to I) provide theoretical analysis on its convergence, and II) account for random system failures and adversarial attacks. Towards this end, we propose the first FRL framework the convergence of which is guaranteed and tolerant to less than half of the participating agents being random system failures or adversarial attackers. We prove that the sample efficiency of the proposed framework is guaranteed to improve with the number of agents and is able to account for such potential failures or attacks. All theoretical results are empirically verified on various RL benchmark tasks.
    Our code is available at \href{https://github.com/flint-xf-fan/Byzantine-Federeated-RL}{https://github.com/flint-xf-fan/Byzantine-Federeated-RL.}
\end{abstract}


\section{Introduction}\label{sec:introduction}

\textit{Reinforcement learning} (RL) has recently been applied to many real-world decision-making problems such as gaming, robotics, healthcare, etc.~\citep{mnih2013playing,sergey2015learning,RL-MIMIC-survey}.
However, despite its impressive performances in simulation, RL often suffers from poor sample efficiency, which hinders its success in
real-world applications \citep{dulac2019challengesRealWorld-RL,levine2020offlineRL}. 
For example, when RL is applied to provide clinical decision support \citep{Nature-RL-MIMIC,RL-MIMIC-survey,DDPG-RL-MIMIC},
its performance is limited by the number (i.e., sample size) of admission records possessed by a hospital, which cannot be synthetically generated \citep{RL-MIMIC-survey}.
As this challenge is usually faced by many agents (e.g., different hospitals), a natural solution is to encourage multiple RL agents to share their trajectories, to collectively build a better decision-making policy that one single agent can not obtain by itself. 
However, in many applications, raw RL trajectories contain sensitive information (e.g., the medical records contain sensitive information about patients) and thus sharing them is prohibited.
To this end, the recent success of \textit{Federated Learning} (FL) \citep{konevcny2016federated,kairouz2019FedLearn,li2019survey,li2020practical} has inspired the setting of \textit{Federated Reinforcement Learning} (FRL) \citep{zhuo2019federatedRLYangQ}, which aims to \textit{federatively} build a \emph{better} policy from multiple RL agents without requiring them to share their raw trajectories.
FRL is practically appealing for addressing the sample inefficiency of RL in real systems, such as autonomous driving~\citep{liang2019FedRL-for-AV}, fast personalization~\citep{nadiger2019federatedRL}, optimal control of IoT devices~\citep{lim2020federated-Sensors}, robots navigation~\citep{liu2019FedRL-for-robots}, and resource management in networking~\citep{yu2020FedRLfor5G}. Despite its promising applications, FRL is faced by a number of major challenges, which existing works are unable to tackle.

Firstly, existing FRL frameworks are not equipped with theoretical convergence guarantee, and thus lack an assurance for the sample efficiency of practical FRL applications, which is a critical drawback due to the high sampling cost of RL trajectories in real systems \citep{dulac2019challengesRealWorld-RL}. Unlike FL where training data can be collected offline, FRL requires every agent to sample trajectories by interacting with the environment during learning.
However, interacting with real systems can be slow, expensive, or fragile. This makes it critical for FRL to be sample-efficient and hence highlights the requirement for convergence guarantee of FRL, without which no assurance on its sample efficiency is provided for practical applications.
To fill this gap, we establish on recent endeavors in stochastic variance-reduced optimization techniques to develop a 
variance-reduced 
federated policy gradient framework, the convergence of which is guaranteed. We prove that the proposed framework enjoys a sample complexity of $O(1/\epsilon^{5/3})$ to converge to an $\epsilon$-stationary point
in the single-agent setting, which matches recent results 
of variance-reduced policy gradient \citep{papini2018stochastic,xu2020improvedUAI}. 
More importantly, the aforementioned sample complexity is guaranteed to \emph{improve} at a rate of $O(1/K^{2/3})$ upon the federation of $K$ agents.
This guarantees that an agent achieves a better sample efficiency by joining the federation and benefits from more participating agents, which are highly desirable in FRL.

Another challenge inherited from FL is that FRL is vulnerable to random failures or adversarial attacks, which poses threats to many real-world RL systems. 
For example, robots may behave arbitrarily due to random hardware issues; 
clinical data may provide inaccurate records and hence create misleading trajectories~\citep{RL-MIMIC-survey};
autonomous vehicles, on which RL is commonly deployed, are subject to adversarial attacks~\citep{cao2019adversarialAttack-AV}.
As we will show in experiments, including such random failures or adversary agents in FRL can significantly deteriorate its convergence or even result in unlearnability.
{Of note, random failures and adversarial attacks in FL systems are being encompassed by the Byzantine failure model~\citep{lamport1982byzantine}, which is considered as the most stringent fault formalism in distributed computing~\citep{distributed-algorithms-book,castro1999practicalByzantine} -- a small fraction
of agents may behave arbitrarily and possibly adversarially, with the goal of breaking or at least slowing down the convergence of the system.} As algorithms proven to be correct in this setting are guaranteed to converge under arbitrary system behavior (e.g., exercising failures or being attacked)~\citep{kairouz2019FedLearn,li2020federatedSurvey2}, {we study the fault tolerance of our proposed FRL framework using the Byzantine failure model. We design a gradient-based Byzantine filter on top of the variance-reduced federated policy gradient framework.}
We show that, when a certain percentage (denoted by $\alpha < 0.5$) of agents are Byzantine agents, the sample complexity of the FRL system is worsened by \textit{only an additive term} of $O(\alpha^{4/3}/\epsilon^{5/3})$ (Section~\ref{section:theoretical-results}).
Therefore, when $\alpha \rightarrow 0$, (i.e., an ideal system with zero chance of failure), the filter induces no impact on the convergence.

\begin{figure}[hb]
    \centering
    \includegraphics[width=\textwidth]{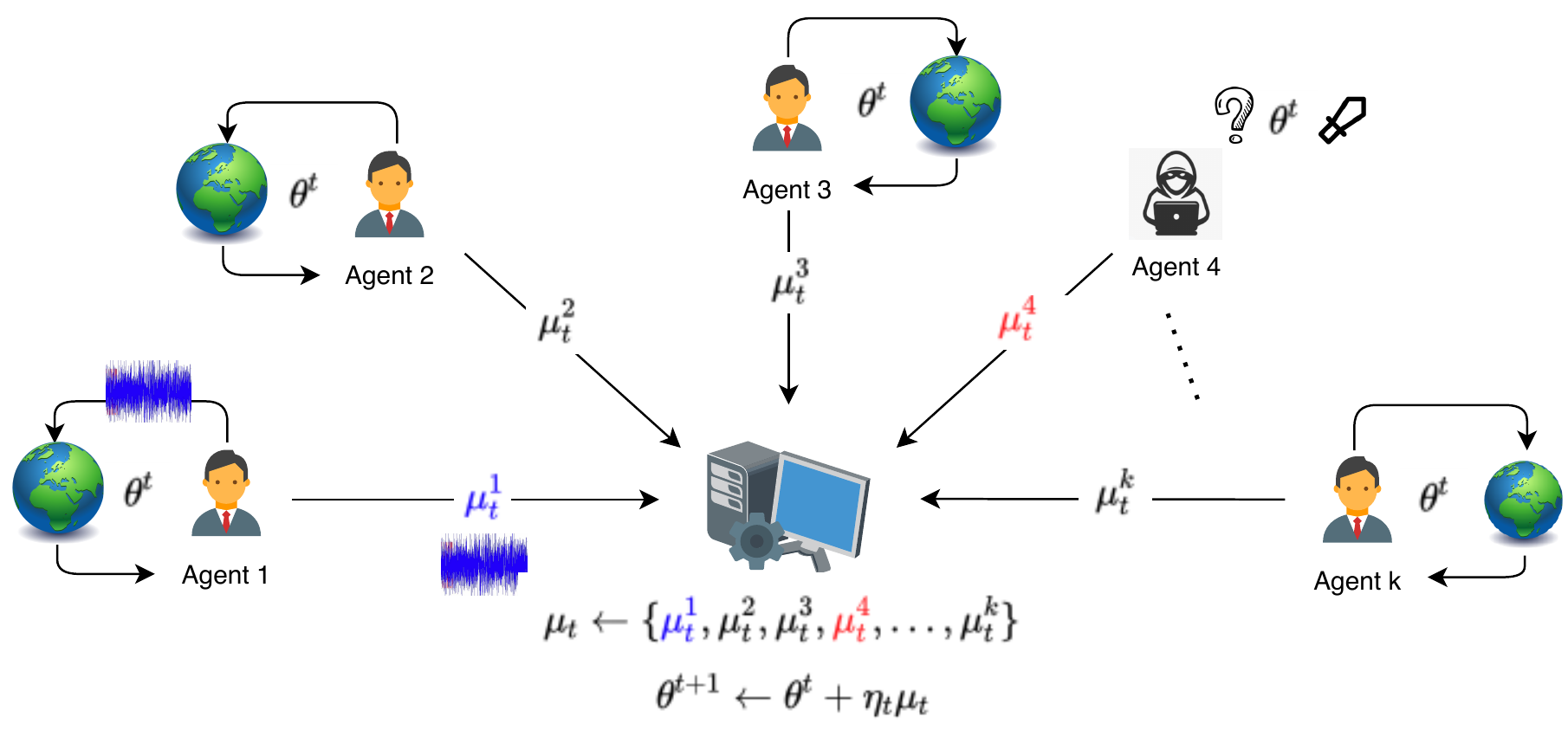}
    \caption{Graphical illustration of Federated Reinforcement Learning with the potential presence of Byzantine agents. Each agent \emph{independently} operates in a \emph{separate} copy of the underlying MDP and communicates with the central server. Agents are subject to failures or adversarial attacks from time to time. The server does not know when an agent turns into a Byzantine agent.}
    \label{fig:byzantine_FRL}
\end{figure}

\textbf{Contributions.} In this paper, we study the \emph{federated} reinforcement learning problem with theoretical guarantee in the potential presence of faulty agents, as illustrated in Fig.~\ref{fig:byzantine_FRL}. 
We introduce \emph{Federated Policy Gradient with Byzantine Resilience} (FedPG-BR), the first FRL framework that is theoretically principled and practically effective for the FRL setting, accounting for random systematic failures and adversarial attacks. In particular,
FedPG-BR (a) enjoys a guaranteed sample complexity which improves with more participating agents, and (b) is tolerant to the Byzantine fault in both theory and practice. We discuss the details of problem setting and the technical challenges (Section \ref{section:settings}) and provide theoretical analysis of FedPG-BR (Section \ref{section:theoretical-results}). We also demonstrate its empirical efficacy on various RL benchmark tasks (Section \ref{section:experiments}).

\section{Background}\label{sec:background}
\textbf{Stochastic Variance-Reduced Gradient} aims to solve $\min_{\boldsymbol{\theta} \in \mathbb{R}^d}[J(\boldsymbol{\theta}) \triangleq \frac{1}{B} \sum_{i=1}^{B} J_{i}(\boldsymbol{\theta})]$.
Under the common assumption of all function components $J_i$ being smooth and convex in $\boldsymbol{\theta}$, \textit{gradient descent} (GD) achieves linear convergence in the number of iterations of parameter updates \citep{cauchy1847GD, nesterov2013introductory}.
However, every iteration of GD requires $B$ gradient computations, which can be expensive for large $B$. 
To overcome this problem, \textit{stochastic GD} (SGD)~\citep{robbins1951stochastic,bottou2003sgd} samples a single data point per iteration, which incurs lower per-iteration cost yet results in a sub-linear convergence rate \citep{nemirovsky1983sgdConvergence}. 
For a better trade-off between convergence rate and per-iteration computational cost, the \textit{stochastic variance-reduced gradient} (SVRG) method has been proposed, which reuses past gradient computations to reduce the variance of the current gradient estimate~\citep{johnson2013svrg1,xiao2014svrg4,allen2016svrg2,reddi2016svrg3}.
More recently, \textit{stochastically controlled stochastic gradient} (SCSG) has been proposed for convex~\citep{lei2016scsg1} or smooth non-convex objective function \citep{lei2017scsg2}, to further reduce the computational cost of SVRG especially when required $\epsilon$ is small in finding $\epsilon$-approximate solution.
Refer to Appendix~\ref{Appendix-SVRG-SCSG} for more details on SVRG and SCSG.

\textbf{Reinforcement Learning} (RL) can be modelled as a discrete-time Markov Decision Process (MDP)~\citep{sutton2018reinforcement}: $M \triangleq \{\mathcal{S}, \mathcal{A}, \mathcal{P}, \mathcal{R}, \gamma, \rho\}$. $\mathcal{S}$ represents the state space, $\mathcal{A}$ is the action space, $\mathcal{P}(s^\prime | s, a)$ defines the transition probability from state $s$ to $s^\prime$
after taking action $a$, $\mathcal{R}(s,a): \mathcal{S} \times \mathcal{A} \mapsto[0, R] $ is the reward function for state-action pair $(s,a)$ and some constant $R > 0$, $\gamma \in (0,1)$ is the discount factor, and $\rho$ is the initial state distribution. 
An agent's behavior is controlled by a policy $\pi$, where $\pi(a|s)$ defines the probability that the agent chooses action $a$ at state $s$. 
We consider episodic MDPs with trajectory horizon $H$. A trajectory $\tau \triangleq \{s_0, a_0, s_1, a_1, ..., s_{H-1}, a_{H-1}\}$ is a sequence of state-action pairs traversed by an agent following any stationary policy, where $s_0 \sim \rho$. $\mathcal{R}(\tau) \triangleq\sum_{t=0}^{H-1} \gamma^{t} \mathcal{R}\left(s_{t}, a_{t}\right)$ gives the cumulative discounted reward for a trajectory $\tau$.

\textbf{Policy Gradient} (PG) methods have achieved impressive successes in model-free RL \citep[][etc.]{schulman2015TRPO,schulman2017PPO}. 
Compared with deterministic value-function based methods such as Q-learning, PG methods are generally more effective in high-dimensional problems and enjoy the flexibility of stochasticity. 
In PG, we use $\pi_{\boldsymbol{\theta}}$ to denote the policy parameterized by $\boldsymbol{\theta} \in \mathbb{R}^d$ (e.g., a neural network), and $p(\tau | \pi_{\boldsymbol{\theta}})$ to represent the trajectory distribution induced by policy $\pi_{\boldsymbol{\theta}}$. 
For brevity, we use $\theta$ to denote the corresponding policy $\pi_{\boldsymbol{\theta}}$.
The performance of a policy $\boldsymbol{\theta}$ can be measured by $J(\boldsymbol{\theta}) \triangleq \mathbb{E}_{\tau \sim p(\cdot | \boldsymbol{\theta})}[\mathcal{R}(\tau)|M]$.
Taking the gradient of $J(\boldsymbol{\theta})$ with respect to $\boldsymbol{\theta}$ gives
\begin{align*}
\nabla_{\boldsymbol{\theta}} J(\boldsymbol{\theta}) =\int_{\tau} \mathcal{R}(\tau) \nabla_{\boldsymbol{\theta}} p(\tau \mid \boldsymbol{\theta}) \mathrm{d} \tau 
=\mathbb{E}_{\tau \sim p(\cdot \mid \boldsymbol{\theta})}\left[\nabla_{\boldsymbol{\theta}} \log p(\tau \mid \boldsymbol{\theta}) \mathcal{R}(\tau) \mid M\right] \stepcounter{equation}\tag{\theequation}\label{def-full-gradient}
\end{align*}
Then, the policy $\boldsymbol{\theta}$ can be optimized by gradient ascent. 
Since computing~\eqref{def-full-gradient} is usually prohibitive, stochastic gradient ascent is typically used. In each iteration, we sample a batch of trajectories $\{\tau_i\}_{i=1}^B$ using the current policy $\boldsymbol{\theta}$, and update the policy by $\boldsymbol{\theta} \leftarrow \boldsymbol{\theta} + \eta \widehat{\nabla}_{B} J\left(\boldsymbol{\theta}\right)$, where $\eta$ is the step size and $\widehat{\nabla}_{B} J(\boldsymbol{\theta})$ is an estimate of \eqref{def-full-gradient} using the sampled trajectories $\{\tau_i\}_{i=1}^B$: $\widehat{\nabla}_{B} J(\boldsymbol{\theta})=\frac{1}{B} \sum_{i=1}^{B} \nabla_{\boldsymbol{\theta}} \log p\left(\tau_{i} \mid \boldsymbol{\theta}\right) \mathcal{R}\left(\tau_{i}\right)$.
The most common policy gradient estimators, such as REINFORCE \citep{williams1992REINFORCE} and GPOMDP \citep{baxter2001GPOMDP}, can be expressed as 
\begin{equation}
    \label{def-gradient-estimate}
    \widehat{\nabla}_{B} J(\boldsymbol{\theta})=\frac{1}{B} \sum_{i=1}^{B} g(\tau_i | \boldsymbol{\theta})
\end{equation}
where $\tau_{i}=\{s_{0}^{i}, a_{0}^{i}, s_{1}^{i}, a_{1}^{i}, \ldots, s_{H-1}^{i}, a_{H-1}^{i}\}$ and $g(\tau_i | \boldsymbol{\theta})$ is an \textit{unbiased} estimate of $\nabla_{\boldsymbol{\theta}} \log p(\tau_{i} \mid  \boldsymbol{\theta}) \mathcal{R}(\tau_{i})$. We provide formal definition of $g(\tau_i | \boldsymbol{\theta})$ in Appendix \ref{appendix-gradient-estimator}.

\textbf{SVRPG.} 
A key issue for PG is the high variance of the estimator based on stochastic gradients~\eqref{def-gradient-estimate} which results in slow convergence.
Similar to SGD for finite-sum optimization, PG requires $O(1/\epsilon^2)$ trajectories to find an $\epsilon$-stationary point such that $\mathbb{E}[\|\nabla J(\boldsymbol{\theta})\|^2] \leq \epsilon$~\citep{xu2020improvedUAI}. 
That is, PG typically requires a large number of trajectories to find a well-performing policy.
To reduce the variance of the gradient estimator in PG~\eqref{def-gradient-estimate}, SVRG has been applied to policy evaluation~\citep{du2017SVRGPolicyEval,ijcai2020-0374} and policy optimization~\citep{xu2017SVRGTRPO}.
The work of~\citet{papini2018stochastic} has adapted the theoretical analysis of SVRG to PG to introduce the \textit{stochastic variance-reduced PG} (SVRPG) algorithm.
More recently,~\citet{xu2020improvedUAI} has refined the analysis of SVRPG~\citep{papini2018stochastic} and shown that SVRPG enjoys a sample complexity of $O(1/\epsilon^{5/3})$.
These works have demonstrated both theoretically and empirically that SVRG is a promising approach to reduce the variance and thus improve the sample efficiency of PG methods.

\textbf{Fault tolerance} refers to the property that enables a computing system to continue operating properly without interruption when one or more of its workers fail. Among the many fault formalisms, the Byzantine failure model has a rich history in distributed computing~\citep{distributed-algorithms-book,castro1999practicalByzantine} and is considered as the most stringent fault formalism in fault-tolerant FL system design~\citep{kairouz2019FedLearn,li2020federatedSurvey2}. 
Originated from the \textit{Byzantine generals problem}~\citep{lamport1982byzantine}, the Byzantine failure model allows an $\alpha$-fraction (typical $\alpha < 0.5$) of workers to behave arbitrarily and possibly adversarially, with the goal of breaking or at least slowing down the convergence of the algorithm.
As algorithms proven to be resilient to the Byzantine failures are guaranteed to converge under arbitrary system behavior (hence fault-tolerant)~\citep{distributed-algorithms-book,castro1999practicalByzantine}, it has motivated a significant interest in providing distributed \emph{supervised learning} with Byzantine resilience guarantees \cite[e.g.,][]{blanchard2017machine-Krum,yin2018byzantine,alistarh2018byzantine,baruch2019Byzantine-VA-attack,khanduri2019byzantine,allen2020byzantine-iclr}.
However, there is yet no existing work studying the correctness of Byzantine resilience in the context of FRL.

\section{Fault-tolerant federated reinforcement learning}
\label{section:settings}

\subsection{Problem statement}\label{subsec:problem-setting}
Our problem setting is similar to that of FL~\citep{konevcny2016federated} where a central server is assumed to be trustworthy and governs the federation of $K$ distributed agents $k \in \{1,...,K\}$. 
In each round $t \in \{1,...,T\}$, the central server broadcasts its parameter $\boldsymbol{\theta}_0^t$ to all agents. 
Each agent then independently samples a batch of trajectories $\{\tau^{(k)}_{t,i}\}_{i=1}^{B_t}$ by interacting with the environment using the obtained policy,  e.g., $\{\tau^{(k)}_{t,i}\}_{i=1}^{B_t} \sim p(\cdot | \boldsymbol{\theta}_0^t)$. 
However, different from FL where each agent computes the parameter updates and sends the updated parameter to the server for aggregation~\citep{konevcny2016federated}, agents in our setup do not compute the updates locally, but instead send the gradient computed w.r.t.\ their local trajectories $\mu^{(k)}_t \triangleq \widehat{\nabla}_{B_t} J(\boldsymbol{\theta}_0^t)$ directly to the server. The server then aggregates the gradients, performs a policy update step, and starts a new round of federation. 

Of note, every agent including the server is operating in a separate copy of the MDP. No exchange of raw trajectories is required, and no communication between any two agents is allowed. To account for potential failures and attacks, we allow an $\alpha$-fraction of agents to be Byzantine agents with $\alpha \in [0, 0.5)$. 
That is, in each round $t$, a good agent always sends its computed $\mu^{(k)}_t$ back to the server, while a Byzantine agent may return any arbitrary vector.\footnote{A Byzantine agent may not be Byzantine in every round.} The server has no information regarding whether Byzantine agents exist and cannot track the communication history with any agent.
In every round, the server can only access the $K$ gradients received from agents, and thereby uses them to detect Byzantine agents so that it only aggregates the gradients from those agents that are believed to be non-Byzantine agents.

\textbf{Notations.} Following the notations of SCSG~\citep{lei2017scsg2}, we use $\boldsymbol{\theta}_0^t$ to denote the server's initial parameter in round $t$ and $\boldsymbol{\theta}_n^t$ to represent the updated parameter at the $n$-th step in round $t$.
$\tau_{t,i}^{(k)}$ represents agent $k$'s $i$-th trajectory sampled using $\boldsymbol{\theta}^t_0$.
$\|\cdot\|$ denotes Euclidean norm and Spectral norms for vectors and matrices, respectively. $O(\cdot)$ hides all constant terms.

\subsection{Technical challenges}\label{subsection:technical-challenges}
There is an emerging interest in Byzantine-resilient distributed \textit{supervised learning} \cite[e.g.,][]{blanchard2017machine-Krum,xie2019zeno-byzantine,yin2018byzantine,alistarh2018byzantine,baruch2019Byzantine-VA-attack,khanduri2019byzantine,allen2020byzantine-iclr}. However, a direct application of those works to FRL is not possible due to that the objective function $J(\boldsymbol{\theta})$ of RL, which is conditioned on $\tau \sim p(\cdot | \boldsymbol{\theta})$, is different from the supervised classification loss seen in the aforementioned works, resulting in the following issues:

\emph{Non-stationarity}: unlike in supervised learning, the distribution of RL trajectories is affected by the value of the policy parameter which changes over time (e.g., $\tau \sim p(\cdot | \boldsymbol{\theta})$). We deal with the non-stationarity using importance sampling \citep{2018importanceSampling} (Section \ref{subsection:algorithm-description}).

\emph{Non-concavity}: the objective function $J(\boldsymbol{\theta})$ is typically non-concave.
To derive the theoretical results accounting for the non-concavity, we need the $L$-smoothness assumption on $J(\boldsymbol{\theta})$, which is a reasonable assumption and commonly made in the literature \cite{pirotta2015Smooth} (Section \ref{section:theoretical-results}). Hence we aim to find an $\epsilon$-approximate solution (i.e., a commonly used objective in non-convex optimization):
\begin{definition}[$\epsilon$-approximate solution]\label{definition:epsilon-approximate-solution}
A point $\boldsymbol{\theta}$ is called $\epsilon$-stationary if $\|\nabla J(\boldsymbol{\theta})\|^2 \leq \epsilon$. Moreover, the algorithm is said to achieve an $\epsilon$-approximate solution in $t$ rounds if $\mathbb{E}[\|\nabla J(\boldsymbol{\theta})\|^2] \leq \epsilon$, where the expectation is with respect to all randomness of the algorithm until round $t$.
\end{definition}
\emph{High variance in gradient estimation:} 
the high variance in estimating \eqref{def-gradient-estimate} renders the FRL system vulnerable to variance-based attacks which conventional Byzantine-resilient optimization works fail to defend \citep{baruch2019Byzantine-VA-attack}. 
To combat this issue, we adapt the SCSG optimization \citep{lei2017scsg2} to federated policy gradient for a refined control over the estimation variance, hence enabling the following assumption which we exploit to design our Byzantine filtering step:
\begin{assumption}[On bounded variance of the gradient estimator]
    \label{assumption-bounded-variance}
    There is a constant $\sigma$ such that $\|g(\tau|\boldsymbol{\theta}) - \nabla J(\boldsymbol{\theta})\| \leq \sigma$ for any $\tau \sim p(\tau | \boldsymbol{\theta})$ for all policy $\pi_{\boldsymbol{\theta}}$.
\end{assumption}
\begin{remark}
{Assumption~\ref{assumption-bounded-variance} is also seen in Byzantine-resilient optimization~\citep{alistarh2018byzantine, khanduri2019byzantine} and may be relaxed to $\mathbb{E}\|g(\tau | \boldsymbol{\theta}) - \nabla J(\boldsymbol{\theta})\| \leq \boldsymbol{\theta}$} which is a standard assumption commonly used in stochastic non-convex optimization \citep[e.g.][]{allen2016variance,lei2017scsg2}. In this work, the value of $\sigma$ is the maximum difference between optimal gradient $\nabla J(\boldsymbol{\theta})$ and the gradient estimate $g(\tau | \boldsymbol{\theta)}$ w.r.t. any trajectories induced by policy $\pi_{\boldsymbol{\theta}}$. For complex real-world problems with continuous, high-dimensional controls, $\sigma$ may be upper-bounded, provided that the MDP is Lipschitz continuous\citet{pirotta2015Smooth}. The deviation can be obtained by referring to Proposition 2 of \citet{pirotta2015Smooth}.\footnote{The value of $\sigma$ can be estimated at the server.}
\end{remark}

\subsection{Algorithm description}\label{subsection:algorithm-description}
The pseudocode for the proposed \emph{Federated Policy Gradient with Byzantine Resilience} (FedPG-BR) is shown in Algorithm~\ref{alg:FedPG-BR}.
FedPG-BR starts with a randomly initialized parameter $\tilde{\boldsymbol{\theta}}_0$ at the server. 
At the beginning of the $t$-th round, the server keeps a snapshot of its parameter from the previous round (i.e., $\boldsymbol{\theta}_0^t \leftarrow \tilde{\boldsymbol{\theta}}_{t-1}$) and broadcasts this parameter to all agents (line 3).
Every (good) agent $k$ samples $B_t$ trajectories $\{\tau_{t,i}^{(k)}\}_{i=1}^{B_t}$ using the policy $\boldsymbol{\theta}_0^t$ (line 5), computes a gradient estimate $\mu_t^{(k)} \triangleq 1/B_t \sum_{i=1}^{B_t}g(\tau_{t,i}^{(k)} | \boldsymbol{\theta}_0^t)$ where $g$ is either the REINFORCE or the GPOMDP estimator (line 6), and sends $\mu_t^{(k)}$ back to the server. 
For a Byzantine agent, it can send an arbitrary vector instead of the correct gradient estimate. 
After all gradients are received, the server performs the \textit{Byzantine filtering} step, and then computes the batch gradient $\mu_t$ by averaging those gradients that the server believes are from non-Byzantine agents (line 7). 
For better clarity, we present the subroutine \textbf{FedPG-Aggregate} for Byzantine filtering and gradient aggregation in Algorithm~\ref{alg:FedAgg-BR}, which we discuss in detail 
separately. 
\begin{algorithm}[b]
   \caption{FedPG-BR}
   \label{alg:FedPG-BR}
\begin{algorithmic}[1]
   \STATE {\bfseries Input:} $\tilde{\boldsymbol{\theta}}_{0} \in \mathbb{R}^{d}$,  batch size $B_t$, mini batch size $b_t$, step size $\eta_{t}$
   \FOR{$t=1$ {\bfseries to} $T$}
       \STATE $\boldsymbol{\theta}_{0}^{t} \leftarrow \tilde{\boldsymbol{\theta}}_{t-1}$ \qquad \qquad \qquad \qquad \qquad \qquad \qquad \qquad \qquad \quad   \textit{; broadcast to all agents} 
       \FOR{$k=1$ {\bfseries to} $K$}
           \STATE Sample $B_t$ trajectories $\{\tau_{t,i}^{(k)}\}_{i=1}^{B_t}$ from $p(\cdot|\boldsymbol{\theta}_0^t)$
           \STATE {$\mu_{t}^{(k)} \triangleq \left\{\begin{array}{ll}
            \frac{1}{B_t} \sum_{i=1}^{B_t}  g(\tau_{t, i}^{(k)} | \boldsymbol{\theta}_{0}^{t} ) & \text { for } k \in \mathcal{G} \\
            * & \text { for } k \notin \mathcal{G}
            \end{array}\right.$ \quad \qquad \qquad   \textit{; push $\mu_t^{(k)}$ to server}}
       \ENDFOR
       \STATE $\mu_t \leftarrow \operatorname{\textbf{FedPG-Aggregate}}(\{\mu_t^{(k)}\}_{k=1}^K)$
       \STATE Sample $N_t \sim Geom(\frac{B_t}{B_t+b_t})$
       \FOR{$n=0$ {\bfseries to} $N_t-1$}
           \STATE Sample $b_t$ trajectories $\{\tau_{n,j}^t\}_{j=1}^{b_t}$ from $p(\cdot|\boldsymbol{\theta}_n^t)$
           \STATE {$v_{n}^{t} \triangleq \frac{1}{b_t} \sum_{j=1}^{b_t} [g(\tau_{n,j}^{t}|\boldsymbol{\boldsymbol{\theta}}_{n}^{t}) - \omega(\tau_{n,j}^t|\boldsymbol{\theta}_{n}^{t}, \boldsymbol{\theta}_{0}^{t}) g(\tau_{n,j}^{t}|\boldsymbol{\theta}_{0}^{t})] + \mu_t$}
           \STATE $\boldsymbol{\theta}_{n+1}^t = \boldsymbol{\theta}_{n}^{t} + \eta_{t}v_{n}^{t}$
       \ENDFOR
       \STATE $\tilde{\boldsymbol{\theta}}_{t} \leftarrow \boldsymbol{\theta}_{N_t}^{t}$
   \ENDFOR
   \STATE {\bfseries Output: $\tilde{\boldsymbol{\theta}}_{a}$} uniformly randomly picked from $\{\tilde{\boldsymbol{\theta}}_t\}_{t=1}^T$ 
\end{algorithmic}
\end{algorithm}

The aggregation is then followed by the SCSG inner loop~\citep{lei2017scsg2} with $N_t$ steps,
where $N_t$ is sampled from a geometric distribution with parameter $\frac{B_t}{B_t + b_t}$ (line 8). 
At step $n$, the server \emph{independently} samples $b_t$ ($b_t \ll B_t$) trajectories $\{\tau_{n,j}^t\}_{j=1}^{b_t}$ using its current policy $\boldsymbol{\theta}_n^t$ (line 10), 
and then updates the policy parameter $\boldsymbol{\theta}_n^t$ based on the following semi-stochastic gradient (lines 11 and 12):
\begin{align*}
    v_{n}^{t} \triangleq \frac{1}{b_t} \sum_{j=1}^{b_t} \left[g(\tau_{n,j}^{t}|\boldsymbol{\theta}_{n}^{t}) - \omega(\tau_{n,j}^t|\boldsymbol{\theta}_{n}^{t}, \boldsymbol{\theta}_{0}^{t}) g(\tau_{n,j}^{t}|\boldsymbol{\theta}_{0}^{t})\right] + \mu_t. \stepcounter{equation}\tag{\theequation}\label{def-semi-stochastic-gradient} 
\end{align*}
\begin{algorithm}[t]
   \renewcommand\thealgorithm{1.1}
   \caption{\textbf{FedPG-Aggregate}}
   \label{alg:FedAgg-BR}
\begin{algorithmic}[1]
  \STATE {\bfseries Input:} Gradient estimates from $K$ agents in round $t$: $\{\mu_t^{(k)}\}_{k=1}^K$, variance bound $\sigma$, filtering threshold $\mathfrak{T}_{\mu}\triangleq2 \sigma \sqrt{\frac{V}{B_t}}$, where $V\triangleq2\operatorname{log}(\frac{2K}{\delta})$ and $\delta \in (0,1)$
        \STATE {$S_1 \triangleq  \{\mu_{t}^{(k)}\} \text { where } k \in[K] \text { s.t. } 
         \left|\left\{k^{\prime}\in[K]:\left\|\mu_{t}^{(k^{\prime})}-\mu_{t}^{(k)}\right\| \leq \mathfrak{T}_{\mu}\right\}\right|>\frac{K}{2}$}
        \STATE {$\mu_{t}^{\text {mom }} \leftarrow \operatorname{argmin}_{\mu_t^{(\tilde{k})}}\|\mu_t^{(\tilde{k})} - \operatorname{mean}(S_1)\| \text{ where } \tilde{k} \in S_1 $}
        \STATE \textit{R1:} $\mathcal{G}_{t} \triangleq \left\{k \in[K]:\left\|\mu_{t}^{(k)}-\mu_{t}^{\text{mom}}\right\| \leq \mathfrak{T}_{\mu}\right\}$
      
        \IF{$\left|\mathcal{G}_{t}\right|<(1-\alpha) K$ \quad} 
            \STATE {$S_2 \triangleq  \{\mu_{t}^{(k)}\} \text { where } k \in[K] \text { s.t. } 
             \left|\left\{k^{\prime}\in[K]:\left\|\mu_{t}^{(k^{\prime})}-\mu_{t}^{(k)}\right\| \leq 2 \sigma\right\}\right|>\frac{K}{2}$}
            \STATE {$\mu_{t}^{\text {mom }} \leftarrow \operatorname{argmin}_{\mu_t^{(\tilde{k})}}\|\mu_t^{(\tilde{k})} - \operatorname{mean}(S_2)\| \text{ where } \tilde{k} \in S_2 $}
            \STATE \textit{R2:} $\mathcal{G}_{t} \triangleq \left\{k \in[K]:\left\|\mu_{t}^{(k)}-\mu_{t}^{\text{mom}}\right\| \leq 2 \sigma\right\}$
        \ENDIF
      \STATE {\bfseries Return:} $\mu_{t} \triangleq \frac{1}{\left|\mathcal{G}_{t}\right|} \sum_{k \in \mathcal{G}_{t}} \mu_{t}^{(k)}$ 
\end{algorithmic}
\end{algorithm}
The last two terms serve as a correction to the gradient estimate to reduce variance and improve the convergence rate of Algorithm~\ref{alg:FedPG-BR}. 
Of note, the semi-stochastic gradient above~\eqref{def-semi-stochastic-gradient} differs from that used in SCSG due to the additional term of $\omega(\tau | \boldsymbol{\theta}_{n}^{t}, \boldsymbol{\theta}_0^t) \triangleq p(\tau | \boldsymbol{\theta}_0^t) / p(\tau | \boldsymbol{\theta}_{n}^{t})$. 
This term is known as the \textit{importance weight} from $p(\tau | \boldsymbol{\theta}_n^t)$ to $p(\tau | \boldsymbol{\theta}_0^t)$ to account for the aforementioned non-stationarity of the distribution in RL \citep{papini2018stochastic,xu2020improvedUAI}. In particular, directly computing $g(\tau_{n,j}^{t}|\boldsymbol{\theta}_{0}^{t})$ results in a biased estimation because the trajectories $\{\tau_{n,j}^t\}_{j=1}^{b_t}$ are sampled from the policy $\boldsymbol{\theta}_n^t$ instead of $\boldsymbol{\theta}_0^t$. 
We prove in Lemma \ref{lemma-unbiased-IS} (Appendix~\ref{app:some_useful_lemmas}) that this importance weight results in an unbiased estimate of the gradient, i.e., $\mathbb{E}_{\tau \sim p\left(\cdot \mid \boldsymbol{\theta}_{n}\right)} [\omega(\tau|\boldsymbol{\theta}_n,\boldsymbol{\theta}_0)g(\tau|\boldsymbol{\theta}_0)] =\nabla J({\boldsymbol{\theta}_0})$.
%


Here we describe the details of our Byzantine filtering step (i.e., the subroutine \textbf{FedPG-Aggregate} in Algorithm \ref{alg:FedAgg-BR}), which is inspired by the works of \citet{alistarh2018byzantine} and \citet{khanduri2019byzantine} in distributed supervised learning.
In any round $t$, we use $\mathcal{G}$ to denote the set of true good agents and use $\mathcal{G}_t$ to denote the set of agents that are believed to be good by the server.
Our Byzantine filtering consists of two filtering rules denoted by R1 (lines 2-4) and R2 (lines 6-8).
R2 is more intuitive to understand, so we start by introducing R2.
Firstly, in line 6, the server constructs a set $S_2$ of \textit{vector medians}~\citep{alistarh2018byzantine} where each element of $S_2$ is chosen from $\{\mu_t^{(k)}\}_{k=1}^K$ if it is close (within $2\sigma$ in Euclidean distance) to more than $K/2$ elements.
Next, the server finds a \textit{Mean of Median} vector $\mu_{t}^{\text{mom}}$ from $S_2$,
which is defined as any $\mu_t^{(\tilde{k})} \in S_2$ that is the closet to the mean of the vectors in $S_2$. 
After $\mu_t^{\text{mom}}$ is selected, the server can construct the set $\mathcal{G}_t$ by filtering out any $\mu_t^{(k)}$ whose distance to $\mu_t^{\text{mom}}$ is larger than $2\sigma$ (line 8). 
This filtering rule is designed based on Assumption~\ref{assumption-bounded-variance} which implies that the maximum distance between any two good agents is $2\sigma$, and our assumption that at least half of the agents are good (i.e., $\alpha < 0.5$). 
We show in Appendix~\ref{proof-claims-in-filtering-strategy} that under these two assumptions,
R2 guarantees that all good agents are included in $\mathcal{G}_t$ (i.e., $|\mathcal{G}_t| \geq (1-\alpha)K$).
We provide a graphical illustration (Fig.~\ref{fig:filtering-diagram} in Appendix~\ref{proof-claims-in-filtering-strategy}) on that if any Byzantine agent is included in $\mathcal{G}_t$, its distance to the true gradient $\nabla J(\boldsymbol{\theta}^t_0)$ is at most $3\sigma$, which ensures that its impact on the algorithm is limited. {Note that the pairwise computation among the weights of all the agents can be implemented using the Euclidean Distance Matrix Trick~\citep{albanie2019euclidean}}.

R1 (lines 2-4) is designed in a similar way: R1 ensures that all good agents are \textit{highly likely} to be included in $\mathcal{G}_t$ by exploiting Lemma~\ref{lemma-martingale} (Appendix~\ref{app:some_useful_lemmas}) to guarantee that \textit{with high probability}, all good agents are \textit{concentrated in a smaller region}. 
That is, define $V\triangleq2\operatorname{log}(2K/\delta)$ and $\delta \in (0,1)$, then with probability of $\geq 1-\delta$, the maximum distance between any two good agents is $\mathfrak{T}_{\mu} \triangleq 2 \sigma \sqrt{V/B_t}$.
Having all good agents in a smaller region improves the filtering strategy, because it makes the Byzantine agents less likely to be selected and reduces their impact even if they are selected.
Therefore, R1 is applied first such that if R1 fails to include all good agents in $\mathcal{G}_t$ (line 4) which happens with probability $<\delta$, R2 is then employed as a backup to ensure that $\mathcal{G}_t$ always include all good agents. Therefore, these two filtering rules ensure in any round $t$ that (a) gradients from good agents are never filtered out, and that (b) if gradients from Byzantine agents are not filtered out, their impact is limited since their maximum distance to $\nabla J(\boldsymbol{\theta}^t_0)$ is bounded by $3\sigma$.

\section{Theoretical results}\label{section:theoretical-results}
Here, we firstly put in place a few assumptions required for our theoretical analysis, all of which are common in the literature. 
\begin{assumption}[On policy derivatives]
    \label{assumption-policy-derivatives}
    Let $\pi_{\boldsymbol{\theta}}(a|s)$ be the policy of an agent at state $s$. There exist constants $G, M > 0$ s.t.\ the log-density of the policy function satisfies, for all $a \in \mathcal{A}$ and $s \in \mathcal{S}$
    \begin{align*}
        |\nabla_{\boldsymbol{\theta}}\operatorname{log} \pi_{\boldsymbol{\theta}}(a | s) | \leq G, \qquad \|\nabla_{\boldsymbol{\theta}}^2\operatorname{log} \pi_{\boldsymbol{\theta}}(a | s) \| \leq M, \qquad
        \forall a \in \mathcal{A}, \forall s \in \mathcal{S}
    \end{align*}
\end{assumption}
Assumption~\ref{assumption-policy-derivatives} 
provides the basis for the smoothness assumption on the objective function $J(\boldsymbol{\theta})$ commonly used in non-convex optimization~\citep{reddi2016svrg3,allen2016svrg2} and also appears in ~\citet{papini2018stochastic,xu2020improvedUAI}.
Specifically, Assumption~\ref{assumption-policy-derivatives} implies:
\begin{proposition}[On function smoothness]
    \label{proposition-uai-grad}
    Under Assumption \ref{assumption-policy-derivatives}, $J(\boldsymbol{\theta})$ is $L$-smooth with $L \triangleq HR(M+HG^2)/(1-\gamma)$. Let $g(\tau | \boldsymbol{\theta})$ be the REINFORCE or GPOMDP gradient estimators. Then for all $ \boldsymbol{\theta}, \boldsymbol{\theta}_1, \boldsymbol{\theta}_2 \in \mathbb{R}^d$, it holds that
    \begin{align*}
        \|g(\tau|\boldsymbol{\theta})\| \leq C_g, \qquad \left\|g\left(\tau \mid \boldsymbol{\theta}_{1}\right)-g\left(\tau \mid \boldsymbol{\theta}_{2}\right)\right\| \leq L_{g}\left\|\boldsymbol{\theta}_{1}-\boldsymbol{\theta}_{2}\right\|
    \end{align*}
    where $L_g \triangleq HM(R+|C_b|)/(1-\gamma), C_g \triangleq HG(R+|C_b|)/(1-\gamma)$ and $C_b$ is the baseline reward.
\end{proposition}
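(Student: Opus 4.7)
The plan is to derive all three claims from a single structural observation: because the transition kernel $\mathcal{P}$ and initial distribution $\rho$ do not depend on $\boldsymbol{\theta}$, the log-likelihood splits as $\log p(\tau\mid\boldsymbol{\theta}) = \sum_{t=0}^{H-1}\log\pi_{\boldsymbol{\theta}}(a_t\mid s_t) + C(\tau)$, where $C(\tau)$ does not depend on $\boldsymbol{\theta}$. Differentiating and applying the triangle inequality across the $H$ summands, Assumption~\ref{assumption-policy-derivatives} immediately yields $\|\nabla_{\boldsymbol{\theta}}\log p(\tau\mid\boldsymbol{\theta})\| \leq HG$ and $\|\nabla^2_{\boldsymbol{\theta}}\log p(\tau\mid\boldsymbol{\theta})\| \leq HM$. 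A geometric sum on the discount factor also gives $|\mathcal{R}(\tau)| \leq R/(1-\gamma)$.

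For the $L$-smoothness of $J$, I would apply the log-derivative trick twice to~\eqref{def-full-gradient} to obtain
\begin{align*}
\nabla^2 J(\boldsymbol{\theta}) = \mathbb{E}_{\tau\sim p(\cdot\mid\boldsymbol{\theta})}\!\Bigl[\mathcal{R}(\tau)\bigl(\nabla\log p(\tau\mid\boldsymbol{\theta})\,\nabla\log p(\tau\mid\boldsymbol{\theta})^{\top} + \nabla^2\log p(\tau\mid\boldsymbol{\theta})\bigr)\Bigr],
\end{align*}
then bound the spectral norm using the three inequalities just stated together with submultiplicativity, arriving at $\|\nabla^2 J(\boldsymbol{\theta})\| \leq (R/(1-\gamma))(H^2G^2 + HM) = L$, which is equivalent to $L$-smoothness of $J$.

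For the two bounds on the estimator, I would use the common form (made precise in Appendix~\ref{appendix-gradient-estimator}) $g(\tau\mid\boldsymbol{\theta}) = \sum_{h=0}^{H-1}\nabla_{\boldsymbol{\theta}}\log\pi_{\boldsymbol{\theta}}(a_h\mid s_h)\,\Psi_h(\tau)$ that subsumes both REINFORCE and GPOMDP, where the scalar weight $\Psi_h(\tau)$ is a (possibly baseline-shifted) discounted reward sum obeying $|\Psi_h(\tau)| \leq (R+|C_b|)/(1-\gamma)$. The triangle inequality and Assumption~\ref{assumption-policy-derivatives} then give $\|g(\tau\mid\boldsymbol{\theta})\| \leq HG(R+|C_b|)/(1-\gamma) = C_g$. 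For Lipschitzness, the weights $\Psi_h(\tau)$ depend only on $\tau$ and not on $\boldsymbol{\theta}$, so subtracting $g(\tau\mid\boldsymbol{\theta}_1)$ from $g(\tau\mid\boldsymbol{\theta}_2)$ leaves only the score differences; the Hessian bound in Assumption~\ref{assumption-policy-derivatives} makes each score $M$-Lipschitz in $\boldsymbol{\theta}$ by the mean-value inequality, so summing $H$ terms and multiplying by the weight bound yields $L_g\|\boldsymbol{\theta}_1-\boldsymbol{\theta}_2\|$.

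The only real subtlety, and the most likely place to slip a constant, is book-keeping: justifying the interchange of gradient and expectation in the Hessian identity (standard dominated convergence once the uniform bounds above are in hand) and handling the REINFORCE and GPOMDP variants uniformly so that the final constants match exactly the $L$, $C_g$, $L_g$ claimed. Everything else reduces to the triangle inequality and a geometric series.
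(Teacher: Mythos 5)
Your sketch is correct and, for the record, the paper does not prove this proposition itself --- it defers entirely to \citet{xu2020improvedUAI} (Lemmas on gradient norms and smoothness there, building on \citet{papini2018stochastic}), and your argument is essentially the one found in that source: split $\log p(\tau\mid\boldsymbol{\theta})$ into the score sum plus a $\boldsymbol{\theta}$-independent term, bound $\|\nabla\log p\|\le HG$, $\|\nabla^2\log p\|\le HM$, $|\mathcal{R}(\tau)|\le R/(1-\gamma)$, use the second-order log-derivative identity for $\nabla^2 J$, and write both estimators in the form $\sum_h \nabla_{\boldsymbol{\theta}}\log\pi_{\boldsymbol{\theta}}(a_h\mid s_h)\,\Psi_h(\tau)$ with $\boldsymbol{\theta}$-independent weights. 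The constants you obtain match exactly ($L=\frac{R}{1-\gamma}(HM+H^2G^2)$, $C_g$, $L_g$ as claimed); the only point worth making explicit in a full write-up is the GPOMDP case, where after swapping the order of summation the weight on the $t$-th score is $\sum_{h\ge t}(\gamma^h r_h - C_{b_h})$, so the bound $|\Psi_t(\tau)|\le (R+|C_b|)/(1-\gamma)$ relies on the baselines carrying the discount factor (as in the cited works) --- exactly the book-keeping caveat you already flagged.
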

Proposition~\ref{proposition-uai-grad} is important for deriving a fast convergence rate and its proof can be found in~\citet{xu2020improvedUAI}.
Next, we need an assumption on the variance of the importance weights (Section~\ref{section:settings}).
\begin{assumption}[On variance of the importance weights]
    \label{assumption-uai-weight}
    There exists a constant $W < \infty$ such that for each policy pairs in Algorithm \ref{alg:FedPG-BR}, it holds
    \begin{align*}
        \operatorname{Var}(\omega(\tau|\boldsymbol{\theta}_1, \boldsymbol{\theta}_2)) \leq W, \qquad
        \forall \boldsymbol{\theta}_1, \boldsymbol{\theta}_2 \in \mathbb{R}^d, \tau \sim p(\cdot | \boldsymbol{\theta}_1)
    \end{align*}
\end{assumption}
Assumption~\ref{assumption-uai-weight} has also been made by \citet{papini2018stochastic, xu2020improvedUAI}. Now we present the convergence guarantees for our FedPG-BR algorithm:
\begin{theorem}[Convergence of FedPG-BR]
\label{main-theorem}
Assume uniform initial state distribution across agents, and the gradient estimator is set to be the REINFORCE or GPOMDP estimator. Under Assumptions \ref{assumption-bounded-variance}, \ref{assumption-policy-derivatives}, and \ref{assumption-uai-weight}, if we choose $\eta_t \leq \frac{1}{2 \Psi B_t^{2/3}}$, $b_t = 1$, and $B_t = B \geq 4\Phi L^{-2}$ where $\Phi \triangleq L_g + C_g^2C_w$, $\Psi \triangleq (L(L_g + C_g^2C_w))^{1/3}$, 
$L, L_g, C_g$ are defined in Proposition \ref{proposition-uai-grad} and $C_w$ is defined in Lemma \ref{lemma-uai-variance-IS}, $\delta \in (0,1)$ such that $e^{\frac{\delta B_t}{2(1-2 \delta)}} \leq \frac{2 K}{\delta} \leq e^{\frac{B_t}{2}}$ and $\delta \leq \frac{1}{5KB_t}$,
then the output $\tilde{\boldsymbol{\theta}}_a$ of Algorithm \ref{alg:FedPG-BR} satisfies
\begin{align*}
    \mathbb{E}[\|\nabla J(\tilde{\boldsymbol{\theta}}_{a})\|^{2}] \leq {\frac{2 \Psi \left[J(\tilde{\boldsymbol{\theta}}^*)-J(\tilde{\boldsymbol{\theta}}_{0})\right]}{TB^{1 / 3}}} +\frac{8 \sigma^{2}}{(1-\alpha)^{2} K B}+\frac{96 \alpha^{2} \sigma^{2} V}{(1-\alpha)^{2} B}
\end{align*}
where $0 \leq \alpha <0.5$ and $\tilde{\boldsymbol{\theta}}^*$ is a global maximizer of $J$.
\end{theorem}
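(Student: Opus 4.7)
The plan is to follow the SCSG-style analysis of the inner loop (rounds of semi-stochastic gradient ascent) and combine it with a variance bound on the aggregated reference gradient $\mu_t$ coming out of \textbf{FedPG-Aggregate}. Concretely, I would start from Proposition~\ref{proposition-uai-grad} to apply the $L$-smoothness descent lemma on the update $\boldsymbol{\theta}_{n+1}^{t}=\boldsymbol{\theta}_{n}^{t}+\eta_t v_n^t$, and then average over the geometric stopping time $N_t\sim\operatorname{Geom}(B_t/(B_t+b_t))$. The memoryless property of the geometric distribution is what makes SCSG clean: $\mathbb{E}[J(\boldsymbol{\theta}_{N_t+1}^t)-J(\boldsymbol{\theta}_{N_t}^t)]=(b_t/B_t)\mathbb{E}[J(\tilde{\boldsymbol{\theta}}_t)-J(\boldsymbol{\theta}_0^t)]$, which converts the per-step descent into a per-epoch descent expressed in terms of $\mathbb{E}\|\nabla J(\boldsymbol{\theta}_{N_t}^t)\|^2$, the aggregation error $\mathbb{E}\|\mu_t-\nabla J(\boldsymbol{\theta}_0^t)\|^2$, and the inner mini-batch variance.

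Second, I would bound the variance of the semi-stochastic gradient $v_n^t$ defined in~\eqref{def-semi-stochastic-gradient}. Because the trajectories $\{\tau_{n,j}^t\}$ are sampled under $\boldsymbol{\theta}_n^t$, the importance weight $\omega(\tau|\boldsymbol{\theta}_n^t,\boldsymbol{\theta}_0^t)$ debiases $g(\tau|\boldsymbol{\theta}_0^t)$ (Lemma~\ref{lemma-unbiased-IS}), so $\mathbb{E}[v_n^t]=\nabla J(\boldsymbol{\theta}_n^t)+(\mu_t-\nabla J(\boldsymbol{\theta}_0^t))$. The variance of the bracket term can be controlled by the Lipschitzness $L_g$ of $g$ and the importance-weight moment constant $C_w$ from Lemma~\ref{lemma-uai-variance-IS}, giving a bound proportional to $(L_g+C_g^2 C_w)\|\boldsymbol{\theta}_n^t-\boldsymbol{\theta}_0^t\|^2/b_t$. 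This is the step where the choice $\Phi=L_g+C_g^2 C_w$ and $\Psi=(L\Phi)^{1/3}$ enter, and it also forces the $\eta_t\le 1/(2\Psi B_t^{2/3})$ condition in order to make the resulting quadratic in $\mathbb{E}\|\boldsymbol{\theta}_n^t-\boldsymbol{\theta}_0^t\|^2$ absorbable into the descent.

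Third, and this is the main obstacle, I would bound $\mathbb{E}\|\mu_t-\nabla J(\boldsymbol{\theta}_0^t)\|^2$ after Byzantine filtering. The analysis splits on whether the R1 filter with threshold $\mathfrak{T}_\mu=2\sigma\sqrt{V/B_t}$ succeeds. Under Assumption~\ref{assumption-bounded-variance}, Lemma~\ref{lemma-martingale} implies that with probability at least $1-\delta$ all good-agent gradients lie within $\mathfrak{T}_\mu/2$ of $\nabla J(\boldsymbol{\theta}_0^t)$, so $\mathcal{G}_t$ contains every good agent, while any surviving Byzantine lies within $3\mathfrak{T}_\mu/2$ of $\nabla J(\boldsymbol{\theta}_0^t)$; with at most $\alpha K$ Byzantines surviving out of $\ge(1-\alpha)K$ aggregated gradients, the deviation decomposes into a $\sigma^2/((1-\alpha)^2 K B_t)$ good-agent variance term and an $\alpha^2\sigma^2 V/((1-\alpha)^2 B_t)$ Byzantine term (by squaring and using $\|\sum\cdot\|^2\le|\mathcal{G}_t|\sum\|\cdot\|^2$). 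On the low-probability event $\{\text{R1 fails}\}$, R2 is invoked with threshold $2\sigma$, which by the argument sketched in Section~\ref{subsection:algorithm-description} still guarantees that all good agents are retained and any surviving Byzantine is within $3\sigma$; choosing $\delta\le 1/(5KB_t)$ ensures this event contributes only lower-order terms to the total variance. Reconciling these two cases into a single quantitative bound is the delicate bookkeeping step.

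Finally, I would substitute the aggregation bound and the $v_n^t$ variance bound into the per-epoch descent, rearrange to isolate $\mathbb{E}\|\nabla J(\boldsymbol{\theta}_0^t)\|^2$ (using $\|\nabla J(\boldsymbol{\theta}_n^t)\|^2\ge\tfrac12\|\nabla J(\boldsymbol{\theta}_0^t)\|^2-L^2\|\boldsymbol{\theta}_n^t-\boldsymbol{\theta}_0^t\|^2$ with the smoothness and the condition $B\ge 4\Phi L^{-2}$ to absorb the cross term), and telescope over $t=1,\ldots,T$. Dividing by $T$ and using that $\tilde{\boldsymbol{\theta}}_a$ is drawn uniformly from $\{\tilde{\boldsymbol{\theta}}_t\}_{t=1}^T$ converts $\frac1T\sum_t\mathbb{E}\|\nabla J(\tilde{\boldsymbol{\theta}}_{t-1})\|^2$ into $\mathbb{E}\|\nabla J(\tilde{\boldsymbol{\theta}}_a)\|^2$. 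With $B_t\equiv B$, the telescoped potential collapses to $(J(\tilde{\boldsymbol{\theta}}^*)-J(\tilde{\boldsymbol{\theta}}_0))/(TB^{1/3})$ up to the prefactor $2\Psi$, and the aggregation terms give exactly the $\sigma^2/((1-\alpha)^2 KB)$ and $\alpha^2\sigma^2 V/((1-\alpha)^2 B)$ contributions claimed.
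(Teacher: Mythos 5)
Your proposal follows essentially the same route as the paper's proof: the $L$-smoothness descent step combined with the bound $\mathbb{E}\|v_n^t\|^2 \le 2\Phi\|\boldsymbol{\theta}_n^t-\boldsymbol{\theta}_0^t\|^2 + 2\|\nabla J(\boldsymbol{\theta}_n^t)\|^2 + 2\|e_t\|^2$ for $e_t \triangleq \mu_t - \nabla J(\boldsymbol{\theta}_0^t)$, the geometric-stopping-time identity to convert per-step descent into per-epoch descent, the two-regime (R1 with probability $\ge 1-\delta$, R2 as backup) aggregation-error bound $\mathbb{E}\|e_t\|^2 \le 4\sigma^2/((1-\alpha)^2KB_t) + 48\alpha^2\sigma^2V/((1-\alpha)^2B_t)$, and telescoping over $t$. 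The only cosmetic deviation is in the final rearrangement: the paper isolates $\mathbb{E}\|\nabla J(\tilde{\boldsymbol{\theta}}_t)\|^2$ at the end-of-epoch iterate via Young's inequality on the cross term $\langle\nabla J(\tilde{\boldsymbol{\theta}}_t),\tilde{\boldsymbol{\theta}}_t-\tilde{\boldsymbol{\theta}}_{t-1}\rangle$ (consistent with the output being drawn uniformly from $\{\tilde{\boldsymbol{\theta}}_t\}_{t=1}^T$), rather than shifting to $\|\nabla J(\boldsymbol{\theta}_0^t)\|^2$ as you suggest.
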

This theorem leads to many interesting insights. When $K = 1, \alpha = 0$, Theorem \ref{main-theorem} reduces to $\mathbb{E}\|\nabla J (\tilde{\boldsymbol{\theta}}_a)\|^2 \leq 2\Psi [J(\tilde{\boldsymbol{\theta}}^*) - J(\tilde{\boldsymbol{\theta}}_0)]/TB^{1/3} + 8\sigma^2 / B$. The second term here $O(1/B)$, which also shows up in SVRPG~\citep{papini2018stochastic, xu2020improvedUAI}, results from the full gradient approximation in Equation~\eqref{def-gradient-estimate} in each round. 
In this case, our theorem implies that $\mathbb{E}\|\nabla J (\tilde{\boldsymbol{\theta}}_a)\|^2 = O(\Psi [J(\tilde{\boldsymbol{\theta}}^*) - J(\tilde{\boldsymbol{\theta}}_0)]/TB^{1/3})$ which is consistent with SCSG for $L$-smooth non-convex objective functions~\citep{lei2017scsg2}. 
Moreover, using $\mathbb{E}[Traj(\epsilon)]$ to denote the expected number of trajectories required by each agent to achieve $\mathbb{E}[\|\nabla J(\tilde{\boldsymbol{\theta}}_a)\|^2] \leq \epsilon$, Theorem~\ref{main-theorem} leads to:
\begin{corollary}[Sample complexity of FedPG-BR]
\label{main-corollary}
Under the same assumptions as Theorem~\ref{main-theorem}, let $\epsilon > 0$, we have: (i) $\mathbb{E}[Traj(\epsilon)] = {O}(\frac{1}{\epsilon^{5/3}K^{2/3}} + \frac{\alpha^{4/3}}{\epsilon^{5/3}})$; (ii) When $\alpha = 0$, we have $\mathbb{E}[Traj(\epsilon)] = {O}(\frac{1}{\epsilon^{5/3}K^{2/3}})$;
(iii) When $K\!=\!1$, we have  $\mathbb{E}[Traj(\epsilon)] = {O}(\frac{1}{\epsilon^{5/3}})$
\end{corollary}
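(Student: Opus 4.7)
The plan is to derive the sample complexity by directly inverting the convergence bound from Theorem~\ref{main-theorem}. Let $C_0 \triangleq 2\Psi[J(\tilde{\boldsymbol{\theta}}^*) - J(\tilde{\boldsymbol{\theta}}_0)]$ and absorb the remaining constants ($\sigma^2$, $V$, $(1-\alpha)^{-2}$) into the big-$O$ notation, treating $V=2\log(2K/\delta)$ as polylogarithmic. The bound from Theorem~\ref{main-theorem} then reads
\begin{align*}
\mathbb{E}[\|\nabla J(\tilde{\boldsymbol{\theta}}_a)\|^2] \;\leq\; \underbrace{\frac{C_0}{T B^{1/3}}}_{\text{(I)}} \;+\; \underbrace{O\!\left(\tfrac{1}{KB}\right)}_{\text{(II)}} \;+\; \underbrace{O\!\left(\tfrac{\alpha^2}{B}\right)}_{\text{(III)}}.
\end{align*}
To obtain an $\epsilon$-approximate solution I would force each term to be at most $\epsilon/3$, yielding the constraints (I) $TB^{1/3} = \Omega(1/\epsilon)$, (II) $B = \Omega(1/(\epsilon K))$, and (III) $B = \Omega(\alpha^2/\epsilon)$.

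Next I would pick $B$ as small as possible subject to (II) and (III), i.e.\ $B = \Theta(\max\{1/(\epsilon K),\, \alpha^2/\epsilon\}) = \Theta\!\left(1/(\epsilon K) + \alpha^2/\epsilon\right)$; this automatically satisfies the hypothesis $B \geq 4\Phi L^{-2}$ of Theorem~\ref{main-theorem} once $\epsilon$ is sufficiently small. Then I would set $T = \Theta(1/(\epsilon B^{1/3}))$ to meet (I). Since each agent draws $B_t = B$ trajectories per round and there are $T$ rounds, the per-agent trajectory count is
\begin{align*}
\mathbb{E}[\text{Traj}(\epsilon)] \;=\; T \cdot B \;=\; \Theta\!\left(\frac{B^{2/3}}{\epsilon}\right) \;=\; O\!\left(\frac{1}{\epsilon}\left(\frac{1}{\epsilon K} + \frac{\alpha^2}{\epsilon}\right)^{2/3}\right).
\end{align*}
Using $(x+y)^{2/3} \leq x^{2/3} + y^{2/3}$ for $x,y \geq 0$ (sub-additivity since $2/3 < 1$) splits this cleanly into
\begin{align*}
\mathbb{E}[\text{Traj}(\epsilon)] \;=\; O\!\left(\frac{1}{\epsilon^{5/3} K^{2/3}} + \frac{\alpha^{4/3}}{\epsilon^{5/3}}\right),
\end{align*}
establishing (i). Parts (ii) and (iii) then follow by specialization: plugging $\alpha=0$ kills the second summand, and plugging $K=1$ collapses both summands (since $\alpha < 0.5$ implies $\alpha^{4/3} = O(1)$) to the single-agent rate $O(1/\epsilon^{5/3})$, matching SVRPG~\citep{xu2020improvedUAI}.

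The derivation is essentially algebraic once Theorem~\ref{main-theorem} is in hand, so the main subtlety (not a real obstacle) is simply to recognize that the optimal $B$ must dominate the larger of the two statistical-error requirements (II) and (III), producing the sum rather than a product in the final bound, and to verify that the polylogarithmic dependence through $V = 2\log(2K/\delta)$ is absorbed by taking $\delta$ at the upper end of the range admissible in Theorem~\ref{main-theorem} (e.g.\ $\delta = \Theta(1/(KB))$), which keeps $V = O(\log(KB))$ and therefore does not affect the stated rate up to logarithmic factors.
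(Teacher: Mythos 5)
Your proposal is correct and follows essentially the same route as the paper: both invert the three-term bound of Theorem~\ref{main-theorem} by choosing $B$ large enough to control the two statistical error terms and $T = \Theta(1/(\epsilon B^{1/3}))$ for the optimization term, then compute $T\cdot B$. Your explicit use of $B=\Theta(1/(\epsilon K)+\alpha^2/\epsilon)$ together with the subadditivity $(x+y)^{2/3}\le x^{2/3}+y^{2/3}$ is just a slightly cleaner rendering of the paper's $TB_K + TB_\alpha$ bookkeeping, so no substantive difference.
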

\begin{table}[h]
    \centering
    \caption{Sample complexities of relevant works to achieve $\mathbb{E}\|\nabla J(\boldsymbol{\theta})\|^2 \leq\epsilon$.}\label{tab:table-bound}
    \begin{tabular}{cll}
        \toprule
        \multicolumn{1}{c} { \textbf{SETTINGS} } & \textbf{METHODS} & \textbf{COMPLEXITY}\\
        \midrule
        \multirow{5}{*}{$K=1$} & REINFORCE \citep{williams1992REINFORCE} & $O(1 / \epsilon^{2})$ \\
                             & GPOMDP \citep{baxter2001GPOMDP} & $O(1 / \epsilon^{2})$ \\
                             & SVRPG \citep{papini2018stochastic} & $O(1 / \epsilon^{2})$ \\
                             & SVRPG \citep{xu2020improvedUAI} & $O(1 / \epsilon^{5 / 3})$ \\
                             & \textbf{FedPG-BR} & $O(1 / \epsilon^{5 / 3})$\\
        \midrule
        $K>1, \alpha=0$      & \textbf{FedPG-BR} & $O(\frac{1}{\epsilon^{5/3}K^{2/3}})$\\
        $K>1, \alpha>0$      & \textbf{FedPG-BR} & $O(\frac{1}{\epsilon^{5/3}K^{2/3}} + \frac{\alpha^{4/3}}{\epsilon^{5/3}})$ \\ %
        \bottomrule
    \end{tabular} 
\end{table}
%

We present a straightforward comparison of the sample complexity of related works in Table~\ref{tab:table-bound}. 
Both REINFORCE and GPOMDP have a sample complexity of $O(1/\epsilon^2)$ since they use stochastic gradient-based optimization.
\citet{xu2020improvedUAI} has made a refined analysis of SVRPG to improve its sample complexity from $O(1/\epsilon^{2})$~\citep{papini2018stochastic} to $O(1/\epsilon^{5/3})$.
Corollary \ref{main-corollary} \textit{(iii)} reveals that
the sample complexity of FedPG-BR in the single-agent setup agrees with that of SVRPG derived by~\citet{xu2020improvedUAI}.

When $K > 1$, $\alpha = 0$, Corollary \ref{main-corollary} \textit{(ii)}
implies that the total number of trajectories required by each agent is upper-bounded by $O(1/(\epsilon^{5/3}K^{2/3}))$. This result gives us the theoretical grounds to encourage more agents to participate in the federation, since the number of trajectories each agent needs to sample decays at a rate of $O(1/K^{2/3})$. 
This guaranteed improvement in sample efficiency is highly desirable in practical systems with a large number of agents.
%


%

Next, for a more realistic system where an $\alpha$-fraction $(\alpha > 0)$ of the agents are Byzantine agents, 
Corollary~\ref{main-corollary} \textit{(i)} assures us that the total number of trajectories required by each agent will be increased by only an additive term of $O(\alpha ^ {4/3}/\epsilon^{5/3})$. 
This term is unavoidable due to the presence of Byzantine agents in FRL systems. 
However, the bound implies that the impact of Byzantine agents on the overall convergence is limited, which aligns with the discussions on our Byzantine filtering strategy (Section~\ref{subsection:algorithm-description}),
and will be empirically verified in our experiments. 
Moreover, the impact of Byzantine agents on the convergence vanishes when $\alpha \rightarrow 0$. 
That is, when the system is ideal ($\alpha=0$), our Byzantine filtering step induces no effect on the convergence.

\section{Experiments}\label{section:experiments}
We evaluate the empirical performances of FedPG-BR with and without Byzantine agents on different RL benchmarks, including 
CartPole balancing~\citep{barto1983CartPole}, LunarLander,
and the 3D continuous locomotion control task of Half-Cheetah \citep{duan2016TasksOnMujoco}. 
In all experiments, we measure the performance online such that in each iteration, we evaluate the current policy of the server by using it to 
independently interact with the test MDP for 10 trajectories
and reporting the mean returns.
Each experiment is independently repeated 10 times with different random seeds and policy initializations, both of which are shared among all algorithms for fair comparisons. 
The results are averaged over the 10 independent runs with $90\%$ bootstrap confidence intervals. 
Due to space constraints, some experimental details are deferred to Appendix \ref{EXP-Settings-appendix}.
%
%

\textbf{Performances in ideal systems with $\alpha=0$. }
We firstly evaluate the performances of FedPG-BR in ideal systems with $\alpha = 0$, i.e., no Byzantine agents.
We compare FedPG-BR ($K\!=\!1,3,10$) with vanilla policy gradient using GPOMDP\footnote{Since GPOMDP has been repeatedly found to be comparable to or better than REINFORCE~\citep{papini2018stochastic,xu2020improvedUAI}.
} and SVRPG.
The results in all three tasks are plotted in Fig.~\ref{fig:exp1}. The figures show that FedPG-BR ($K\!=\!1$) and SVRPG perform comparably, both outperforming GPOMDP.
This aligns with the results in Table~\ref{tab:table-bound} showing that FedPG-BR ($K\!=\!1$) and SVRPG share the same sample complexity, and both are provably more sample-efficient than GPOMDP.
Moreover, the performance of FedPG-BR is improved significantly with the federated of only $K=3$ agents, and improved even further when $K=10$.
This corroborates our theoretical insights implying that the federation of more agents (i.e., larger $K$) improves the sample efficiency of FedPG-BR (Section~\ref{section:theoretical-results}), 
and verifies the practical performance benefit offered by the participation of more agents.
\begin{figure}[h]
    \centering
    \includegraphics[height=1.2in] {./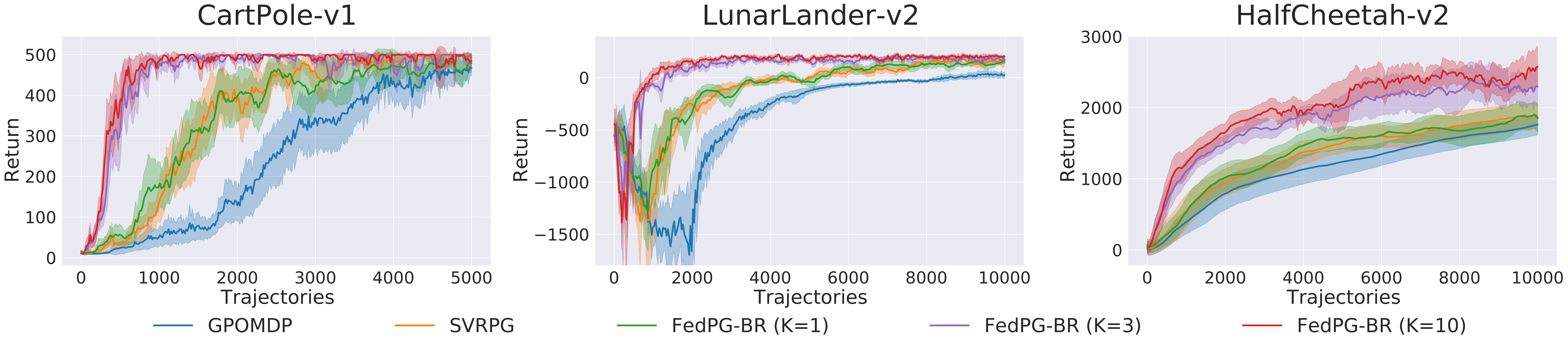}
    \caption{Performance of FedPG-BR in ideal systems with $\alpha=0$ for the three tasks.}
    \label{fig:exp1}
\end{figure}
\begin{figure}[b]
    \centering
    \includegraphics[height=1.3in] {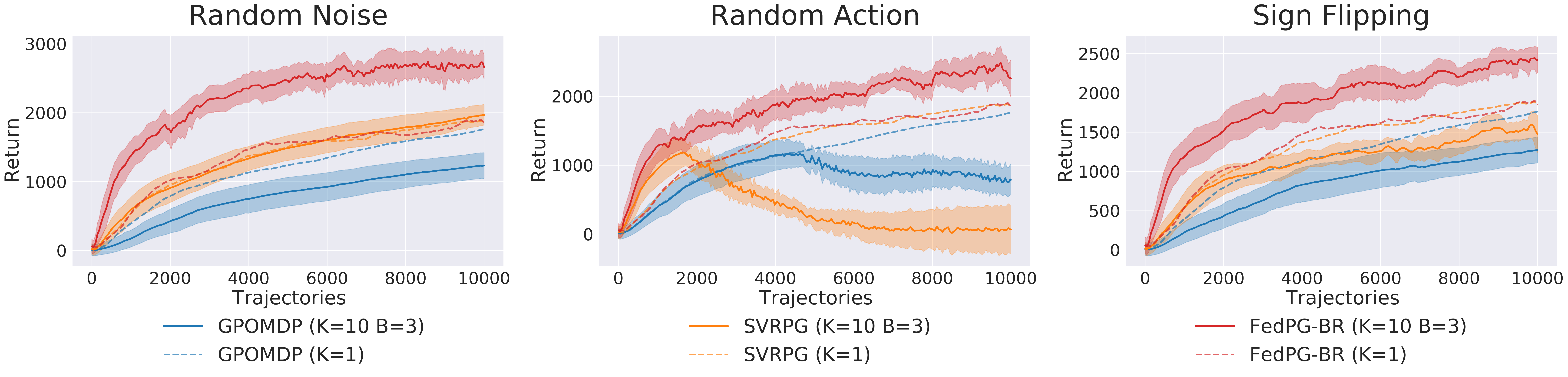}
    \caption{Performance of FedPG-BR in practical systems with $\alpha>0$ for HalfCheetah. 
    Each subplot corresponds to a different type of Byzantine failure exercised by the 3 Byzantine agents.}
    \label{fig:exp2}
\end{figure}
\begin{figure}[!b]
    \centering
    \includegraphics[height=1.2in] {./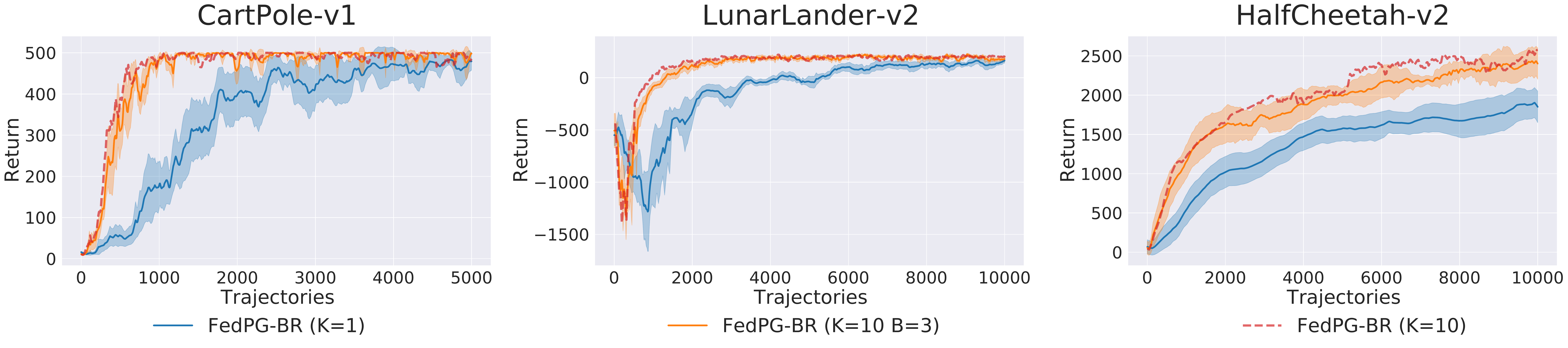}
    \caption{Performance of FedPG-BR in practical systems 
    against FedPG attack.
    }
    \label{fig:exp3}
\end{figure}

\textbf{Performances in practical systems with $\alpha>0$. }
Next, we investigate the impact of Byzantine agents (i.e., random failures or adversarial attacks) on the sample efficiency, which is critical for the practical deployment of FRL algorithms.
In this experiment, we use $K=10$ agents among which $3$ are Byzantine agents, and we simulate different types of Byzantine failures:
(a) \textit{Random Noise (RN)}: each Byzantine agent sends 
a random vector to the server;
(b) \textit{Random Action (RA)}: every Byzantine agent ignores the policy from the server and takes actions randomly, which is used to simulate random system failures (e.g., hardware failures) and results in false gradient computations since the trajectories are no longer sampled according to the policy;
(c) \textit{Sign Filliping (SF)}: each Byzantine agent computes the correct gradient but sends the scaled negative gradient (multiplied by $-2.5$),
which is used to simulate adversarial attacks aiming to manipulate the direction of policy update at the server.

For comparison, we have adapted both GPOMDP and SVRPG to the FRL setting (pseudocode is provided in Appendix \ref{algo-fedVPG-fedSVRPG}). 
Fig.~\ref{fig:exp2} shows the results using the HalfCheetah task.
We have also included the performances of GPOMDP, SVRPG and FedPG-BR in the single-agent settting ($K\!=\!1$) as dotted-line (mean value of 10 independent runs) for reference.
The figures show that for both GPOMDP and SVRPG, the 3 Byzantine agents cause the performance of their federated versions to be worse than that in the single-agent setting.
Particularly, RA agents (middle figure) render GPOMDP and SVRPG unlearnable, i.e., unable to converge at all.
In contrast, our FedPG-BR is robust against all three types of Byzantine failures. That is, FedPG-BR ($K\!=\!10\ B\!=\!3$) with 3 Byzantine agents still significantly outperforms the single-agent setting, and more importantly, \emph{performs comparably to FedPG-BR ($K\!=\!10$) with 10 good agents}.
This is because our Byzantine filtering strategy can effectively filter out those Byzantine agents.
These results demonstrate that even in practical systems which are subject to random failures or adversarial attacks, FedPG-BR is still able to deliver superior performances.
This provides an assurance on the reliability of our FedPG-BR algorithm to promote its practical deployment, and significantly improves the practicality of FRL.
The results for the CartPole and LunarLander tasks, which yield the same insights as discussed here, can be found in Appendix~\ref{additional-experiments}.
%

\textbf{Performance of FedPG-BR against FedPG attack. }
We have discussed (Section~\ref{subsection:algorithm-description}) and shown through theoretical analysis (Section~\ref{section:theoretical-results}) that even when our Byzantine filtering strategy fails, the impact of the Byzantine agents on the performance of our algorithm is still limited. Here we verify this empirically.
To this end, we design a new type of Byzantine agents who have perfect knowledge about our Byzantine filtering strategy, and call it \textit{FedPG attacker}. The goal of FedPG attackers are to collude with each other to attack our algorithm without being filtered out.
To achieve this, FedPG attackers
firstly estimate $\nabla J(\boldsymbol{\theta}_0^t)$ using the mean of their gradients $\bar{\mu}_t$, and estimate $\sigma$ by calculating the maximum Euclidean distance between the gradients of any two FedPG attackers as $2\bar{\sigma}$.
Next, all FedPG attackers send the vector $\bar{\mu}_t + 3\bar{\sigma}$ to the server.
Recall we have discussed in Section~\ref{subsection:algorithm-description} that if a Byzantine agent is not filtered out, its distance to the true gradient $\nabla J(\boldsymbol{\theta}^t_0)$ is at most $3\sigma$.
Therefore, if $\bar{\mu}_t$ and $\bar{\sigma}$ are estimated accurately, the vectors from the FedPG attackers can exert negative impact on the convergence while still evading our Byzantine filtering.

We again use $K=10$ agents, among which 3 are FedPG attackers.
The results (Fig.~\ref{fig:exp3}) show that in all three tasks, even against such stong attackers with perfect knowledge of our Byzantine filtering strategy, FedPG-BR ($K\!=\!10\ B\!=\!3$) still manages to significantly outperform FedPG-BR ($K\!=\!1$) in the single-agent setting.
Moreover, the performance of FedPG-BR ($K\!=\!10\ B\!=\!3$) is only marginally worsened compared with FedPG-BR ($K\!=\!10$) with 10 good agents.
This corroborates our theoretical analysis showing that
although we place no assumption on the gradients sent by the Byzantine agents,
they only contribute an additive term of $O(\alpha^{4/3}/\epsilon^{5/3})$ to the sample complexity (Section~\ref{section:theoretical-results}).
These results demonstrate the empirical robustness of FedPG-BR even against strong attackers, hence further highlighting its practical reliability.

\section{Conclusion and future work}\label{section:conclusion}
Federation is promising in boosting the sample efficiency of RL agents, without sharing their trajectories. Due to the high sampling cost of RL, the design of FRL systems appeals for theoretical guarantee on its convergence which is, however, vulnerable to failures and attacks in practical setup, as demonstrated. This paper provides the theoretical ground to study the sample efficiency of FRL with respect to the number of participating agents, while accounting for faulty agents. We verify the empirical efficacy of the proposed FRL framework in systems with and without different types of faulty agents on various RL benchmarks.  

Variance control is the key to exploiting Assumption~\ref{assumption-bounded-variance} on the bounded variance of PG estimators in our filter design.
As a result, our framework is restricted to the variance-reduced policy gradient methods. Intuitively, it is worth studying the fault-tolerant federation of other policy optimization methods. Another limitation of this work is that agents are assumed to be homogeneous, while in many real-world scenarios, RL agents are heterogeneous. Therefore, it would be interesting to explore the possibility of heterogeneity of agents in fault-tolerant FRL in future works.
Moreover, another interesting future work is to apply our Byzantine filtering strategy to other federated sequential decision-making problems such as federated bandit~\cite{dubey2020differentially,shi2021federated,zhu2021federated} and federated/collaborative Bayesian optimization~\cite{dai2020federated,dai2021differentially,sim2021collaborative}, as well as other settings of collaborative multi-party ML~\cite{xu2021gradient,xu2021validation,hoang2021aid,lam2021model,sim2020collaborative,hoang2020learning,ouyang2020gaussian,hoang2019collective,hoang2019collective2,ouyang2018gaussian,chen2015gaussian,chen2013gaussian,chen2012decentralized}, to equip them with theoretically guaranteed fault tolerance.

\begin{ack}
This research/project is supported by A*STAR under its RIE$2020$ Advanced Manufacturing and Engineering (AME) Industry Alignment Fund – Pre Positioning (IAF-PP) (Award A$19$E$4$a$0101$) and
its A*STAR Computing and Information Science Scholarship (ACIS) awarded to Flint Xiaofeng Fan. Wei Jing is supported by Alibaba Innovative Research (AIR) Program.
\end{ack}

\medskip

\bibliography{references}
\bibliographystyle{unsrtnat}

\newpage
\appendix


\section{More on the background}\label{appendix:more-background}
\subsection{SVRG and SCSG}\label{Appendix-SVRG-SCSG}
Here we provide the pseudocode for SVRG (Algorithm \ref{alg:svrg}) and SCSG (Algorithm \ref{alg:scsg}) seen in \citet{lei2017scsg2}. 
The idea of SVRG (Algorithm~\ref{alg:svrg}) is to reuses past \textit{full} gradient computations (line 3) to reduce the variance of the current \textit{stochastic} gradient estimate (line 7) before the parameter update (line 8). Note that $N=1$ corresponds to a GD step (i.e., $v_{k-1}^{(j)} \leftarrow g_j$ in line 7). For $N>1$, $v_{k-1}^{(j)}$ is the corrected gradient in SVRG and is an unbiased estimate of the true gradient $\nabla J(\boldsymbol{\theta})$. SVRG achieves linear convergence $O(1/T)$ using the semi-stochastic gradient.
\begin{algorithm}[H]
\renewcommand\thealgorithm{2}
   \caption{SVRG}
   \label{alg:svrg}
   \begin{algorithmic}[1]
   \STATE {\bfseries Input:} Number of stages $T$, initial iteratre $\tilde{\boldsymbol{\theta}}_0$, number of gradient steps $N$, step size $\eta$
   \FOR{$t=1$ {\bfseries to} $T$}
       \STATE $g_t \leftarrow \nabla J(\tilde{\boldsymbol{\theta}}_{t-1}) = \frac{1}{n}\sum_{i=1}^n \nabla J_i (\tilde{\boldsymbol{\theta}}_{t-1})$
       \STATE $\boldsymbol{\theta}_0^{(t)} \leftarrow \tilde{\boldsymbol{\theta}}_{t-1}$
       \FOR{$k=1$ {\bfseries to} $N$}
           \STATE Randomly pick $i_k \in [n]$
           \STATE {$v_{k-1}^{(t)} \leftarrow \nabla J_{i_{k}}(\boldsymbol{\theta}_{k-1}^{(t)})-\nabla J_{i_{k}}(\boldsymbol{\theta}_{0}^{(t)})+g_{t}$}
           \STATE ${\boldsymbol{\theta}}_{k}^{(t)} \leftarrow {\boldsymbol{\theta}}_{k-1}^{(t)} - \eta_t v_{k-1}^{(t)}$
       \ENDFOR
       \STATE $\tilde{{\boldsymbol{\theta}}}_{t} \leftarrow {\boldsymbol{\theta}}_{N_t}^{(t)}$    
   \ENDFOR
   \STATE {\bfseries Output:} $\tilde{\boldsymbol{\theta}}_T$ (Convex case) or $\tilde{\boldsymbol{\theta}}_t$ uniformly picked from $\{\tilde{\boldsymbol{\theta}}_t\}_{t=1}^T$ (Non-Convex case)
\end{algorithmic}
\end{algorithm}
More recently, \textit{Stochastically Controlled Stochastic Gradient} (SCSG) has been proposed~\citep{lei2016scsg1}, to further reduce the computational cost of SVRG. The key difference is that SCSG (Algorithm \ref{alg:scsg}) considers a sequence of time-varying batch sizes ($B_t$ and $b_t$) and employs geometric sampling to generate the number of parameter update steps $N_t$ in each iteration (line 6), instead of fixing the batch sizes and the number of updates as done in SVRG.
Particularly when finding an $\epsilon$-approximate solution (Definition \ref{definition:epsilon-approximate-solution}) for optimizing smooth non-convex objectives, \citet{lei2017scsg2} proves that SCSG is never worse than SVRG in convergence rate and significantly outperforms SVRG when the required $\epsilon$ is small. 
%
%
\begin{algorithm}[H]
\renewcommand\thealgorithm{3}
  \caption{SCSG for smooth non-convex objectives}
  \label{alg:scsg}
  \begin{algorithmic}[1]
  \STATE {\bfseries Input:} Number of stages $T$, initial iteratre $\tilde{\boldsymbol{\theta}}_0$, batch size $B_t$, mini-batch size $b_t$, step size $\eta_t$
  \FOR{$t=1$ {\bfseries to} $T$}
      \STATE Uniformly sample a batch $\mathcal{I}_t \subset\{1, \cdots, n\}$ with $|\mathcal{I}_{t}|=B_{t}$
      \STATE $g_t \leftarrow \nabla J_{\mathcal{I}_t}(\tilde{\boldsymbol{\theta}}_{t-1})$
      \STATE $\boldsymbol{\theta}_0^{(t)} \leftarrow \tilde{\boldsymbol{\theta}}_{t-1}$
      \STATE Generate $N_t \sim \operatorname{Geom}(B_t / (B_t + b_t))$
      \FOR{$k=1$ {\bfseries to} $N_t$}
          \STATE Randomly pick $\tilde{\mathcal{I}}_{k-1} \subset [n]$ with $|\tilde{\mathcal{I}}_{k-1}|=b_t$
          \STATE {$v_{k-1}^{(t)} \leftarrow \nabla J_{\tilde{\mathcal{I}}_{k-1}}(\boldsymbol{\theta}_{k-1}^{(t)})-\nabla J_{\tilde{\mathcal{I}}_{k-1}}(\boldsymbol{\theta}_{0}^{(t)})+g_{t}$}
          \STATE ${\boldsymbol{\theta}}_{k}^{(t)} \leftarrow {\boldsymbol{\theta}}_{k-1}^{(t)} + \eta_t v_{k-1}^{(t)}$
      \ENDFOR
      \STATE $\tilde{{\boldsymbol{\theta}}}_{t} \leftarrow {\boldsymbol{\theta}}_{N_t}^{(t)}$    
  \ENDFOR
  \STATE {\bfseries Output:} $\tilde{\boldsymbol{\theta}}_T$ (P-L case) or sample $\tilde{\boldsymbol{\theta}}^*_T$ from $\{\tilde{\boldsymbol{\theta}}_t\}_{t=1}^T$ with $P(\tilde{\boldsymbol{\theta}}_T^* = \tilde{\boldsymbol{\theta}}_t) \propto \eta_t B_t/b_t$ (Smooth case)
\end{algorithmic}
\end{algorithm}
As a member of the SVRG-like algorithms, SCSG enjoys the same convergence rate of SVRG while being computationally cheaper than SVRG for tasks with small $\epsilon$ requirements \citep{lei2016scsg1}, which is highly desired in RL, hence the motivation of FedPG-BR to adapt SCSG. 
\subsection{Gradient estimator}\label{appendix-gradient-estimator} 
Use $g(\tau|\boldsymbol{\theta})$ to denote the \textit{unbiased} estimator to the true gradient $J(\boldsymbol{\theta})$. The common gradient estimators are the REINFORCE and the GPOMDP estimators, which are considered as baseline estimators in \citep{papini2018stochastic} and \citep{xu2020improvedUAI}.
The REINFORCE \citep{williams1992REINFORCE}:
\begin{align*}
    g(\tau|\boldsymbol{\theta})=\left[\sum_{h=0}^{H-1} \nabla_{\boldsymbol{\theta}} \log \pi_{\boldsymbol{\theta}}(a_{h} \mid s_{h})\right]\left[\sum_{h=0}^{H-1} \gamma^{h} \mathcal{R}(s_{h}, a_{h})-C_b\right]
\end{align*}
And the GPOMDP \citep{baxter2001GPOMDP}
\begin{align*}
    g(\tau|\boldsymbol{\theta})=\sum_{h=0}^{H-1}\left[\sum_{t=0}^{h}\nabla_{\boldsymbol{\theta}} \log \pi_{\boldsymbol{\theta}}(a_{t} \mid s_{t})\right](\gamma^{h} r(s_{h}, a_{h})-C_{b_{h}})
\end{align*}
 where $C_b$ and $C_{b_h}$ are the corresponding baselines. Under Assumption \ref{assumption-policy-derivatives}, whether we use the REINFORCE or the GPOMDP estimator, Proposition \ref{proposition-uai-grad} holds \citep{papini2018stochastic, xu2020improvedUAI}.
\begin{algorithm}[H]
  \renewcommand\thealgorithm{4}
   \caption{GPOMDP (for federation of K agents)}
   \label{alg:Fed-GPOMDP}
\begin{algorithmic}
   \STATE {\bfseries Input:} number of iterations $T$, batch size $B$, step size $\eta$, initial parameter $\tilde{\boldsymbol{\theta}}_{0} \in \mathbb{R}^{d}$
   \FOR{$t=1$ {\bfseries to} $T$}
       \STATE $\boldsymbol{\theta}^{t} \leftarrow \tilde{\boldsymbol{\theta}}_{t-1}$ \qquad\qquad\qquad \qquad \qquad \qquad; broadcast to agents
       \FOR{$k=1$ {\bfseries to} $K$}
           \STATE Sample $B$ trajectories $\{\tau_{t,i}^{(k)}\}$ from $p(\cdot | \boldsymbol{\theta}^t)$
           \STATE $\mu_{t}^{(k)} = \frac{1}{B} \sum_{i=1}^B g(\tau_{t,i}^{(k)} | \boldsymbol{\theta}^{t})$ \qquad \qquad ; push $\mu_t^{(k)}$ to server
       \ENDFOR
       \STATE $\mu_t = \frac{1}{K}\sum_{k=1}^K\mu_t^{(k)}$ 
       \STATE $\tilde{\boldsymbol{\theta}}_{t} \leftarrow  \boldsymbol{\theta}^{t} + \eta \mu_{t}$
   \ENDFOR
   \STATE {\bfseries Output $\boldsymbol{\theta}_{out}$:} uniformly randomly picked from $\{\tilde{\boldsymbol{\theta}}_t\}_{t=1}^T$ 
\end{algorithmic}
\end{algorithm}
\subsection{Federated GPOMDP and SVRPG} \label{algo-fedVPG-fedSVRPG}
Closely following the problem setting of FedPG-BR, we adapt both GPOMDP and SVRPG to the FRL setting. The pseudocode is shown in Algorithm \ref{alg:Fed-GPOMDP} and Algorithm \ref{alg:fed-svrg}.

\begin{algorithm}[H]
\renewcommand\thealgorithm{5}
   \caption{SVRPG (for federation of K agents)}
   \label{alg:fed-svrg}
   \begin{algorithmic}
   \STATE {\bfseries Input:} number of epochs $T$, epoch size $N$, batch size $B$, mini-batch size $b$, step size $\eta$, initial parameter $\tilde{\boldsymbol{\theta}}_{0} \in \mathbb{R}^{d}$
   \FOR{$t=1$ {\bfseries to} $T$}
       \STATE $\boldsymbol{\theta}_{0}^{t} \leftarrow \tilde{\boldsymbol{\theta}}_{t-1}$ \qquad \qquad \qquad \qquad \qquad \qquad ; broadcast to agents
       \FOR{$k=1$ {\bfseries to} $K$}
           \STATE Sample $B$ trajectories $\{\tau_{t,i}^{(k)}\}$ from $p(\cdot | \boldsymbol{\theta}^t_0)$
           \STATE $\mu_{t}^{(k)} = \frac{1}{B} \sum_{i=1}^B g(\tau_{t,i}^{(k)} | \boldsymbol{\theta}^{t}_0)$ \qquad \qquad ; push $\mu_t^{(k)}$ to server
       \ENDFOR
       \STATE $\mu_t = \frac{1}{K}\sum_{k=1}^K\mu_t^{(k)}$
       \FOR{$n=0$ {\bfseries to} $N-1$}
           \STATE Sample $b$ trajectories $\{\tau_{n, j}^t\}$ from $p(\cdot|\boldsymbol{\theta}_n^t)$
           \STATE {$v_{n}^{t} = \frac{1}{b} \sum_{j=1}^{b} [g(\tau_{n,j}^{t}|\boldsymbol{\theta}_{n}^{t}) - \omega(\tau_{n,j}^t|\boldsymbol{\theta}_{n}^{t}, \boldsymbol{\theta}_{0}^{t}) g(\tau_{n,j}^{t}|\boldsymbol{\theta}_{0}^{t})] + \mu_t$}
           \STATE $\boldsymbol{\theta}_{n+1}^t = \boldsymbol{\theta}_{n}^{t} + \eta v_{n}^{t}$
       \ENDFOR
       \STATE $\tilde{\boldsymbol{\theta}}_{t} \leftarrow \boldsymbol{\theta}_{N}^{t}$    
   \ENDFOR
   \STATE {\bfseries Output $\boldsymbol{\theta}_{out}$:} uniformly randomly picked from $\{\tilde{\boldsymbol{\theta}}_t\}_{t=1}^T$ 
\end{algorithmic}
\end{algorithm}
\section{Proof of Theorem \ref{main-theorem}}\label{appendix:proof-of-theorem}
In our proof, we follow the suggestion from \citet{lei2017scsg2} to set $b_t = 1$ to derive better theoretical results. Refer to Section \ref{EXP-Settings-appendix} in this appendix for the value of $b_t$ used in our experiments.
\begin{proof}
From the L-smoothness of the objective function $J(\boldsymbol{\theta})$, we have
\begin{align*}
    \mathbb{E}_{\tau_n^t}[J(\boldsymbol{\theta}_{n+1}^t)] &\geq \mathbb{E}_{\tau_n^t}\left[J(\boldsymbol{\theta}_n^t) + \langle\nabla J(\boldsymbol{\theta}_n^t), \boldsymbol{\theta}_{n+1}^t - \boldsymbol{\theta}_n^t \rangle - \frac{L}{2}\|\boldsymbol{\theta}_{n+1}^t - \boldsymbol{\theta}_n^t\|^2\right] \\
    &= J(\boldsymbol{\theta}_n^t) + \eta_t \langle \mathbb{E}_{\tau_n^t}[v_n^t], \nabla J(\boldsymbol{\theta}_n^t)\rangle - \frac{L\eta_t^2}{2} \mathbb{E}_{\tau_n^t}[\|v_n^t\|^2] \\
                                           &\geq J(\boldsymbol{\theta}_n^t) + \eta_t \langle \nabla J(\boldsymbol{\theta}_n^t)+e_t, \nabla J(\boldsymbol{\theta}_n^t)\rangle \\
                                           & \qquad \qquad - \frac{L\eta_t^2}{2}[(2L_g + 2 C_g^2 C_w)\|\boldsymbol{\theta}_n^t - \boldsymbol{\theta}_0^t\|^2 + 2\|\nabla J(\boldsymbol{\theta}_n^t)\|^2 + 2\|e_t\|^2] \stepcounter{equation}\tag{\theequation}\label{main-smooth-1} \\
                                           &=J(\boldsymbol{\theta}_n^t) + \eta_t(1-L\eta_t)\|\nabla J(\boldsymbol{\theta}_n^t)\|^2 + \eta_t \langle e_t, \nabla J(\boldsymbol{\theta}_n^t) \rangle \\
                                           & \qquad \qquad - L\eta_t^2(L_g + C_g^2C_w)\|\boldsymbol{\theta}_n^t - \boldsymbol{\theta}_0^t\|^2 - L\eta_t^2\|e_t\|^2 
\end{align*}
where \eqref{main-smooth-1} follows from Lemma \ref{lemma-E-v}. Use $\mathbb{E}_t$ to denote the expectation with respect to all trajectories $\{\tau^t_1, \tau^t_2, ...\}$, given $N_t$. Since $\{\tau^t_1, \tau^t_2, ...\}$ are independent of $N_t$, $\mathbb{E}_t$ is equivalently the expectation with respect to $\{\tau^t_1, \tau^t_2, ...\}$. The above inequality gives
\begin{align*}
    \mathbb{E}_{t}[J(\boldsymbol{\theta}_{n+1}^t)] \geq \mathbb{E}_{t}[J(\boldsymbol{\theta}_n^t)] & + \eta_t(1-L\eta_t)\mathbb{E}_{t}\|\nabla J(\boldsymbol{\theta}_n^t)\|^2 + \eta_t \mathbb{E}_{t}\langle e_t, \nabla J(\boldsymbol{\theta}_n^t) \rangle \\ &- L\eta_t^2(L_g + C_g^2C_w)\mathbb{E}_{t}\|\boldsymbol{\theta}_n^t - \boldsymbol{\theta}_0^t\|^2 - L\eta_t^2\|e_t\|^2
\end{align*}
Taking $n=N_t$ and using $\mathbb{E}_{N_t}$ to denote the expectation w.r.t. $N_t$, we have from the above
\begin{align*}
    \mathbb{E}_{N_t}\mathbb{E}_{t}[J(\boldsymbol{\theta}_{N_t+1}^t)] \geq \mathbb{E}_{N_t}\mathbb{E}_{t}[J(\boldsymbol{\theta}_{N_t}^t)] &+ \eta_t(1-L\eta_t)\mathbb{E}_{N_t}\mathbb{E}_{t}\|\nabla J(\boldsymbol{\theta}_{N_t}^t)\|^2 + \eta_t \mathbb{E}_{N_t}\mathbb{E}_{t}\langle e_t, \nabla J(\boldsymbol{\theta}_{N_t}^t) \rangle \\ &- L\eta_t^2(L_g + C_g^2C_w)\mathbb{E}_{N_t}\mathbb{E}_{t}\|\boldsymbol{\theta}_{N_t}^t - \boldsymbol{\theta}_0^t\|^2 - L\eta_t^2\|e_t\|^2
\end{align*}
Rearrange,
\begin{align*}
    \eta_t(1-L\eta_t)\mathbb{E}_{N_t}\mathbb{E}_{t}\|\nabla J(\boldsymbol{\theta}_{N_t}^t)\|^2 &\leq \mathbb{E}_{N_t}\mathbb{E}_{t}[J(\boldsymbol{\theta}_{N_t+1}^t)] + L\eta_t^2(L_g + C_g^2C_w)\mathbb{E}_{N_t}\mathbb{E}_{t}\|\boldsymbol{\theta}_{N_t}^t - \boldsymbol{\theta}_0^t\|^2 \\
            & \quad -\eta_t \mathbb{E}_{N_t}\mathbb{E}_{t}\langle e_t, \nabla J(\boldsymbol{\theta}_{N_t}^t) \rangle + L\eta_t^2\|e_t\|^2 - \mathbb{E}_{N_t}\mathbb{E}_{t}[J(\boldsymbol{\theta}_{N_t}^t)] \\
            &= \frac{1}{B_{t}}(\mathbb{E}_{t} \mathbb{E}_{N_{t}}[ J(\boldsymbol{\theta}_{N_{t}}^{t})]-J(\boldsymbol{\theta}_{0}^{t}))-\eta_{t} \mathbb{E}_{N_{t}} \mathbb{E}_{t}\langle e_{t}, \nabla J(\boldsymbol{\theta}_{N_{t}}^{t})\rangle \\
            & \quad     + L\eta_{t}^{2}(L g+C_{g}^{2} C_{w}) \mathbb{E}_{N_{t}} \mathbb{E}_{t}\left\|\boldsymbol{\theta}_{N_{t}}^{t}-\boldsymbol{\theta}_{0}^{t}\right\|^{2}+L\eta_{t}^{2}\left\|e_{t}\right\|^{2} \stepcounter{equation}\tag{\theequation}\label{main-smooth-2} 
\end{align*}
where \eqref{main-smooth-2} follows from Lemma \ref{lemma-GeoSampling} with Fubini's theorem. Note that $\tilde{\boldsymbol{\theta}}_t = \boldsymbol{\theta}_{N_t}^t$ and $\tilde{\boldsymbol{\theta}}_{t-1} = \boldsymbol{\theta}_0^t$. If we take expectation over all the randomness and denote it by $\mathbb{E}$, we get
\begin{align*}
    \eta_{t}(1-L \eta_{t}) \mathbb{E}\|\nabla J(\tilde{\boldsymbol{\theta}}_{t})\|^{2}      &= \frac{1}{B_{t}} \mathbb{E}\left[J(\tilde{\boldsymbol{\theta}}_{t})-J(\tilde{\boldsymbol{\theta}}_{t-1})\right]-\eta_{t} \mathbb{E}\left\langle e_{t}, \nabla J(\tilde{\boldsymbol{\theta}}_{t})\right\rangle \\                                              & \quad \quad  +L\eta_t^2(L_g+C_g^2C_w) \mathbb{E}\|\tilde{\boldsymbol{\theta}}_{t}-\tilde{\boldsymbol{\theta}}_{t-1}\|^{2}+L \eta_{t}^{2} \mathbb{E}\|e_{t}\|^{2}  \\
                                                 &= \frac{1}{B_{t}} \mathbb{E}\left[J(\tilde{\boldsymbol{\theta}}_{t})-J(\tilde{\boldsymbol{\theta}}_{t-1})\right]-\frac{1}{B_{t}} \mathbb{E}\left\langle e_{t}, \tilde{\boldsymbol{\theta}}_{t}-\tilde{\boldsymbol{\theta}}_{t-1}\right\rangle \\
                                                 & \quad \quad  +L\eta_t^2(L_g+C_g^2C_w) \mathbb{E}\|\tilde{\boldsymbol{\theta}}_{t}-\tilde{\boldsymbol{\theta}}_{t-1}\|^{2}+\eta_{t}(1+L \eta_{t}) \mathbb{E}\left\|e_{t}\right\|^{2} \stepcounter{equation}\tag{\theequation}\label{main-lemma-eta-E}\\
                                                 & \leq \frac{1}{B_{t}} \mathbb{E}\left[J(\tilde{\boldsymbol{\theta}}_{t})-J(\tilde{\boldsymbol{\theta}}_{t-1})\right]\\
                                                 & \quad\quad  +\frac{1}{2 \eta_{t} B_{t}}[-\frac{1}{B_{t}}+\eta_t^2(2L_g+2C_g^2C_w)] \mathbb{E}\|\tilde{\boldsymbol{\theta}}_{t}-\tilde{\boldsymbol{\theta}}_{t-1}\|^{2} \\
                                                 & \quad \quad  + \frac{1}{B_{t}} \mathbb{E}\left\langle\nabla J(\tilde{\boldsymbol{\theta}}_{t}), \tilde{\boldsymbol{\theta}}_{t}-\tilde{\boldsymbol{\theta}}_{t-1}\right\rangle+\frac{\eta_{t}}{B_{t}} \mathbb{E}\|\nabla J(\tilde{\boldsymbol{\theta}}_{t})\|^{2}+\frac{\eta_{t}}{B_{t}} \mathbb{E}\left\|e_{t}\right\|^{2} \\
                                                 & \quad \quad  +L\eta_t^2(L_g+C_g^2C_w) \mathbb{E}\|\tilde{\boldsymbol{\theta}}_{t}-\tilde{\boldsymbol{\theta}}_{t-1}\|^{2}+\eta_{t}(1+L \eta_{t}) \mathbb{E}\left\|e_{t}\right\|^{2} \stepcounter{equation}\tag{\theequation}\label{main-lemma-neg-2eta-E}
\end{align*}
where \eqref{main-lemma-eta-E} follows from Lemma \ref{lemma-eta-E} and \eqref{main-lemma-neg-2eta-E} follows from Lemma \ref{lemma-neg-2eta-E}. Rearrange,
\begin{align*}
    \eta_{t}(1 & -  L \eta_{t} - \frac{1}{B_t}) \mathbb{E}\|\nabla J(\tilde{\boldsymbol{\theta}}_{t})\|^{2} + \frac{1 - 2\eta_t^2(L_g + C_g^2 C_w)B_t - 2L\eta_t^3(L_g + C_g^2C_w) B_t^2}{2\eta_t B_t^2} \mathbb{E}\|\tilde{\boldsymbol{\theta}}_{t}-\tilde{\boldsymbol{\theta}}_{t-1}\|^{2} \\ &\leq \frac{1}{B_t}\mathbb{E}\left[J(\tilde{\boldsymbol{\theta}}_{t})-J(\tilde{\boldsymbol{\theta}}_{t-1})\right] + \frac{1}{B_{t}} \mathbb{E}\left\langle\nabla J(\tilde{\boldsymbol{\theta}}_{t}), \tilde{\boldsymbol{\theta}}_{t}-\tilde{\boldsymbol{\theta}}_{t-1}\right\rangle +  \eta_t(1+L\eta_t + \frac{1}{B_t})\mathbb{E}\|e_t\|^2 \stepcounter{equation}\tag{\theequation}\label{lemma-proof:label-name-so-hard-1}
\end{align*}
Now we can apply Lemma \ref{lemma:Youngs-inequality} on $ \mathbb{E}\left\langle\nabla J(\tilde{\boldsymbol{\theta}}_{t}), \tilde{\boldsymbol{\theta}}_{t}-\tilde{\boldsymbol{\theta}}_{t-1}\right\rangle$ using $a=\tilde{\boldsymbol{\theta}}_t - \tilde{\boldsymbol{\theta}}_{t-1}$, $b=\nabla J(\tilde{\boldsymbol{\theta}}_t)$, and $\beta = \frac{1-2\eta_t^2(L_g + C_g^2C_w)B_t - 2L\eta_t^3(L_g + C_g^2C_w) B_t^2}{\eta_t B_t}$ to get
\begin{align*}
    \frac{1}{B_{t}} \mathbb{E}\left\langle\nabla J(\tilde{\boldsymbol{\theta}}_{t}), \tilde{\boldsymbol{\theta}}_{t}-\tilde{\boldsymbol{\theta}}_{t-1}\right\rangle &\leq \frac{1-2\eta_t^2(L_g + C_g^2C_w)B_t - 2L\eta_t^3(L_g + C_g^2C_w) B_t^2}{2\eta_t B_t^2}\mathbb{E}\|\tilde{\boldsymbol{\theta}}_{t}-\tilde{\boldsymbol{\theta}}_{t-1}\|^2 \\
    &+ \frac{\eta_t}{2[1-2\eta_t^2(L_g + C_g^2C_w)B_t - 2L\eta_t^3(L_g + C_g^2C_w) B_t^2]}\mathbb{E}\|\nabla J(\tilde{\boldsymbol{\theta}}_{t})\|^2 \stepcounter{equation}\tag{\theequation}\label{lemma-proof:label-name-so-hard-2}
\end{align*}
Combining \eqref{lemma-proof:label-name-so-hard-1} and \eqref{lemma-proof:label-name-so-hard-2} and rearrange, we have
\begin{align*}
    \eta_{t}(1- & L \eta_{t}  - \frac{1}{B_t} -  \frac{1}{2[1-2\eta_t^2(L_g + C_g^2C_w)B_t - 2L\eta_t^3(L_g + C_g^2C_w) B_t^2]}) \mathbb{E}\|\nabla J(\tilde{\boldsymbol{\theta}}_{t})\|^{2} \\ &\leq \frac{1}{B_t}\mathbb{E}\left[J(\tilde{\boldsymbol{\theta}}_{t})-J(\tilde{\boldsymbol{\theta}}_{t-1})\right] + \eta_t(1+L\eta_t + \frac{1}{B_t})\mathbb{E}\|e_t\|^2 \\ &\leq \frac{1}{B_t}\mathbb{E}\left[J(\tilde{\boldsymbol{\theta}}_{t})-J(\tilde{\boldsymbol{\theta}}_{t-1})\right] + \eta_t(1+L\eta_t + \frac{1}{B_t})\left[\frac{4\sigma^2}{(1-\alpha)^2KB_t} + \frac{48\alpha^2\sigma^2V}{(1-\alpha)^2B_t}\right] \stepcounter{equation}\tag{\theequation}\label{main-final-1}
\end{align*}
where \eqref{main-final-1} follows from Lemma \ref{lemma-error-bound}.
We want to choose $\eta_t$ such that $1-2\eta_t^2(L_g + C_g^2C_w)B_t - 2L\eta_t^3(L_g + C_g^2C_w) B_t^2 > 0$. Denoting $\Phi = L_g + C_g^2C_w$, we have
\begin{align*}
    \textcolor{red}{1>1-2\eta_t^2\Phi B_t - 2L\eta_t^3\Phi B_t^2 }&\textcolor{red}{> 0} \\
    \textcolor{red}{\frac{1}{2(1-2\eta_t^2\Phi B_t - 2L\eta_t^3\Phi B_t^2)}} &\textcolor{red}{> \frac{1}{2}}
\end{align*}
We can then choose $\eta_t$ to have
\begin{align*}
    1- L \eta_{t}  - \frac{1}{B_t} -  \frac{1}{2[1-2\eta_t^2\Phi B_t - 2L\eta_t^3\Phi B_t^2]}  & \geq \frac{1}{4} \stepcounter{equation}\tag{\theequation}\label{eq:eta_1}
    \\
    L \eta_{t}  + \frac{1}{B_t} +  \frac{1}{2[1-2\eta_t^2\Phi B_t - 2L\eta_t^3\Phi B_t^2]} &\leq \frac{3}{4}
\end{align*}
We thus want 
\begin{align*}
    \text{(i)} \frac{1}{2} <  \frac{1}{2[1-2\eta_t^2\Phi B_t - 2L\eta_t^3\Phi B_t^2]}  & \leq \frac{5}{8} \\ 
    \text{(ii)} L\eta_t \leq \frac{1}{16} \qquad\qquad\qquad\quad \text{(iii)} \frac{1}{B_t} & \leq \frac{1}{16}
\end{align*}
where (i) implies 
\begin{align*}
    \eta_t^2\Phi B_t + L\eta_t^3 \Phi B_t^2 &\leq \frac{1}{10}\\
    \rightarrow \eta_t^2\Phi B_t \leq \frac{1}{20} \qquad &\& \qquad L\eta_t^3\Phi B_t^2 \leq \frac{1}{20} \\
    \Rightarrow \eta_t \leq \frac{1}{20^{1/2}\Phi^{1/2}B_t^{1/2}} \quad &\& \quad \eta_t \leq \frac{1}{20^{1/3}\Phi^{1/3}B_t^{2/3}L^{1/3}}
\end{align*}
From (ii) \& (iii)
\begin{align*}
    \eta_t \leq \frac{1}{16L} \qquad \& \qquad B_t \geq 16 \\
\end{align*}
We can then choose $\eta_t \leq \frac{1}{2\Psi B_t^{2/3}}$, where $\Psi = (L\Phi)^{1/3} = (L(L_g + C_g^2C_w))^{1/3}$, s.t.
\begin{align*}
    \frac{1}{2(L\Phi)^{1/3}B_t^{2/3}} \leq \min\{\frac{1}{16}, 
    \frac{1}{20^{1/2}\Phi^{1/2}B_t^{1/2}}, \frac{1}{20^{1/3}(\Phi L)^{1/3}B_t^{2/3}}\}
\end{align*}
One example of such choice is setting $\eta_t = \frac{1}{2\Psi B_t^{2/3}}$ when $B_t \geq 16$ which satisfies \eqref{eq:eta_1}.
We can obtain the following from \eqref{main-final-1} and \eqref{eq:eta_1}:
\begin{align*}
                                                      %
    \textcolor{red}{\frac{1}{4}}\eta_{t}\mathbb{E}\|\nabla J(\tilde{\boldsymbol{\theta}}_{t})\|^{2} \leq \frac{1}{B_t}\mathbb{E}\left[J(\tilde{\boldsymbol{\theta}}_{t})-J(\tilde{\boldsymbol{\theta}}_{t-1})\right] + 2\eta_t \left[\frac{4\sigma^2}{(1-\alpha)^2KB_t} + \frac{48\alpha^2\sigma^2V}{(1-\alpha)^2B_t}\right]
\end{align*}

Replacing $\eta_t = \frac{1}{2\Psi B_t^{2/3}}$ and rearranging, we have
\begin{align*}
    \mathbb{E}\|\nabla J(\tilde{\boldsymbol{\theta}}_{t})\|^{2} &\leq 
    \textcolor{red}{4}\Big[ \frac{1}{B_{t}\eta_t} \mathbb{E}\left[J(\tilde{\boldsymbol{\theta}}_{t})-J(\tilde{\boldsymbol{\theta}}_{t-1})\right]+2 \left[\frac{4 \sigma^{2}}{(1-\alpha)^{2} K B_{t}}+\frac{48 \alpha^{2} \sigma^{2} V}{(1-\alpha)^{2} B_{t}}\right] \Big]\\
                                                              &
    \leq \textcolor{red}{4}\Big[
    \frac{2 \Psi \mathbb{E}\left[J(\tilde{\boldsymbol{\theta}}_{t})-J(\tilde{\boldsymbol{\theta}}_{t-1})\right]}{B_{t}^{1 / 3}}+\frac{8 \sigma^{2}}{(1-\alpha)^{2} K B_{t}}+\frac{96 \alpha^{2} \sigma^{2} V}{(1-\alpha)^{2} B_{t}} \Big]
\end{align*}

Replacing $B_t$ with constant batch size $B$ and telescoping over $t = 1,2,...,T$, we have for $\tilde{\boldsymbol{\theta}}_{a}$ from our algorithm:
\begin{align*}
    \mathbb{E}\|\nabla J(\tilde{\boldsymbol{\theta}}_{a})\|^{2} &\leq \textcolor{red}{4}\Big[
    {\frac{2 \Psi \mathbb{E}\left[J(\tilde{\boldsymbol{\theta}}_T)-J(\tilde{\boldsymbol{\theta}}_{0})\right]}{TB^{1 / 3}}}+ {\frac{8 \sigma^{2}}{(1-\alpha)^{2} K B}}+ {\frac{96 \alpha^{2} \sigma^{2} V}{(1-\alpha)^{2} B}} \Big]\\
    &\leq \textcolor{red}{4}\Big[
    {\frac{2 \Psi \left[J(\tilde{\boldsymbol{\theta}}^*)-J(\tilde{\boldsymbol{\theta}}_{0})\right]}{TB^{1 / 3}}}+ {\frac{8 \sigma^{2}}{(1-\alpha)^{2} K B}}+ {\frac{96 \alpha^{2} \sigma^{2} V}{(1-\alpha)^{2} B}} \Big]
\end{align*}
which completes the proof.

\textcolor{red}{Note:} The part highlighted in red is different from the corresponding part in our NeurIPS proceedings and is clearer when reasoning with the choice of $\eta_t$. It added another constant factor to the RHS of Theorem~\ref{main-theorem} but does not affect our Corollary~\ref{main-corollary} which is the main result of our story.
\end{proof}

\section{Proof of Corollary \ref{main-corollary}}\label{Appendix:proof-of-corollary}
\begin{proof}
Recall $\Psi = (L(L_g + C_g^2C_w))^{1/3}$. From Theorem \ref{main-theorem}, we have
\begin{align*}
    \mathbb{E}\|\nabla J(\tilde{\boldsymbol{\theta}}_{a})\|^{2} \leq \underbrace{\frac{2 \Psi \mathbb{E}\left[J(\tilde{\boldsymbol{\theta}}^*)-J(\tilde{\boldsymbol{\theta}}_{0})\right]}{TB^{1 / 3}}}_{T=O(\frac{1}{\epsilon B^{1 / 3}})}+ \underbrace{\frac{8 \sigma^{2}}{(1-\alpha)^{2} K B}}_{B_K=O(\frac{1}{\epsilon K})}+ \underbrace{\frac{96 \alpha^{2} \sigma^{2} V}{(1-\alpha)^{2} B}}_{B_{\alpha}=O(\frac{\alpha^{2}}{\epsilon})}
\end{align*}
To guarantee that the output of Algorithm \ref{alg:FedPG-BR} is $\epsilon$-approximate, i.e., $\mathbb{E}\|\nabla J(\boldsymbol{\tilde{\theta}}_a)\|^2 \leq \epsilon$, we need the number of rounds $T$ and the batch size $B$ to meet the following:
\begin{align*}
    \text{(i)} T = O(\frac{1}{\epsilon B^{1/3}}), \text{(ii)} B_K=O(\frac{1}{\epsilon K}) \text{, and (iii)} B_{\alpha}=O(\frac{\alpha^2}{\epsilon}) 
\end{align*}
By union bound and using $\mathbb{E}[Traj(\epsilon)]$ to denote the total number of trajectories required by each agent to sample, the above implies that
\begin{align*}
    \mathbb{E}[Traj(\epsilon)] &\leq TB_K + TB_{\alpha} \\
     &\leq O(\frac{1}{\epsilon^{5/3}K^{2/3}} + \frac{\alpha^{4/3}}{\epsilon^{5/3}})
\end{align*}
in order to obtain an $\epsilon$-approximate policy, which completes the proof for Corollary \ref{main-corollary} \textit{(i)}. Note that the total number of trajectories generated across the whole FRL system, denoted by $\mathbb{E}[Traj_{total}(\epsilon)]$ is thus bounded by:
\begin{align*}
    \mathbb{E}[Traj_{total}(\epsilon)] &\leq O(\frac{K^{1/3}}{\epsilon^{5/3}} + \frac{K\alpha^{4/3}}{\epsilon^{5/3}})
\end{align*}
Now for an ideal system where $\alpha = 0$:
\begin{align*}
    \mathbb{E}[Traj(\epsilon)] &\leq O(\frac{1}{\epsilon^{5/3}K^{2/3}}) \\
    \mathbb{E}[Traj_{total}(\epsilon)] &\leq O(\frac{K^{1/3}}{\epsilon^{5/3}})
\end{align*}
which completes the proof for Corollary \ref{main-corollary} \textit{(ii)}. Moreover, when $K = 1$, the number of trajectories required by the agent using FedPG-BR is 
\begin{align*}
    \mathbb{E}[Traj(\epsilon)] &\leq O(\frac{1}{\epsilon^{5/3}}) 
\end{align*}
which is Corollary \ref{main-corollary} \textit{(iii)} and is coherent with the recent analysis of SVRPG \citep{xu2020improvedUAI}.
\end{proof}
\section{More on the Byzantine Filtering Step}
\label{proof-claims-in-filtering-strategy}
In this section, we continue our discussion on our \textit{Byzantine Filtering Step} in Section \ref{subsection:algorithm-description}. We include the pseudocode for the subroutine \textbf{FedPG-Aggregate} below for ease of reference:

\begin{algorithm}[ht]
   \renewcommand\thealgorithm{1.1}
   \caption{\textbf{FedPG-Aggregate}}
\begin{algorithmic}[1]
  \STATE {\bfseries Input:} Gradient estimates from $K$ agents in round $t$: $\{\mu_t^{(k)}\}_{k=1}^k$, Variance Bound $\sigma$, filtering threshold $\mathfrak{T}_{\mu}\triangleq2 \sigma \sqrt{\frac{V}{B_t}}$, where $V\triangleq2\operatorname{log}(\frac{2K}{\delta})$ and $\delta \in (0,1)$
        \STATE {$S_1 \triangleq  \{\mu_{t}^{(k)}\} \text { where } k \in[K] \text { s.t. } 
         \left|\left\{k^{\prime}\in[K]:\left\|\mu_{t}^{(k^{\prime})}-\mu_{t}^{(k)}\right\| \leq \mathfrak{T}_{\mu}\right\}\right|>\frac{K}{2}$}
        \STATE {$\mu_{t}^{\text {mom }} \leftarrow \operatorname{argmin}_{\mu_t^{(\tilde{k})}}\|\mu_t^{(\tilde{k})} - \operatorname{mean}(S_1)\| \text{ where } \tilde{k} \in S_1 $}
        \STATE \textit{R1:} $\mathcal{G}_{t} \triangleq \left\{k \in[K]:\left\|\mu_{t}^{(k)}-\mu_{t}^{\text{mom}}\right\| \leq \mathfrak{T}_{\mu}\right\}$
      
        \IF{$\left|\mathcal{G}_{t}\right|<(1-\alpha) K$ \quad} 
            \STATE {$S_2 \triangleq  \{\mu_{t}^{(k)}\} \text { where } k \in[K] \text { s.t. } 
             \left|\left\{k^{\prime}\in[K]:\left\|\mu_{t}^{(k^{\prime})}-\mu_{t}^{(k)}\right\| \leq 2 \sigma\right\}\right|>\frac{K}{2}$}
            \STATE {$\mu_{t}^{\text {mom }} \leftarrow \operatorname{argmin}_{\mu_t^{(\tilde{k})}}\|\mu_t^{(\tilde{k})} - \operatorname{mean}(S_2)\| \text{ where } \tilde{k} \in S_2 $}
            \STATE \textit{R2:} $\mathcal{G}_{t} \triangleq \left\{k \in[K]:\left\|\mu_{t}^{(k)}-\mu_{t}^{\text{mom}}\right\| \leq 2 \sigma\right\}$
        \ENDIF
      \STATE {\bfseries Return:} $\mu_{t} \triangleq \frac{1}{\left|\mathcal{G}_{t}\right|} \sum_{k \in \mathcal{G}_{t}} \mu_{t}^{(k)}$ 
\end{algorithmic}
\end{algorithm}

As discussed in Section \ref{subsection:algorithm-description}, R2 (line 8 in Algorithm \ref{alg:FedAgg-BR}) ensures that $\mathcal{G}_t$ always include all good agents and for any Byzantine agents being included, their impact on the convergence of Algorithm \ref{alg:FedPG-BR} is limited since their maximum distance to $\nabla J(\boldsymbol{\theta}_0^t)$ is bounded by $3\sigma$. Here we give proofs for the claims.
%
%
%
%
%
%
\begin{claim} \label{claim:R2-ensures-all-good-agents-are-included}
Under Assumption \ref{assumption-bounded-variance} and $\forall \alpha < 0.5$, the filtering rule R2 in Algorithm \ref{alg:FedAgg-BR} ensures that, in any round $t$, all gradient estimates sent from non-Byzantine agents are included in $\mathcal{G}_t$, i.e., $|\mathcal{G}_t| \geq (1-\alpha)K$.
\end{claim}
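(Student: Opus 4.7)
My plan is to prove the claim in three steps, all driven by Assumption~\ref{assumption-bounded-variance}. First, I would observe that each good agent's gradient estimate is an average $\mu_t^{(k)} = \frac{1}{B_t}\sum_{i=1}^{B_t} g(\tau_{t,i}^{(k)}\mid\boldsymbol{\theta}_0^t)$ of $B_t$ quantities that the assumption places within $\sigma$ of $\nabla J(\boldsymbol{\theta}_0^t)$. The triangle inequality therefore gives $\|\mu_t^{(k)} - \nabla J(\boldsymbol{\theta}_0^t)\| \leq \sigma$ for every good $k$, and consequently $\|\mu_t^{(k)} - \mu_t^{(k')}\| \leq 2\sigma$ for any pair of good agents. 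This $2\sigma$ bound matches exactly the threshold used by R2 both in constructing $S_2$ and in declaring membership in $\mathcal{G}_t$.

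Second, I would establish that every good agent's gradient lies in $S_2$. The hypothesis $\alpha < 0.5$ implies $|\mathcal{G}| \geq (1-\alpha)K > K/2$, so for any good $k$ the first-step bound places all good agents (including $k$ itself) within $2\sigma$ of $\mu_t^{(k)}$. This gives strictly more than $K/2$ neighbors, so $\mu_t^{(k)} \in S_2$. Moreover, since $\mu_t^{\text{mom}}$ is chosen from $S_2$, it too must have more than $K/2$ neighbors within $2\sigma$; since fewer than $K/2$ of those neighbors can be Byzantine, at least one good agent $k^*$ satisfies $\|\mu_t^{(k^*)} - \mu_t^{\text{mom}}\| \leq 2\sigma$.

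Third, I would locate $\mu_t^{\text{mom}}$ inside the good cluster itself by exploiting the mean-of-medians selection. Since $S_2$ contains all $(1-\alpha)K$ good agents and at most $\alpha K$ Byzantines, $\operatorname{mean}(S_2)$ is a convex combination dominated by the good gradients and lies close to $\bar{\mu}_{\mathcal{G}} \triangleq \frac{1}{|\mathcal{G}|}\sum_{k\in\mathcal{G}}\mu_t^{(k)}$, which is itself within $\sigma$ of $\nabla J(\boldsymbol{\theta}_0^t)$. Among all elements of $S_2$, the one minimizing distance to $\operatorname{mean}(S_2)$ must therefore sit inside the good cluster, forcing $\mu_t^{\text{mom}}$ to lie within $\sigma$ of $\nabla J(\boldsymbol{\theta}_0^t)$. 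Combined with the first-step bound, this yields $\|\mu_t^{(k)} - \mu_t^{\text{mom}}\| \leq 2\sigma$ for every good $k$, which is exactly the R2 inclusion criterion, so $\mathcal{G} \subseteq \mathcal{G}_t$ and $|\mathcal{G}_t| \geq (1-\alpha)K$.

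The main obstacle is the third step. A naive triangle-inequality chain through the good agent $k^*$ identified in the second step only yields $\|\mu_t^{(k)} - \mu_t^{\text{mom}}\| \leq 4\sigma$, since a Byzantine vector sitting within $2\sigma$ of even one good gradient could in principle be selected as $\mu_t^{\text{mom}}$. Sharpening this to $2\sigma$ requires quantifying how $\operatorname{mean}(S_2)$ is pulled toward $\bar{\mu}_{\mathcal{G}}$ by the strict majority of good agents in $S_2$, and then arguing that the argmin cannot be realized by a Byzantine outlier lying farther than $\sigma$ from $\nabla J(\boldsymbol{\theta}_0^t)$. This quantitative control of the Byzantine perturbation on $\operatorname{mean}(S_2)$ is, in my view, the only genuinely nontrivial piece of the argument.
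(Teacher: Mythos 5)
Your first two steps coincide exactly with the paper's argument: Assumption~\ref{assumption-bounded-variance} gives $\|\mu_t^{(k)}-\nabla J(\boldsymbol{\theta}_0^t)\|\le\sigma$ for every good $k$, hence pairwise distances of at most $2\sigma$ among good agents, membership of every good agent in $S_2$ by the majority count, and the fact that every element of $S_2$ has at least one good neighbour within $2\sigma$. But your third step is where the proof actually has to happen, and you have left it as an acknowledged gap rather than closed it: you correctly observe that chaining through the good neighbour $k^*$ only yields $4\sigma$, and you then state that the sharpening ``requires quantifying'' the pull of $\operatorname{mean}(S_2)$ toward the good cluster without supplying that quantification. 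As written, the proposal does not establish the R2 inclusion criterion $\|\mu_t^{(k)}-\mu_t^{\text{mom}}\|\le 2\sigma$ for good $k$, so the claim is not proved.

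The missing quantitative step, as the paper supplies it, is short. Combining your steps 1 and 2, every element of $S_2$ (Byzantine or not) is within $2\sigma$ of some good agent and hence within $3\sigma$ of $\nabla J(\boldsymbol{\theta}_0^t)$. Since $S_2$ contains all $(1-\alpha)K$ good agents (each within $\sigma$) and at most $\alpha K$ Byzantine vectors (each within $3\sigma$), averaging gives
\begin{align*}
\bigl\|\operatorname{mean}(S_2)-\nabla J(\boldsymbol{\theta}_0^t)\bigr\| \le \frac{(1-\alpha)K\cdot\sigma+\alpha K\cdot 3\sigma}{K}=(1+2\alpha)\sigma<2\sigma
\end{align*}
for $\alpha<0.5$. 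From this the paper argues that the minimizer over $S_2$ of the distance to $\operatorname{mean}(S_2)$ must land in the good cluster, i.e.\ $\|\mu_t^{\text{mom}}-\nabla J(\boldsymbol{\theta}_0^t)\|\le\sigma$, whence every good $k$ satisfies $\|\mu_t^{(k)}-\mu_t^{\text{mom}}\|\le 2\sigma$ and is retained by R2. (Note the conclusion you need is not that $\mu_t^{\text{mom}}$ is literally a good agent's vector --- a Byzantine vector mimicking a good one could be selected --- but that whatever is selected lies within $\sigma$ of the true gradient; the paper itself justifies this last argmin step via a worst-case configuration computation, so your instinct that this is the delicate point is sound, but the $(1+2\alpha)\sigma$ averaging bound and the $3\sigma$ bound on all of $S_2$ are the concrete ingredients your outline is missing.)
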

\begin{proof}
First, from Assumption \ref{assumption-bounded-variance}:
\begin{align*}
    \|\mu_t^{(k)} - \nabla J(\boldsymbol{\theta}_0^t)\| \leq \sigma, \forall k \in \mathcal{G}
\end{align*}
it implies that $\|\mu_t^{(k_1)} - \mu_t^{(k_2)}\| \leq 2\sigma, \forall k_1, k_2 \in \mathcal{G}$. So, for any value of the \textit{vector median} \citep{alistarh2018byzantine} in S2 $=\{\mu_t^{(k)}\}$ (defined in line 6):
\begin{align*}
    \|\mu_t^{(k)} - \nabla J(\boldsymbol{\theta}_0^t)\| \leq 3\sigma, \forall \mu_t^{(k)} \in \text{S2}
\end{align*}
An intuitive illustration is provided in Fig.~\ref{fig:filtering-diagram}. Next, consider the worst case where all values sent by the $K$ agents are included in S2: for all $(1-\alpha)K$ good agents, they send the same value $\mu_t^{(k)}$, s.t., $\|\nabla J(\boldsymbol{\theta}_0^t) - \mu_t^{(k)}\| = \sigma, \forall k \in \mathcal{G}$; and for all $\alpha K$ Byzantine agents, they send the same value $\mu_t^{(k^{\prime})}$ s.t., $\|\nabla J(\boldsymbol{\theta}_0^t) - \mu_t^{(k^{\prime})}\| = 3\sigma, \forall k^{\prime} \in S2\setminus \mathcal{G}$. Then the mean of values in S2 satisfies:
\begin{align*}
    \|\mu_t^{\operatorname{mean}} - \nabla J({\boldsymbol{\theta}_0^t})\| &= \frac{(1-\alpha)K\cdot\sigma + \alpha K \cdot 3\sigma}{K} \\
                                                                     &= (1-\alpha)\sigma + 3\alpha\sigma \\
                                                                     &= \sigma + 2\alpha\sigma \\
                                                                     &< 2\sigma
\end{align*}        
where the last inequality holds for $\alpha < 0.5$ which is our assumption. Then the value $\mu_t^{\operatorname{mom}}$ of Algorithm \ref{alg:FedPG-BR} will be set to any $\mu_t^{(k)}$ from S2, of which is the closet to $\mu_t^{\operatorname{mean}}$.

The selection of $\mu_t^{\operatorname{mom}} $ implies $\|\mu_t^{\operatorname{mom}} - \nabla J(\boldsymbol{\theta}_0^t)\| \leq \sigma$. Therefore, by constructing a region of $\mathcal{G}_t$ that is centred at $\mu_t^{\operatorname{mom}}$ and $2\sigma$ in radius (line 8), $\mathcal{G}_t$ can cover all estimates from non-Byzantine agents and hence ensure $|\mathcal{G}_t| \geq (1-\alpha)K$.
\end{proof}
\begin{claim} \label{claim2:distance-in-filtering}
Under Assumption \ref{assumption-bounded-variance} and $\alpha < 0.5$, the filtering rule R2 in Algorithm \ref{alg:FedAgg-BR} ensures that, in any round $t$, $\|\mu_t^{(k)} - \nabla J(\boldsymbol{\theta}_0^t)\| \leq 3\sigma, \forall k \in \mathcal{G}_t$.
\end{claim}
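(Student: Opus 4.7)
The plan is to combine the intermediate bound on $\mu_t^{\text{mom}}$ that is already established inside the proof of Claim \ref{claim:R2-ensures-all-good-agents-are-included} with the explicit radius used to construct $\mathcal{G}_t$ under R2, and then close the argument via the triangle inequality. Because R2 is the branch being analyzed, the relevant lines of Algorithm \ref{alg:FedAgg-BR} are lines 6--8, which define $S_2$, $\mu_t^{\text{mom}}$, and $\mathcal{G}_t$ with the fixed radius $2\sigma$ (as opposed to the data-dependent threshold $\mathfrak{T}_\mu$ in R1).

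First I would invoke the key fact already derived for the previous claim: under Assumption \ref{assumption-bounded-variance} together with $\alpha < 0.5$, the Mean-of-Median vector chosen on line 7 satisfies
\begin{align*}
\|\mu_t^{\text{mom}} - \nabla J(\boldsymbol{\theta}_0^t)\| \leq \sigma .
\end{align*}
The reason, which I would recap briefly, is that every element of $S_2$ is a ``vector median'' in the sense that more than $K/2$ of the $\{\mu_t^{(k')}\}_{k'=1}^K$ lie within $2\sigma$ of it; since the non-Byzantine agents number at least $(1-\alpha)K > K/2$ and each lies within $\sigma$ of $\nabla J(\boldsymbol{\theta}_0^t)$ by Assumption \ref{assumption-bounded-variance}, the mean of $S_2$ is pulled toward the good-agent cluster, so the element of $S_2$ closest to that mean is, in the worst case analyzed in Claim \ref{claim:R2-ensures-all-good-agents-are-included}, a good agent's vector.

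Second, I would use the defining property of $\mathcal{G}_t$ on line 8 of R2: by construction every $k \in \mathcal{G}_t$ satisfies $\|\mu_t^{(k)} - \mu_t^{\text{mom}}\| \leq 2\sigma$. The triangle inequality then yields, for any $k \in \mathcal{G}_t$,
\begin{align*}
\|\mu_t^{(k)} - \nabla J(\boldsymbol{\theta}_0^t)\|
\leq \|\mu_t^{(k)} - \mu_t^{\text{mom}}\| + \|\mu_t^{\text{mom}} - \nabla J(\boldsymbol{\theta}_0^t)\|
\leq 2\sigma + \sigma = 3\sigma ,
\end{align*}
which is exactly the claimed bound.

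The only nontrivial step is the first one, namely pinning down $\|\mu_t^{\text{mom}} - \nabla J(\boldsymbol{\theta}_0^t)\| \leq \sigma$ rather than the weaker $\leq 3\sigma$ that a generic element of $S_2$ would allow (a generic element is only guaranteed to sit within $2\sigma$ of some good agent, giving $3\sigma$ after one more triangle step and producing a useless $5\sigma$ final bound). This is precisely where the Mean-of-Median selection rule and the majority-goodness assumption $\alpha < 0.5$ do the real work, and I would import that sub-argument directly from the proof of Claim \ref{claim:R2-ensures-all-good-agents-are-included} rather than re-derive it here.
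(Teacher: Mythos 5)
Your proof is correct and matches the paper's intent exactly: the paper's own proof of this claim is a one-line remark that it "follows from the proof of Claim~\ref{claim:R2-ensures-all-good-agents-are-included}", and what it is implicitly relying on is precisely the two steps you spell out — the bound $\|\mu_t^{\text{mom}} - \nabla J(\boldsymbol{\theta}_0^t)\| \leq \sigma$ established there, combined with the radius-$2\sigma$ construction of $\mathcal{G}_t$ in R2 via the triangle inequality. Your observation that using only the generic $3\sigma$ bound on elements of $S_2$ would degrade the result to $5\sigma$ is a nice articulation of why the Mean-of-Median step matters, but the underlying argument is the same as the paper's.
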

\begin{proof}
This lemma is a straightforward result following the proof of Claim \ref{claim:R2-ensures-all-good-agents-are-included}.
\end{proof}
\begin{remark}
Claim \ref{claim2:distance-in-filtering} implies that, in any round $t$, if an estimate sent from Byzantine agent is included in $\mathcal{G}_t$, then its impact on the convergence of Algorithm \ref{alg:FedPG-BR} is limited since its distance to $\nabla J(\boldsymbol{\theta}_0^t)$ is bounded by $3\sigma$. Fig.~\ref{fig:filtering-diagram} provides an intuitive illustration for this claim.
\end{remark}

\begin{figure}[H]
    \centering
    \includegraphics[scale=0.5]{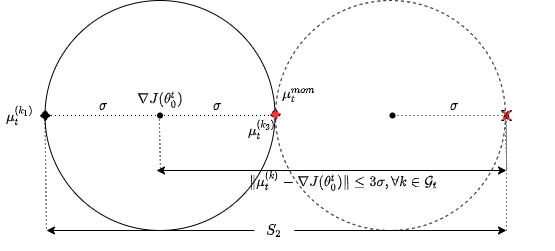}
    \caption{Graphical illustration of the Byzantine filtering strategy where $\mu_t^{(k_1)}, \mu_t^{(k_2)}$ are two good gradients while the red cross represents one Byzantine gradient which falls within $S_2$. $\mu_t^{\text{mom}}$ will be chosen at the red diamond.}
    \label{fig:filtering-diagram}
\end{figure}
As discussed above, R2 ensures that all good agents are included in $\mathcal{G}_t$, i.e., a region in which all good agents are \textit{concentrated}. R1 (lines 2-4) is designed in a similar way and aims to improve the practical performance of FedPG-BR by exploiting Lemma \ref{lemma-martingale}: all good agents are \textit{highly likely} to be \textit{concentrated} in a much \textit{smaller region}. 
\begin{claim}
Define $V \triangleq 2\operatorname{log}(2K/\delta)$ and $\delta \in (0,1)$, the filtering R1 in Algorithm \ref{alg:FedPG-BR} ensure
\begin{align*}
    \|\mu_t^{(k)} - \nabla J(\boldsymbol{\theta}_0^t)\| \leq \sigma \sqrt{\frac{V}{B_t}}, \forall k \in \mathcal{G}
\end{align*}
with probability of at least $1 - \delta$.
\end{claim}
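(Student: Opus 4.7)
The plan is to exploit the fact that for each good agent $k \in \mathcal{G}$, the estimator $\mu_t^{(k)} = \frac{1}{B_t}\sum_{i=1}^{B_t} g(\tau_{t,i}^{(k)} | \boldsymbol{\theta}_0^t)$ is an empirical mean of $B_t$ i.i.d.\ unbiased samples of $\nabla J(\boldsymbol{\theta}_0^t)$, each of which is bounded by $\sigma$ in deviation from the mean (by Assumption~\ref{assumption-bounded-variance}). This puts us squarely in the regime of a vector-valued Hoeffding/Azuma concentration bound, which is exactly what Lemma~\ref{lemma-martingale} is being invoked to supply.

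The key steps, in order, would be as follows. First, fix any $k \in \mathcal{G}$ and define the zero-mean random vectors $X_i^{(k)} \triangleq g(\tau_{t,i}^{(k)} | \boldsymbol{\theta}_0^t) - \nabla J(\boldsymbol{\theta}_0^t)$; by the unbiasedness of the REINFORCE/GPOMDP estimators and by Assumption~\ref{assumption-bounded-variance}, these satisfy $\mathbb{E}[X_i^{(k)}] = 0$ and $\|X_i^{(k)}\| \leq \sigma$ almost surely, and they are i.i.d.\ across $i = 1,\dots,B_t$. Second, apply Lemma~\ref{lemma-martingale} (a vector Azuma/Hoeffding inequality) to the partial sums $\frac{1}{B_t}\sum_{i=1}^{B_t} X_i^{(k)} = \mu_t^{(k)} - \nabla J(\boldsymbol{\theta}_0^t)$ to obtain, for any $u > 0$,
\begin{equation*}
\Pr\Bigl(\|\mu_t^{(k)} - \nabla J(\boldsymbol{\theta}_0^t)\| \geq u\Bigr) \leq 2\exp\!\Bigl(-\tfrac{B_t u^2}{2\sigma^2}\Bigr).
\end{equation*}
Third, set $u \triangleq \sigma\sqrt{V/B_t}$ with $V = 2\log(2K/\delta)$; substituting yields a per-agent failure probability of at most $\delta/K$. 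Finally, take a union bound over all $k \in \mathcal{G}$ (at most $K$ agents), giving total failure probability at most $|\mathcal{G}|\cdot\delta/K \leq \delta$, so that the stated inequality holds for every $k \in \mathcal{G}$ simultaneously with probability at least $1-\delta$.

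The step I anticipate needing the most care is the invocation of the vector concentration inequality in Lemma~\ref{lemma-martingale}, because standard scalar Hoeffding does not apply directly to norms of random vectors; one must use a genuinely vector-valued version (Pinelis/Hayes) which typically pays only a constant-factor penalty but requires the per-step almost-sure bound $\|X_i^{(k)}\| \leq \sigma$ rather than a bound on components. Since Assumption~\ref{assumption-bounded-variance} supplies precisely a bound on $\|g(\tau|\boldsymbol{\theta}) - \nabla J(\boldsymbol{\theta})\|$, this passes through cleanly, and the remaining work (choosing $u$ and union-bounding) is routine. Everything else reduces to unpacking definitions; in particular, the threshold $\mathfrak{T}_\mu = 2\sigma\sqrt{V/B_t}$ used in rule R1 is exactly twice $u$, which is consistent with the later use of this claim to argue that any two good gradients differ by at most $\mathfrak{T}_\mu$ with probability $\geq 1-\delta$.
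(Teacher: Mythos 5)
Your proposal is correct and follows essentially the same route as the paper: define the zero-mean, $\sigma$-bounded increments $X_i = g(\tau_{t,i}^{(k)}|\boldsymbol{\theta}_0^t) - \nabla J(\boldsymbol{\theta}_0^t)$, invoke the vector-valued Pinelis concentration bound of Lemma~\ref{lemma-martingale}, and absorb the union bound over the $K$ agents into the choice $V = 2\log(2K/\delta)$. If anything, you are more explicit than the paper about the per-agent failure probability $\delta/K$ and the subsequent union bound, which the paper's proof leaves implicit in its final sentence.
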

\begin{proof}
From Assumption \ref{assumption-bounded-variance}, $\|\mu_t^{(k)} - \nabla J(\boldsymbol{\theta}_0^t)\| \leq \sigma, \forall k \in \mathcal{G}$. We have
\begin{align*}
    \|\mu_t^{(k)} - \nabla J(\boldsymbol{\theta}_0^t)\| &= \left\|\frac{1}{B_t}\sum_{i=1}^{B_t} g(\tau_{t,i}^{(k)} | \boldsymbol{\theta}_0^t) - \nabla J(\boldsymbol{\theta}_0^t)\right\| \\
                                                    &= \frac{1}{B_t}\sqrt{\left\|\sum_{i=1}^{B_t} g(\tau_{t,i}^{(k)} | \boldsymbol{\theta}_0^t) - \nabla J(\boldsymbol{\theta}_0^t)\right\|^2} \stepcounter{equation}\tag{\theequation}\label{proof:claim-in-martingale}
\end{align*}
Consider $X_i \triangleq g(\tau_{t,i}^{(k)}) - \nabla J(\boldsymbol{\theta}_0^t)$ and apply Lemma \ref{lemma-martingale} on \eqref{proof:claim-in-martingale}, we have
\begin{align*}
    \Pr\left[\left\|\sum_{i=1}^{B_t}X_i\right\|^2 \leq 2\operatorname{log}(\frac{2}{\delta})\sigma^2 B_t\right] \geq 1 - \delta \\
    \Pr\left[\frac{1}{B_t}\sqrt{\left\|\sum_{i=1}^{B_t}X_i\right\|^2} \leq \frac{1}{B_t} \sqrt{2\operatorname{log}(\frac{2}{\delta})\sigma^2 B_t}\right] \geq 1 - \delta
\end{align*}
With $V \triangleq 2\operatorname{log}(2K/\delta)$ and $\delta \in (0,1)$, the above inequality yields the Claim.
\end{proof}
Therefore, the first filtering R1 (lines 2-4) of FedPG-BR constructs a region of $\mathcal{G}_t$ centred at $\mu_t^{\operatorname{mom}}$ with radius of $2\sigma\sqrt{\frac{V}{B_t}}$, which ensures in any round $t$ that, \textit{with probability} $\geq 1-\delta$, (a) all good agents are included in $\mathcal{G}_t$, and (b) if gradients from Byzantine agents are included in $\mathcal{G}_t$, their impact is limited since their maximum distance to $\nabla J(\boldsymbol{\theta}_0^t)$ is bounded by $3\sigma \sqrt{\frac{V}{B_t}}$ (The proof is similar to that of Claim \ref{claim2:distance-in-filtering}). Compared to R2, R1 can construct a \textit{smaller} region that the server believes contains all good agents. If any Byzantine agent is included, their impact is also \textit{smaller}, with probability of at least $1-\delta$. Therefore, R1 is applied first such that if R1 fails (line 5) which happens with probability $< \delta$, R2 is then employed as a backup to ensure that $\mathcal{G}_t$ always includes all good agents. 
%
%

\section{Useful technical lemmas}
\label{app:some_useful_lemmas}
\begin{lemma}[Unbiaseness of importance sampling]
\label{lemma-unbiased-IS}
\begin{align*}
    \mathbb{E}_{\tau \sim p\left(\cdot \mid \boldsymbol{\theta}_{n}\right)} [ \omega(\tau|\boldsymbol{\theta}_n,\boldsymbol{\theta}_0)g(\tau|\boldsymbol{\theta}_0)] &= \mathbb{E}_{\tau \sim p(\cdot | \boldsymbol{\theta}_{0})}[g(\tau|\boldsymbol{\theta}_0)] \\
                                                    &=\nabla J({\boldsymbol{\theta}_0})
\end{align*}
\end{lemma}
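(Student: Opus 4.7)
The plan is to prove the equality by an elementary change-of-measure calculation, since $\omega(\tau \mid \boldsymbol{\theta}_n, \boldsymbol{\theta}_0) \triangleq p(\tau \mid \boldsymbol{\theta}_0)/p(\tau \mid \boldsymbol{\theta}_n)$ is exactly the Radon--Nikodym derivative between the two trajectory measures. First, I would write the left-hand side as an integral against the sampling density:
\begin{equation*}
\mathbb{E}_{\tau \sim p(\cdot \mid \boldsymbol{\theta}_n)}\bigl[\omega(\tau \mid \boldsymbol{\theta}_n, \boldsymbol{\theta}_0)\, g(\tau \mid \boldsymbol{\theta}_0)\bigr]
= \int_{\tau} \frac{p(\tau \mid \boldsymbol{\theta}_0)}{p(\tau \mid \boldsymbol{\theta}_n)}\, g(\tau \mid \boldsymbol{\theta}_0)\, p(\tau \mid \boldsymbol{\theta}_n)\, d\tau.
\end{equation*}

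Next, I would cancel the $p(\tau \mid \boldsymbol{\theta}_n)$ factors to obtain $\int_{\tau} g(\tau \mid \boldsymbol{\theta}_0)\, p(\tau \mid \boldsymbol{\theta}_0)\, d\tau$, which is by definition $\mathbb{E}_{\tau \sim p(\cdot \mid \boldsymbol{\theta}_0)}[g(\tau \mid \boldsymbol{\theta}_0)]$. Finally, invoking the unbiasedness of the REINFORCE/GPOMDP estimator (stated in Section~\ref{sec:background} and formalized in Appendix~\ref{appendix-gradient-estimator}), namely $\mathbb{E}_{\tau \sim p(\cdot \mid \boldsymbol{\theta}_0)}[g(\tau \mid \boldsymbol{\theta}_0)] = \nabla J(\boldsymbol{\theta}_0)$, closes the chain of equalities.

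The only subtle point, and the one I would flag explicitly, is the well-posedness of the cancellation: it requires $p(\tau \mid \boldsymbol{\theta}_n) > 0$ wherever $p(\tau \mid \boldsymbol{\theta}_0) > 0$ on the trajectory space, i.e., absolute continuity $p(\cdot \mid \boldsymbol{\theta}_0) \ll p(\cdot \mid \boldsymbol{\theta}_n)$. For the parametric stochastic policies considered in this paper (with $\pi_{\boldsymbol{\theta}}(a\mid s)$ strictly positive, as guaranteed by the bounded log-density assumption in Assumption~\ref{assumption-policy-derivatives}), this holds automatically because the trajectory density factorizes as $p(\tau \mid \boldsymbol{\theta}) = \rho(s_0)\prod_{h} \pi_{\boldsymbol{\theta}}(a_h\mid s_h)\mathcal{P}(s_{h+1}\mid s_h, a_h)$ and only the policy factors depend on $\boldsymbol{\theta}$. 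There is no real obstacle beyond this hygiene check; the result is a one-line change of variables, and the importance-weight construction in Algorithm~\ref{alg:FedPG-BR} is designed precisely so that this cancellation goes through.
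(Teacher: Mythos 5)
Your proposal is correct and follows essentially the same route as the paper's proof: both reduce the claim to the standard importance-sampling change of measure, cancelling $p(\tau \mid \boldsymbol{\theta}_n)$ against the sampling density and then invoking unbiasedness of the REINFORCE/GPOMDP estimator. If anything, your write-up is the cleaner of the two (the paper's version conflates the per-trajectory estimator $g$ with its expectation in its intermediate steps), and your absolute-continuity remark --- guaranteed here by the strictly positive policy densities and the fact that only the policy factors of $p(\tau \mid \boldsymbol{\theta})$ depend on $\boldsymbol{\theta}$ --- is a worthwhile hygiene check that the paper omits.
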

\begin{proof}
Drop $t$ from notation and use $\tau_n$ to denote trajectories sampled from $\boldsymbol{\theta}_n$ at step $n$. From the definition of gradient estimation, we have
    \begin{align*}
        g(\tau_n | \boldsymbol{\theta}_0) &= \mathbb{E}_{\tau \sim p(\cdot | \boldsymbol{\theta}_n)}[\nabla_{\boldsymbol{\theta}_0}p(\boldsymbol{\theta}_0)r(\tau)] \\
                            &= \int p(\cdot | \boldsymbol{\theta}_n) \nabla_{\boldsymbol{\theta}_0} p(\boldsymbol{\theta}_0) r(\tau) d\tau \\
                            &= \int \frac{p(\cdot | \boldsymbol{\theta}_0)}{p(\cdot | \boldsymbol{\theta}_0)} p(\cdot | \boldsymbol{\theta}_n) \nabla_{\boldsymbol{\theta}_0} p(\boldsymbol{\theta}_0) r(\tau) d\tau \\
                            &= \int p(\cdot | \boldsymbol{\theta}_0) \frac{p(\cdot | \boldsymbol{\theta}_n)}{p(\cdot | \boldsymbol{\theta}_0)} \nabla_{\boldsymbol{\theta}_0} p(\boldsymbol{\theta}_0) r(\tau) d\tau \\
                            &= \mathbb{E}_{\tau \sim p(\cdot | \boldsymbol{\theta}_0)} \left[\frac{p(\cdot | \boldsymbol{\theta}_n)}{p(\cdot | \boldsymbol{\theta}_0)} \nabla_{\boldsymbol{\theta}_0} p(\boldsymbol{\theta}_0) r(\tau)\right] \\
                            &= \frac{p(\cdot | \boldsymbol{\theta}_n)}{p(\cdot | \boldsymbol{\theta}_0)} g(\tau_0 | \boldsymbol{\theta}_0)
    \end{align*}
Then,
    \begin{align*}
        \omega(\tau | \boldsymbol{\theta}_n, \boldsymbol{\theta}_0)g(\tau_n | \boldsymbol{\theta}_0) &= \frac{p(\cdot | \boldsymbol{\theta}_0)}{p(\cdot | \boldsymbol{\theta}_n)} g(\tau_n | \boldsymbol{\theta}_0) \\
        &= g(\tau_0 | \boldsymbol{\theta}_0)
    \end{align*}
which gives the lemma.
\end{proof}

\begin{lemma}[Adapted from \citep{xu2020improvedUAI}]
    \label{lemma-uai-variance-IS}
    Let $\omega(\tau | \boldsymbol{\theta}_1, \boldsymbol{\theta}_2) = p(\tau | \boldsymbol{\theta}_1) / p(\tau | \boldsymbol{\theta}_2)$, under Assumptions \ref{assumption-policy-derivatives} and \ref{assumption-uai-weight}, it holds that
    \begin{align*}
     Var(\omega(\tau|\boldsymbol{\theta}_1, \boldsymbol{\theta}_2)) \leq C_w\|\boldsymbol{\theta}_1 - \boldsymbol{\theta}_2\|^2  
    \end{align*}
    where $C_w = H(2HG^2 + M)(W+1) $. Furthermore, we have
    \begin{align*}
        \mathbb{E}_{\tau_n^t} & \|1 - \omega(\tau_n^t|\boldsymbol{\theta}_n^t, \boldsymbol{\theta}_0^t)\|^2 \\
        &= Var_{\boldsymbol{\theta}_n^t, \boldsymbol{\theta}_0^t}(\omega(\tau_n^t|\boldsymbol{\theta}_n^t, \boldsymbol{\theta}_0^t)) \\
        &\leq C_w\|\boldsymbol{\theta}_n^t - \boldsymbol{\theta}^t_0\|^2
    \end{align*}
\end{lemma}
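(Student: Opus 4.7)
The plan is to bound the variance of the importance weight by a second-order Taylor expansion of $\omega^{2}$ viewed as a function of its first argument. The key simplification is that $\omega(\tau|\boldsymbol{\theta}_1,\boldsymbol{\theta}_2)=p(\tau|\boldsymbol{\theta}_1)/p(\tau|\boldsymbol{\theta}_2)$ is a ratio in which the initial-state density $\rho$ and the policy-independent transition kernel $\mathcal{P}$ cancel, so $\omega$ depends on $\boldsymbol{\theta}_1$ only through $\prod_{h=0}^{H-1}\pi_{\boldsymbol{\theta}_1}(a_h|s_h)$. Under $\tau\sim p(\cdot|\boldsymbol{\theta}_2)$ a change of measure gives $\mathbb{E}[\omega]=1$, so
\[
\operatorname{Var}(\omega)=\mathbb{E}[\omega^{2}]-1=\mathbb{E}[\omega^{2}-1],
\]
and the task is to control $\mathbb{E}[\omega^{2}-1]$ by $\|\boldsymbol{\theta}_1-\boldsymbol{\theta}_2\|^{2}$.

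Set $\Delta\triangleq\boldsymbol{\theta}_1-\boldsymbol{\theta}_2$ and apply the scalar second-order Taylor expansion with Lagrange remainder to the map $\boldsymbol{\theta}\mapsto\omega^{2}(\tau|\boldsymbol{\theta},\boldsymbol{\theta}_2)$ around $\boldsymbol{\theta}_2$. At $\boldsymbol{\theta}_2$ one has $\omega^{2}=1$ and, using $\nabla\omega=\omega\nabla\log p(\tau|\boldsymbol{\theta})$, $\nabla\omega^{2}|_{\boldsymbol{\theta}_2}=2\nabla\log p(\tau|\boldsymbol{\theta}_2)$. Therefore
\[
\omega^{2}-1=2\nabla\log p(\tau|\boldsymbol{\theta}_2)^{T}\Delta+\tfrac{1}{2}\Delta^{T}\nabla^{2}\omega^{2}(\tau|\xi,\boldsymbol{\theta}_2)\Delta
\]
for some $\xi$ on the segment between $\boldsymbol{\theta}_1$ and $\boldsymbol{\theta}_2$. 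Taking $\mathbb{E}_{\tau\sim p(\cdot|\boldsymbol{\theta}_2)}$ kills the linear term via the score identity $\mathbb{E}[\nabla\log p(\tau|\boldsymbol{\theta}_2)]=0$, yielding
\[
\operatorname{Var}(\omega)=\tfrac{1}{2}\mathbb{E}\bigl[\Delta^{T}\nabla^{2}\omega^{2}(\tau|\xi,\boldsymbol{\theta}_2)\Delta\bigr]\le\tfrac{1}{2}\|\Delta\|^{2}\,\mathbb{E}\|\nabla^{2}\omega^{2}(\tau|\xi,\boldsymbol{\theta}_2)\|.
\]

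A direct calculation gives $\nabla^{2}\omega^{2}=2\omega^{2}[\,2\nabla\log p\,(\nabla\log p)^{T}+\nabla^{2}\log p\,]$. Summing Assumption~\ref{assumption-policy-derivatives} over the $H$ time steps yields $\|\nabla\log p(\tau|\boldsymbol{\theta})\|\le HG$ and $\|\nabla^{2}\log p(\tau|\boldsymbol{\theta})\|\le HM$, so the bracket has spectral norm at most $2H^{2}G^{2}+HM=H(2HG^{2}+M)$, and
\[
\operatorname{Var}(\omega)\le H(2HG^{2}+M)\,\|\Delta\|^{2}\,\mathbb{E}[\omega^{2}(\tau|\xi,\boldsymbol{\theta}_2)].
\]
Finally, invoking Assumption~\ref{assumption-uai-weight} at the intermediate pair $(\xi,\boldsymbol{\theta}_2)$ gives $\mathbb{E}[\omega^{2}(\tau|\xi,\boldsymbol{\theta}_2)]=\operatorname{Var}(\omega(\tau|\xi,\boldsymbol{\theta}_2))+1\le W+1$, producing exactly the claimed constant $C_w=H(2HG^{2}+M)(W+1)$.

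The second display is then immediate: with the inner-loop sampling $\tau_n^t\sim p(\cdot|\boldsymbol{\theta}_n^t)$ and the convention of $\omega$ from Section~\ref{subsection:algorithm-description}, one again has $\mathbb{E}[\omega(\tau_n^t|\boldsymbol{\theta}_n^t,\boldsymbol{\theta}_0^t)]=1$, so $\mathbb{E}\|1-\omega\|^{2}=\mathbb{E}[\omega^{2}]-1=\operatorname{Var}(\omega)$, and the first display with $(\boldsymbol{\theta}_1,\boldsymbol{\theta}_2)\leftarrow(\boldsymbol{\theta}_n^t,\boldsymbol{\theta}_0^t)$ closes the bound. The main obstacle I expect is conceptual rather than computational: one must Taylor-expand $\omega^{2}$ rather than $\omega$ itself (otherwise squaring the remainder produces a $\|\Delta\|^{4}$ term that cannot be absorbed), and one must handle the $\tau$-dependent Lagrange point $\xi$ when invoking Assumption~\ref{assumption-uai-weight}, which is precisely why that assumption is required to hold uniformly over every policy pair arising in the algorithm.
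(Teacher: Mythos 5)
The paper does not actually prove this lemma --- it defers entirely to \citet{xu2020improvedUAI} --- so your proposal is best judged against the argument in that reference, which it essentially reconstructs: compute $\operatorname{Var}(\omega)=\mathbb{E}[\omega^2]-1$, observe that the first-order term vanishes by the score identity, and bound the Hessian of $\omega^2$ using Assumption~\ref{assumption-policy-derivatives} summed over the $H$ policy factors together with Assumption~\ref{assumption-uai-weight} to control $\mathbb{E}[\omega^2]\leq W+1$. Your computation of $\nabla^2\omega^2 = 2\omega^2\,[\,2\nabla\log p\,(\nabla\log p)^{T}+\nabla^2\log p\,]$, the bounds $\|\nabla\log p\|\leq HG$ and $\|\nabla^2\log p\|\leq HM$, and the identity $\mathbb{E}\|1-\omega\|^2=\operatorname{Var}(\omega)$ are all correct, and the constant $H(2HG^2+M)(W+1)$ comes out exactly.

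There is, however, one genuine flaw in the last step. You apply Taylor's theorem with Lagrange remainder \emph{pointwise in $\tau$}, so the intermediate point $\xi$ is a random variable $\xi(\tau)$. The quantity you then need to control is $\mathbb{E}_{\tau}\bigl[\omega^2(\tau\,|\,\xi(\tau),\boldsymbol{\theta}_2)\bigr]$, and this is \emph{not} the second moment of the importance weight for any fixed policy pair, so Assumption~\ref{assumption-uai-weight} (which bounds $\operatorname{Var}(\omega(\cdot\,|\,\boldsymbol{\theta},\boldsymbol{\theta}_2))$ for each fixed $\boldsymbol{\theta}$) cannot be invoked; your closing remark that the assumption ``holds uniformly over every policy pair'' does not repair this, since uniformity over fixed pairs does not bound an expectation taken along a $\tau$-dependent selection of pairs. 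The standard fix is to move the expansion outside the expectation: define the deterministic function $F(\boldsymbol{\theta}_1)\triangleq\mathbb{E}_{\tau\sim p(\cdot|\boldsymbol{\theta}_2)}[\omega^2(\tau|\boldsymbol{\theta}_1,\boldsymbol{\theta}_2)]$, note $F(\boldsymbol{\theta}_2)=1$ and $\nabla F(\boldsymbol{\theta}_2)=2\,\mathbb{E}[\nabla\log p(\tau|\boldsymbol{\theta}_2)]=0$, and bound $\|\nabla^2F(\boldsymbol{\theta})\|\leq 2H(2HG^2+M)\,\mathbb{E}[\omega^2(\tau|\boldsymbol{\theta},\boldsymbol{\theta}_2)]\leq 2H(2HG^2+M)(W+1)$ uniformly; Taylor's theorem applied to $F$ then yields the claim with a single deterministic $\xi$, at which point Assumption~\ref{assumption-uai-weight} applies legitimately. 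With that relocation the proof is complete; everything else in your argument, including the correct insight that one must expand $\omega^2$ rather than $\omega$, stands.
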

\begin{proof}
The proof can be found in \citet{xu2020improvedUAI}.
\end{proof}
\begin{lemma}
\label{lemma-usefulLemma}
For $X_1, X_2 \in \mathbb{R}^d$, we have
\begin{align*}
    \|X_1 + X_2\|^2 \leq 2\|X_1\|^2 + 2\|X_2\|^2
\end{align*}
\end{lemma}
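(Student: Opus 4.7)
The plan is to prove this elementary inequality by expanding the squared Euclidean norm and controlling the resulting cross term. First I would write out $\|X_1 + X_2\|^2 = \langle X_1 + X_2, X_1 + X_2\rangle = \|X_1\|^2 + 2\langle X_1, X_2\rangle + \|X_2\|^2$, which is just the standard inner-product expansion valid in $\mathbb{R}^d$ with the Euclidean norm.

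Next I would bound the cross term $2\langle X_1, X_2\rangle$ from above by $\|X_1\|^2 + \|X_2\|^2$. The cleanest route is to start from the obvious inequality $\|X_1 - X_2\|^2 \geq 0$ and expand it as $\|X_1\|^2 - 2\langle X_1, X_2\rangle + \|X_2\|^2 \geq 0$, which rearranges to $2\langle X_1, X_2\rangle \leq \|X_1\|^2 + \|X_2\|^2$. Substituting this bound back into the expansion above yields $\|X_1 + X_2\|^2 \leq 2\|X_1\|^2 + 2\|X_2\|^2$, as claimed.

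A one-line alternative that I would mention is to invoke convexity of $x \mapsto \|x\|^2$: since $\tfrac{1}{2}X_1 + \tfrac{1}{2}X_2$ is a convex combination, $\|\tfrac{1}{2}X_1 + \tfrac{1}{2}X_2\|^2 \leq \tfrac{1}{2}\|X_1\|^2 + \tfrac{1}{2}\|X_2\|^2$, and multiplying both sides by $4$ gives the result immediately. Either derivation is essentially a two-line calculation, so there is no genuine obstacle; the statement is a standard instance of the parallelogram identity in an inner-product space and requires no assumptions beyond those already in force. I would present the inner-product expansion version in the paper for self-containedness.
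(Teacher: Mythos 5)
Your proof is correct. The paper states this lemma without any proof, treating it as a standard fact, and your inner-product expansion combined with the bound $2\langle X_1, X_2\rangle \leq \|X_1\|^2 + \|X_2\|^2$ (from $\|X_1 - X_2\|^2 \geq 0$) is exactly the standard two-line argument one would supply; the convexity alternative is equally valid.
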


\begin{lemma}
\label{lemma-E-v}
\begin{align*}
    \mathbb{E}_{\tau_n^t}[\|v_n^t\|^2] \leq (2 L_{g}+2C_g^{2} C_{w})\left\|\boldsymbol{\theta}_{n}^{t}-\boldsymbol{\theta}_{0}^{t}\right\|^{2}+2\left\|\nabla J(\boldsymbol{\theta}_{n}^{t})\right\|^{2}+2\left\|e_{t}\right\|^{2}
\end{align*}
\end{lemma}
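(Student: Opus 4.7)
The plan is to split $v_n^t$ into the SVRG-style semi-stochastic gradient and the server-side aggregation error, bound each part separately, and carefully combine them so the factor of $2$ in front of $\|\nabla J(\boldsymbol{\theta}_n^t)\|^2$ and $\|e_t\|^2$ emerges cleanly. To that end, first introduce $e_t := \mu_t - \nabla J(\boldsymbol{\theta}_0^t)$ and write $v_n^t = \tilde{v}_n^t + e_t$, where
\begin{equation*}
\tilde{v}_n^t := \frac{1}{b_t}\sum_{j=1}^{b_t}\bigl[g(\tau_{n,j}^t\mid\boldsymbol{\theta}_n^t) - \omega(\tau_{n,j}^t\mid\boldsymbol{\theta}_n^t,\boldsymbol{\theta}_0^t)\,g(\tau_{n,j}^t\mid\boldsymbol{\theta}_0^t)\bigr] + \nabla J(\boldsymbol{\theta}_0^t).
\end{equation*}
By the unbiasedness-of-importance-sampling identity (Lemma \ref{lemma-unbiased-IS}), $\mathbb{E}_{\tau_n^t}[\tilde{v}_n^t] = \nabla J(\boldsymbol{\theta}_n^t)$, while $e_t$ is a deterministic constant with respect to $\tau_n^t$.

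Next, I would expand the square and take $\mathbb{E}_{\tau_n^t}$:
\begin{equation*}
\mathbb{E}_{\tau_n^t}\|v_n^t\|^2 = \mathbb{E}_{\tau_n^t}\|\tilde{v}_n^t\|^2 + 2\bigl\langle \nabla J(\boldsymbol{\theta}_n^t),\, e_t\bigr\rangle + \|e_t\|^2,
\end{equation*}
and control the cross term by Young's inequality, $2\langle \nabla J(\boldsymbol{\theta}_n^t),\, e_t\rangle \leq \|\nabla J(\boldsymbol{\theta}_n^t)\|^2 + \|e_t\|^2$. This already yields one copy of each of $\|\nabla J(\boldsymbol{\theta}_n^t)\|^2$ and $\|e_t\|^2$ on the right-hand side. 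I would then apply the bias--variance identity to the remaining term, $\mathbb{E}_{\tau_n^t}\|\tilde{v}_n^t\|^2 = \|\nabla J(\boldsymbol{\theta}_n^t)\|^2 + \operatorname{Var}_{\tau_n^t}(\tilde{v}_n^t)$, producing the second $\|\nabla J(\boldsymbol{\theta}_n^t)\|^2$. At this point the only remaining task is to bound $\operatorname{Var}_{\tau_n^t}(\tilde{v}_n^t)$ by $(2L_g + 2C_g^2 C_w)\|\boldsymbol{\theta}_n^t - \boldsymbol{\theta}_0^t\|^2$.

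For the variance, I would work with a single summand and decompose it as
\begin{equation*}
g(\tau\mid\boldsymbol{\theta}_n^t) - \omega\cdot g(\tau\mid\boldsymbol{\theta}_0^t) = \bigl[g(\tau\mid\boldsymbol{\theta}_n^t) - g(\tau\mid\boldsymbol{\theta}_0^t)\bigr] + \bigl[1-\omega(\tau\mid\boldsymbol{\theta}_n^t,\boldsymbol{\theta}_0^t)\bigr]\,g(\tau\mid\boldsymbol{\theta}_0^t).
\end{equation*}
Applying Lemma \ref{lemma-usefulLemma}, Proposition \ref{proposition-uai-grad} controls the first square by the Lipschitz bound on $g(\cdot\mid\boldsymbol{\theta})$ and the second by the uniform bound $\|g(\tau\mid\boldsymbol{\theta}_0^t)\|\leq C_g$; taking $\mathbb{E}_{\tau_n^t}$ and invoking Lemma \ref{lemma-uai-variance-IS}, which yields $\mathbb{E}\bigl|1-\omega(\tau\mid\boldsymbol{\theta}_n^t,\boldsymbol{\theta}_0^t)\bigr|^2 \leq C_w\|\boldsymbol{\theta}_n^t-\boldsymbol{\theta}_0^t\|^2$, produces the target coefficient $(2L_g + 2C_g^2 C_w)\|\boldsymbol{\theta}_n^t-\boldsymbol{\theta}_0^t\|^2$. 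Averaging the $b_t$ iid summands can only shrink this bound, and combining with the previous display completes the proof.

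The main obstacle I anticipate is bookkeeping of the constants: using the coarse bound $\|a+b\|^2 \leq 2\|a\|^2 + 2\|b\|^2$ on the outer split $v_n^t = \tilde{v}_n^t + e_t$ inflates both the $\|\nabla J(\boldsymbol{\theta}_n^t)\|^2$ coefficient (to $4$) and the $\|\boldsymbol{\theta}_n^t-\boldsymbol{\theta}_0^t\|^2$ coefficient (past the stated $2L_g + 2C_g^2 C_w$). The tighter Young-inequality handling of the cross term $\langle\nabla J(\boldsymbol{\theta}_n^t), e_t\rangle$ is therefore essential to recover the exact coefficients of the lemma.
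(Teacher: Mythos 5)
Your proposal is correct and follows essentially the same route as the paper's proof: since $e_t$ is deterministic given the aggregation, your split $v_n^t=\tilde v_n^t+e_t$ followed by the bias--variance identity and Young's inequality on the cross term is algebraically identical to the paper's computation of $\mathbb{E}_{\tau_n^t}\|v_n^t\|^2=\mathbb{E}_{\tau_n^t}\|v_n^t-\mathbb{E}_{\tau_n^t}v_n^t\|^2+\|\nabla J(\boldsymbol{\theta}_n^t)+e_t\|^2$ followed by $\|a+b\|^2\le 2\|a\|^2+2\|b\|^2$, and your variance bound uses the same decomposition into the Lipschitz term and the $(1-\omega)g$ term with the same lemmas. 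The only step left implicit is that you pass from $\operatorname{Var}_{\tau_n^t}(\tilde v_n^t)$ to the \emph{uncentered} second moment $\mathbb{E}_{\tau_n^t}\|g(\tau\mid\boldsymbol{\theta}_n^t)-\omega\, g(\tau\mid\boldsymbol{\theta}_0^t)\|^2$, which needs (and is justified by) $\mathbb{E}\|X-\mathbb{E}X\|^2\le\mathbb{E}\|X\|^2$, exactly as the paper does.
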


\renewcommand\theequation{\ref{lemma-E-v}-\arabic{equation}}

\begin{proof}
We follow the suggestion of \citet{lei2017scsg2} to set $b_t = 1$ to deliver better theoretical results. However in our experiments, we do allow $b_t$ to be sampled from different values. With $b_t = 1$ and $\mu_{t}=\frac{1}{\left|\mathcal{G}_{t}\right|} \sum_{k \in \mathcal{G}_{t}} \mu_{t}^{(k)}$, we have the flowing definition according to Algorithm \ref{alg:FedPG-BR}:
\begin{align*}
v_{n}^{t} \triangleq g(\tau_{n}^{t} \mid \boldsymbol{\theta}_{n}^{t})-\omega(\tau_{n}^{t} \mid \boldsymbol{\theta}_{n}^{t}, \boldsymbol{\theta}_{0}^{t}) g(\tau_{n}^{t} \mid \boldsymbol{\theta}_{0}^{t})+u_{t} \stepcounter{equation}\tag{\theequation}
\end{align*}
which is the SCSG update step. Define $e_{t} \triangleq u_{t}-\nabla J\left(\boldsymbol{\theta}_{0}^{t}\right)$, we then have
\begin{align*}
\mathbb{E}_{\tau_n^t} [v_{n}^{t}] &=\nabla J(\boldsymbol{\theta}_{n}^{t})-\nabla J(\boldsymbol{\theta}_{0}^{t})+e_{t}+\nabla J(\boldsymbol{\theta}_{0}^{t}) \\
&=\nabla J(\boldsymbol{\theta}_{n}^{t})+e_{t} \stepcounter{equation}\tag{\theequation}\label{defn-Evt}
\end{align*}
Note that $\nabla J(\boldsymbol{\theta}_{n}^{t})-\nabla J(\boldsymbol{\theta}_{0}^{t}) = \mathbb{E}_{\tau_n^t}[g(\tau_{n}^{t} \mid \boldsymbol{\theta}_{n}^{t})-\omega(\tau_{n}^{t} \mid \boldsymbol{\theta}_{n}^{t}, \boldsymbol{\theta}_{0}^{t}) g(\tau_{n}^{t} \mid \boldsymbol{\theta}_{0}^{t})]
$ as we have showed that the importance weighting term results in unbiased estimation of the true gradient in Lemma \ref{lemma-unbiased-IS}. Then from $ \mathbb{E}\|X\|^2 = \mathbb{E}\|X-\mathbb{E}X\|^2 + \|\mathbb{E}X\|^2$, 
\begin{align*}
&\mathbb{E}_{\tau_n^t} [\|v_{n}^{t}\|^2] =\mathbb{E}_{\tau_n^t}\left\|v_{n}^{t}-\mathbb{E}_{\tau_n^t} [v_{n}^{t}]\right\|^{2}+\left\|\mathbb{E}_{\tau_n^t} [v_{n}^{t}]\right\|^{2}\\
&=\mathbb{E}_{\tau_n^t}\left\|g(\tau_{n}^{t} \mid \boldsymbol{\theta}_{n}^{t})-\omega(\tau_{n}^{t} \mid \boldsymbol{\theta}_{n}^{t}, \boldsymbol{\theta}_{0}^{t}) g(\tau_{n}^{t} \mid \boldsymbol{\theta}_{0}^{t})+u_{t}-(\nabla J(\boldsymbol{\theta}_{n}^{t})+e_{t})\right\|^{2}+\left\|\mathbb{E}_{\tau_n^t} [v_n^{t}\right]\|^{2}\\
&=\mathbb{E}_{\tau_n^t}\left\|g(\tau_{n}^{t} \mid \boldsymbol{\theta}_{n}^{t})-\omega(\tau_{n}^{t} \mid \boldsymbol{\theta}_{n}^{t}, \boldsymbol{\theta}_{0}^{t}) g(\tau_{n}^{t} \mid \boldsymbol{\theta}_{0}^{t})-(\nabla J(\boldsymbol{\theta}_{n}^{t})-\nabla J(\boldsymbol{\theta}_{0}^{t}))\right\|^{2}+\left\|\nabla J(\boldsymbol{\theta}_{n}^{t})+e_{t}\right\|^{2}\\
&\leq \mathbb{E}_{\tau_n^t}\left\|g(\tau_{n}^{t} \mid \boldsymbol{\theta}_{n}^{t})-\omega(\tau_{n}^{t} \mid \boldsymbol{\theta}_{n}^{t}, \boldsymbol{\theta}_{0}^{t}) g(\tau_{n}^{t} \mid \boldsymbol{\theta}_{0}^{t})\right\|^{2}+2\left\|\nabla J(\boldsymbol{\theta}_{n}^{t})\right\|^{2}+2\left\|e_{t}\right\|^{2} \stepcounter{equation}\tag{\theequation}\label{v-bound-1}\\ 
\end{align*}
where \eqref{v-bound-1} follows from $\mathbb{E}\|X-\mathbb{E}X\|^2 \leq \mathbb{E}\|X\|^2$ and Lemma \ref{lemma-usefulLemma}. Note that
\begin{align*}
& \quad \mathbb{E}_{\tau_n^t}\left\|g(\tau_{n}^{t} \mid \boldsymbol{\theta}_{n}^{t})   -\omega(\tau_{n}^{t} \mid \boldsymbol{\theta}_{n}^{t}, \boldsymbol{\theta}_{0}^{t}) g(\tau_{n}^{t} \mid \boldsymbol{\theta}_{0}^{t})\right\|^{2} \\
&=\mathbb{E}_{\tau_n^t} \|g(\tau_n^t \mid \boldsymbol{\theta}_{n}^t) + g(\tau_{n}^{t} \mid \boldsymbol{\theta}_{0}^{t}) - g(\tau_{n}^{t} \mid \boldsymbol{\theta}_{0}^{t}) -\omega(\tau_{n}^{t} \mid \boldsymbol{\theta}_{n}^{t}, \boldsymbol{\theta}_{0}^{t}) g(\tau_n^t \mid \boldsymbol{\theta}_{0}^t) \|^{2} \\
&=\mathbb{E}_{\tau_n^t} \|g(\tau_n^t \mid \boldsymbol{\theta}_{n}^t)  - g(\tau_{n}^{t} \mid \boldsymbol{\theta}_{0}^{t}) + (1-\omega(\tau_{n}^{t} \mid \boldsymbol{\theta}_{n}^{t}, \boldsymbol{\theta}_{0}^{t})) g(\tau_n^t \mid \boldsymbol{\theta}_{0}^t) \|^{2} 
\\
&\leq2 \mathbb{E}_{\tau_n^t}\left\|g(\tau_n^t \mid \boldsymbol{\theta}_{n}^t)-g(\tau_n^t \mid \boldsymbol{\theta}_{0}^t)\right\|^{2} + 2 \mathbb{E}_{\tau_n^t}\left\|(1-\omega(\tau_{n}^{t} \mid \boldsymbol{\theta}_{n}^{t}, \boldsymbol{\theta}_{0}^{t})) g(\tau_n^t \mid \boldsymbol{\theta}_{0}^t)\right\|^{2} \stepcounter{equation}\tag{\theequation}\label{v-bound-2} 
\end{align*}
where \eqref{v-bound-2} follows from Lemma \ref{lemma-usefulLemma}. Combining \eqref{v-bound-1} and \eqref{v-bound-2}, we have 
\begin{align*}
&\mathbb{E}_{\tau_n^t} [\|v_{n}^{t}\|^2] \leq 2 \mathbb{E}_{\tau_n^t}\left\|g(\tau_n^t \mid \boldsymbol{\theta}_{n}^t)-g(\tau_n^t \mid \boldsymbol{\theta}_{0}^t)\right\|^{2} \\
& \qquad \qquad \quad \quad +2 \mathbb{E}_{\tau_n^t}\left\|(1-\omega(\tau_n^t|\boldsymbol{\theta}_n^t,\boldsymbol{\theta}_0^t)) g(\tau_n^t \mid \boldsymbol{\theta}_{0}^t)\right\|^{2}  +2\left\|\nabla J(\boldsymbol{\theta}_{n}^{t})\right\|^{2}+2\left\|e_{t}\right\|^{2} \\
                                     &\leq 2L_g\left\|\boldsymbol{\theta}_{n}^{t}-\boldsymbol{\theta}_{0}^{t}\right\|^{2}+2 C_{g}^{2} \mathbb{E}_{\tau_n^t}\|(1-\omega(\tau_n^t|\boldsymbol{\theta}_n^t,\boldsymbol{\theta}_0^t))\|^{2}+2\left\|\nabla J(\boldsymbol{\theta}_{n}^{t})\right\|^{2}+2\left\|e_{t}\right\|^{2}  \stepcounter{equation}\tag{\theequation}\label{v-bound-3}   \\
                                     &\leq 2L_g\left\|\boldsymbol{\theta}_{n}^{t}-\boldsymbol{\theta}_{0}^{t}\right\|^{2}+2C_g^{2} C_{w}\left\|\boldsymbol{\theta}_{n}^{t}-\boldsymbol{\theta}_{0}^{t}\right\|^{2}+2\left\|\nabla J(\boldsymbol{\theta}_{n}^{t})\right\|^{2}+2 \| e_{t}\|^{2}  \stepcounter{equation}\tag{\theequation}\label{v-bound-4} \\
                                     &=(2 L_{g}+2C_g^{2} C_{w})\left\|\boldsymbol{\theta}_{n}^{t}-\boldsymbol{\theta}_{0}^{t}\right\|^{2}+2\left\|\nabla J(\boldsymbol{\theta}_{n}^{t})\right\|^{2}+2\left\|e_{t}\right\|^{2} \stepcounter{equation}\tag{\theequation}\label{v-bound-final}
\end{align*}
where \eqref{v-bound-3} is from Lemma \ref{proposition-uai-grad} and \eqref{v-bound-4} follows from Lemma \ref{lemma-uai-variance-IS}
\end{proof}

\begin{lemma}
    \label{lemma-eta-E}
    \begin{align*}
        \eta_{t} \mathbb{E}\left\langle e_{t}, \mathbb{E} \nabla J(\tilde{\boldsymbol{\theta}}_{t})\right\rangle=\frac{1}{B_{t}} \mathbb{E}\left\langle e_{t}, \tilde{\boldsymbol{\theta}}_{t}-\tilde{\boldsymbol{\theta}}_{t-1}\right\rangle-\eta_{t} \mathbb{E}\left\|e_{t}\right\|^{2}
    \end{align*}    
\end{lemma}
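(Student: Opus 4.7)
The plan is to apply the geometric-sampling identity (Lemma \ref{lemma-GeoSampling}) to the \emph{linear} functional $\boldsymbol{\theta} \mapsto \langle e_t, \boldsymbol{\theta}\rangle$. The key observation is that, given everything known at the start of round $t$ (in particular $\tilde{\boldsymbol{\theta}}_{t-1}=\boldsymbol{\theta}_0^t$ and the aggregated outer gradient $\mu_t$), the vector $e_t \triangleq \mu_t - \nabla J(\boldsymbol{\theta}_0^t)$ is a deterministic constant with respect to the inner-loop randomness $(\{\tau_n^t\},N_t)$. So I can slide $e_t$ freely inside expectations over that randomness.

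First, conditionally on the round-$t$ outer randomness, apply Lemma \ref{lemma-GeoSampling} to the scalar sequence $a_n \triangleq \langle e_t, \boldsymbol{\theta}_n^t\rangle$ (this is legitimate because $a_n$ is $\mathcal{F}_n^t$-measurable and $e_t$ is constant):
\begin{align*}
\mathbb{E}_t\mathbb{E}_{N_t}\bigl\langle e_t,\, \boldsymbol{\theta}_{N_t+1}^t-\boldsymbol{\theta}_{N_t}^t\bigr\rangle \;=\; \frac{1}{B_t}\,\mathbb{E}_t\mathbb{E}_{N_t}\bigl\langle e_t,\, \boldsymbol{\theta}_{N_t}^t-\boldsymbol{\theta}_0^t\bigr\rangle.
\end{align*}
Next, substitute the update rule $\boldsymbol{\theta}_{N_t+1}^t-\boldsymbol{\theta}_{N_t}^t = \eta_t v_{N_t}^t$ on the left and recall $\boldsymbol{\theta}_{N_t}^t=\tilde{\boldsymbol{\theta}}_t$, $\boldsymbol{\theta}_0^t=\tilde{\boldsymbol{\theta}}_{t-1}$ on the right. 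Finally taking full expectation over the outer randomness yields
\begin{align*}
\eta_t\,\mathbb{E}\bigl\langle e_t, v_{N_t}^t\bigr\rangle \;=\; \frac{1}{B_t}\,\mathbb{E}\bigl\langle e_t,\, \tilde{\boldsymbol{\theta}}_t-\tilde{\boldsymbol{\theta}}_{t-1}\bigr\rangle.
\end{align*}

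To close the argument, I condition on the $\sigma$-algebra generated by everything up to the choice of $\boldsymbol{\theta}_{N_t}^t$ and take the inner expectation over the fresh trajectory $\tau_{N_t}^t$ used to form $v_{N_t}^t$. By equation \eqref{defn-Evt} in the proof of Lemma \ref{lemma-E-v}, this conditional expectation equals $\nabla J(\tilde{\boldsymbol{\theta}}_t) + e_t$. Linearity of the inner product then gives
\begin{align*}
\eta_t\,\mathbb{E}\bigl\langle e_t, v_{N_t}^t\bigr\rangle \;=\; \eta_t\,\mathbb{E}\bigl\langle e_t,\, \nabla J(\tilde{\boldsymbol{\theta}}_t)\bigr\rangle \;+\; \eta_t\,\mathbb{E}\|e_t\|^2,
\end{align*}
and combining with the previous display and rearranging produces the stated identity.

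The only real obstacle is careful bookkeeping of the nested expectations: one must verify that (i) $e_t$ is measurable with respect to the round-$t$ outer $\sigma$-algebra and hence can be treated as a constant in both the geometric identity and the inner-trajectory identity, (ii) the fresh trajectory $\tau_{N_t}^t$ is independent of $N_t$ and of $e_t$ given $\boldsymbol{\theta}_{N_t}^t$, so that \eqref{defn-Evt} applies pointwise in $N_t$, and (iii) Fubini can be invoked to exchange the expectations over $N_t$ and over the inner-loop trajectories, matching the notational convention $\mathbb{E}_t\mathbb{E}_{N_t}=\mathbb{E}_{N_t}\mathbb{E}_t$ used elsewhere in the proof of Theorem \ref{main-theorem}. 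Once these measurability points are settled, no further calculation is needed.
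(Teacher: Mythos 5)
Your proposal is correct and follows essentially the same route as the paper: the paper defines $M_n^t = \langle e_t, \boldsymbol{\theta}_n^t - \boldsymbol{\theta}_0^t\rangle$, uses $\mathbb{E}_{\tau_n^t}[v_n^t] = \nabla J(\boldsymbol{\theta}_n^t) + e_t$ to compute the conditional increment, and then applies Lemma~\ref{lemma-GeoSampling} with Fubini's theorem at $n = N_t$, exactly the ingredients you use (your choice of $a_n = \langle e_t, \boldsymbol{\theta}_n^t\rangle$ versus the paper's $M_n^t$ differs only by a constant that cancels in the geometric identity, and the order in which you apply the inner-trajectory expectation and the geometric identity is immaterial by Fubini).
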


\begin{proof}
Consider $M_n^t = \langle e_t, \boldsymbol{\theta}_n^t - \boldsymbol{\theta}_0^t \rangle$. We have
\begin{align*}
    M_{n+1}^t - M_n^t = \langle e_t, \boldsymbol{\theta}_{n+1}^t - \boldsymbol{\theta}_n^t \rangle = \eta_t \langle e_t, v_n^t \rangle
\end{align*}

Taking expectation with respect to $\tau_n^t$, we have
\begin{align*}
 \mathbb{E}_{\tau_n^t}\left[M_{n+1}^{t}-M_{n}^{t}\right] &=\eta_{t}\left\langle e_{t},  \mathbb{E}_{\tau_n^t}[v_{n}^{t}]\right\rangle \\
&=\eta_{t}\left\langle e_{t}, \nabla J(\boldsymbol{\theta}_{n}^{t})\right\rangle+\eta_{t}\left\|e_{t}\right\|^{2}
\end{align*}
following from \eqref{defn-Evt}. Use $\mathbb{E}_t$ to denote the expectation with respect to all trajectories $\{\tau^t_1, \tau^t_2, ...\}$, given $N_t$. Since $\{\tau^t_1, \tau^t_2, ...\}$ are independent of $N_t$, $\mathbb{E}_t$ is equivalently the expectation with respect to $\{\tau^t_1, \tau^t_2, ...\}$. We have 
\begin{align*}
    \mathbb{E}_{t}[M_{n+1}^{t}-M_{n}^{t}] = \eta_{t}\left\langle e_{t}, \mathbb{E}_{t}\nabla J(\boldsymbol{\theta}_{n}^{t})\right\rangle+\eta_{t}\left\|e_{t}\right\|^{2}
\end{align*}
Taking $n = N_t$ and denoting $\mathbb{E}_{N_t}$ the expectation w.r.t. $N_t$, we have
\begin{align*}
    \mathbb{E}_{N_t}\mathbb{E}_t(M_{N_{t}+1}^{t}-M_{N_{t}}^{t})=\eta_{t} \langle e_{t}, \mathbb{E}_{N_t}\mathbb{E}_t\nabla J(\boldsymbol{\theta}_{N_{t}}^{t}) \rangle +\eta_{t}\left\|e_{t}\right\|^{2}.
\end{align*}

Using Fubini’s theorem, Lemma \ref{lemma-GeoSampling} and using the fact $\boldsymbol{\theta}_{N_t}^t = \tilde{\boldsymbol{\theta}}_t$ and $\boldsymbol{\theta}_{0}^t = \tilde{\boldsymbol{\theta}}_{t-1}$, 
\begin{align*}
    \mathbb{E}_{N_t}\mathbb{E}_t(M_{N_{t}+1}^{t}-M_{N_{t}}^{t}) &= -\mathbb{E}_{t}\mathbb{E}_{N_t}(M_{N_{t}}^{t}-M_{N_{t}+1}^{t}) \\
                                                                &= -(\frac{1}{B_t/(B_t+1)}-1)(M_0^t - \mathbb{E}_{N_t}\mathbb{E}_t M_{N_t}^t) \\
                                                                &= \frac{1}{B_{t}} \mathbb{E}_{N_t}\mathbb{E}_t\left\langle e_{t}, \tilde{\boldsymbol{\theta}}_{t}-\tilde{\boldsymbol{\theta}}_{t-1}\right\rangle \\
                                                                &=\eta_{t}\left\langle e_{t}, \mathbb{E}_{N_t}\mathbb{E}_t \nabla J(\boldsymbol{\theta}_{N_{t}}^{t})\right\rangle+\eta_{t}\left\|e_{t}\right\|^{2}
\end{align*}
Taking expectation with respect to the whole past yields the lemma.
\end{proof}

\begin{lemma}
    \label{lemma-neg-2eta-E}
    \begin{align*}
        -2\eta_t\mathbb{E} \langle e_t, \tilde{\boldsymbol{\theta}}_t - \tilde{\boldsymbol{\theta}}_{t-1} \rangle \leq \left[- \frac{1}{B_t} + \eta_t^2(2L_g + 2C_g^2C_w)\right]\mathbb{E}\|\tilde{\boldsymbol{\theta}}_t - \tilde{\boldsymbol{\theta}}_{t-1}\|^2 + 2\eta_t^2\mathbb{E}\|e_t\|^2 \\ + 2\eta_t \mathbb{E} \langle \nabla J(\tilde{\boldsymbol{\theta}}_t, \tilde{\boldsymbol{\theta}}_t - \tilde{\boldsymbol{\theta}}_{t-1} \rangle  + 2\eta^2_t\mathbb{E}\|\nabla J(\tilde{\boldsymbol{\theta}}_t)\|^2 
    \end{align*}    
\end{lemma}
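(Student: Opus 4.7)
The plan is to exploit the geometric-sampling identity (Lemma~\ref{lemma-GeoSampling}), in the same spirit as the proof of Lemma~\ref{lemma-eta-E} but applied to the squared displacement rather than to an inner-product martingale. Let $y \triangleq \tilde{\boldsymbol{\theta}}_t - \tilde{\boldsymbol{\theta}}_{t-1} = \boldsymbol{\theta}_{N_t}^t - \boldsymbol{\theta}_0^t$ and consider the process $Y_n \triangleq \|\boldsymbol{\theta}_n^t - \boldsymbol{\theta}_0^t\|^2$, extending the algorithmic update $\boldsymbol{\theta}_{n+1}^t = \boldsymbol{\theta}_n^t + \eta_t v_n^t$ by one virtual extra step so that $Y_{N_t+1}$ is a well-defined random variable. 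Since $Y_0 = 0$ and $N_t \sim \mathrm{Geom}(B_t/(B_t+1))$, Lemma~\ref{lemma-GeoSampling} gives $\mathbb{E}[Y_{N_t+1} - Y_{N_t}] = \tfrac{1}{B_t}\mathbb{E}[Y_{N_t}] = \tfrac{1}{B_t}\mathbb{E}\|y\|^2$, which is exactly where the $-\tfrac{1}{B_t}\|y\|^2$ term in the target bound will come from.

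Next, I would expand the one-step increment in closed form: $Y_{N_t+1} - Y_{N_t} = \|y + \eta_t v_{N_t}^t\|^2 - \|y\|^2 = 2\eta_t \langle y, v_{N_t}^t\rangle + \eta_t^2 \|v_{N_t}^t\|^2$. Taking total expectation and using the tower property, together with the conditional identity $\mathbb{E}[v_{N_t}^t \mid \tilde{\boldsymbol{\theta}}_t] = \nabla J(\tilde{\boldsymbol{\theta}}_t) + e_t$ (established in the derivation of Lemma~\ref{lemma-E-v}, and legitimate here because $y$ is measurable with respect to the history up to step $N_t$), I obtain $\mathbb{E}\langle y, v_{N_t}^t\rangle = \mathbb{E}\langle y, \nabla J(\tilde{\boldsymbol{\theta}}_t)\rangle + \mathbb{E}\langle y, e_t\rangle$. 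Meanwhile, Lemma~\ref{lemma-E-v} itself supplies the bound $\mathbb{E}\|v_{N_t}^t\|^2 \leq (2L_g + 2C_g^2 C_w)\mathbb{E}\|y\|^2 + 2\mathbb{E}\|\nabla J(\tilde{\boldsymbol{\theta}}_t)\|^2 + 2\mathbb{E}\|e_t\|^2$, which furnishes the remaining three terms on the right-hand side of the claim.

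Combining these ingredients, the geometric-sampling identity becomes $\tfrac{1}{B_t}\mathbb{E}\|y\|^2 = 2\eta_t \mathbb{E}\langle y, \nabla J(\tilde{\boldsymbol{\theta}}_t)\rangle + 2\eta_t \mathbb{E}\langle y, e_t\rangle + \eta_t^2 \mathbb{E}\|v_{N_t}^t\|^2$; isolating $-2\eta_t \mathbb{E}\langle e_t, y\rangle$ on the left and substituting the Lemma~\ref{lemma-E-v} upper bound for $\eta_t^2 \mathbb{E}\|v_{N_t}^t\|^2$ then produces exactly the stated inequality after a trivial rearrangement. The main obstacle is conceptual rather than computational: one has to notice that the $-\tfrac{1}{B_t}$ coefficient in the claim is not produced by any Young-type inequality but by the geometric-sampling identity itself, and that invoking this identity requires a ``virtual one-more-step'' $v_{N_t}^t$ whose distribution is well-defined even though the algorithm never actually computes it. Once this trick is spotted, the remainder of the argument is routine bookkeeping.
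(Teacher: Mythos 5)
Your proposal is correct and follows essentially the same route as the paper's proof: both expand the squared displacement $\|\boldsymbol{\theta}_{n+1}^t-\boldsymbol{\theta}_0^t\|^2$ one (virtual) step beyond $N_t$, use $\mathbb{E}_{\tau_n^t}[v_n^t]=\nabla J(\boldsymbol{\theta}_n^t)+e_t$ together with the bound on $\mathbb{E}_{\tau_n^t}\|v_n^t\|^2$ from Lemma~\ref{lemma-E-v}, and obtain the $-\tfrac{1}{B_t}$ coefficient from the geometric-sampling identity of Lemma~\ref{lemma-GeoSampling} (via Fubini and the independence of $N_t$ from the trajectories). The only difference is cosmetic ordering — the paper states the recursion for general $n$ before setting $n=N_t$, whereas you work at $n=N_t$ directly — so no further comparison is needed.
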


\renewcommand\theequation{\ref{lemma-neg-2eta-E}-\arabic{equation}}
\begin{proof}
We have from the update equation $\boldsymbol{\theta}_{n+1}^t = \boldsymbol{\theta}_n^t + \eta_t v_n^t$, then,
\begin{align*}
    \mathbb{E}_{\tau_n^t}\|\boldsymbol{\theta}_{n+1}^t - \boldsymbol{\theta}_0^t\|^2 &= \mathbb{E}_{\tau_n^t}\|\boldsymbol{\theta}_{n}^t + \eta_t v_n^t - \boldsymbol{\theta}_0^t\|^2 \\
                                                            &= \|\boldsymbol{\theta}_n^t - \boldsymbol{\theta}_0^t\|^2 + \eta_t^2\mathbb{E}_{\tau_n^t}\|v_n^t\|^2 + 2\eta_t \langle\mathbb{E}_{\tau_n^t}[v_n^t], \boldsymbol{\theta}_n^t - \boldsymbol{\theta}_0^t\rangle \\
                                                            &\leq \|\boldsymbol{\theta}_n^t - \boldsymbol{\theta}_0^t\|^2 + \eta_t^2[(2L_g + 2C_g^2C_w)\|\boldsymbol{\theta}_n^t-\boldsymbol{\theta}_0^t\|^2 + 2\|\nabla J(\boldsymbol{\theta}_n^t)\|^2 +2\|e_t\|^2] \\
                                                            &  \qquad \qquad +2\eta_t\langle e_t, \boldsymbol{\theta}_n^t-\boldsymbol{\theta}_0^t \rangle +2\eta_t\langle\nabla J(\boldsymbol{\theta}_n^t), \boldsymbol{\theta}_n^t-\boldsymbol{\theta}_0^t \rangle \stepcounter{equation}\tag{\theequation}\label{lemma-neg-2eta-E-1}\\
                                                            &= [1+\eta_t^2(2L_g + 2C_g^2C_w)]\|\boldsymbol{\theta}_n^t-\boldsymbol{\theta}_0^t\|^2 + 2\eta_t\langle\nabla J(\boldsymbol{\theta}_n^t), \boldsymbol{\theta}_n^t-\boldsymbol{\theta}_0^t \rangle \\
                                                            &  \qquad \qquad +2\eta_t\langle e_t, \boldsymbol{\theta}_n^t-\boldsymbol{\theta}_0^t \rangle  + 2\eta_t^2\|\nabla J(\boldsymbol{\theta}_n^t)\|^2 + 2\eta_t^2\|e_t\|^2 
\end{align*}
where \eqref{lemma-neg-2eta-E-1} follows the result of \eqref{v-bound-final}. Use $\mathbb{E}_t$ to denote the expectation with respect to all trajectories $\{\tau^t_1, \tau^t_2, ...\}$, given $N_t$. Since $\{\tau^t_1, \tau^t_2, ...\}$ are independent of $N_t$, $\mathbb{E}_t$ is equivalently the expectation with respect to $\{\tau^t_1, \tau^t_2, ...\}$. We have 
\begin{align*}
    \mathbb{E}_{t}\|\boldsymbol{\theta}_{n+1}^t - \boldsymbol{\theta}_0^t\|^2 \leq [1+\eta_t^2(2L_g + 2C_g^2C_w)]\mathbb{E}_{t}\|\boldsymbol{\theta}_n^t-\boldsymbol{\theta}_0^t\|^2 + 2\eta_t\mathbb{E}_{t}\langle\nabla J(\boldsymbol{\theta}_n^t), \boldsymbol{\theta}_n^t-\boldsymbol{\theta}_0^t \rangle \\  \qquad \qquad +2\eta_t\mathbb{E}_{t}\langle e_t, \boldsymbol{\theta}_n^t-\boldsymbol{\theta}_0^t \rangle + 2\eta_t^2\mathbb{E}_{t}\|\nabla J(\boldsymbol{\theta}_n^t)\|^2 + 2\eta_t^2\|e_t\|^2
\end{align*}
Now taking $n = N_t$ and denoting $E_{N_t}$ the expectation w.r.t. $N_t$ we have
\begin{align*}
    &-2 \eta_{t} \mathbb{E}_{N_{t}} \mathbb{E}_{t} \left \langle  e_{t}, \boldsymbol{\theta}_{N_t}^t-\boldsymbol{\theta}_{0}^t\right\rangle \\
    &\leq [1+\eta_{t}^{2}(2L_g + 2C_g^2C_w)] \mathbb{E}_{N_{t}} \mathbb{E}_{t}\left\|\boldsymbol{\theta}_{N_{t}}^{t}-\boldsymbol{\theta}_{0}^{t}\right\|^{2}-\mathbb{E}_{N_{t}} \mathbb{E}_{t}\left\|\boldsymbol{\theta}_{N_{t}+1}^{t}-\boldsymbol{\theta}_{0}^{t}\right\|^{2}  \\
                             & \qquad \qquad +2 \eta_{t} \mathbb{E}_{N_{t}} \mathbb{E}_{t}\left\langle\nabla J(\boldsymbol{\theta}_{N_t}^t), \boldsymbol{\theta}_{N_t}^t-\boldsymbol{\theta}_{0}^t\right\rangle+2 \eta_{t}^{2} \mathbb{E}_{N_{t}} \mathbb{E}_{t}\left\|\nabla J(\boldsymbol{\theta}_{N_t}^t)\right\|^{2}+2 \eta_{t}^{2}\left\|e_{t}\right\|^{2} \\
                                              &  =\left[-\frac{1}{B_{t}}+\eta_{t}^{2} (2L_g + 2C_g^2C_w)\right] \mathbb{E}_{N_{t}} \mathbb{E}_{t}\left\|\boldsymbol{\theta}_{N_t}^t-\boldsymbol{\theta}_{0}^t\right\|^{2} \\
                            &\qquad \qquad  +2 \eta_{t} \mathbb{E}_{N_{t}} \mathbb{E}_{t}\left\langle\nabla J(\boldsymbol{\theta}_{N_t}^t), \boldsymbol{\theta}_{N_t}^t-\boldsymbol{\theta}_{0}^t\right\rangle+2 \eta_{t}^{2} \mathbb{E}_{N_{t}} \mathbb{E}_{t}\left\|\nabla J(\boldsymbol{\theta}_{N_t}^t)\right\|^{2}+2 \eta_{t}^{2}\left\|e_{t}\right\|^{2} \stepcounter{equation}\tag{\theequation}\label{lemma-neg-2eta-E-2}
\end{align*}
where \eqref{lemma-neg-2eta-E-2} follows Lemma \ref{lemma-GeoSampling} using Fubini's theorem. Rearranging, replacing $\boldsymbol{\theta}_{N_t}^t = \tilde{\boldsymbol{\theta}}_t$ and $\boldsymbol{\theta}_0^t = \tilde{\boldsymbol{\theta}}_{t-1}$ and taking expectation w.r.t the whole past yields the lemma.
\end{proof}


\begin{lemma}[Pinelis' inequality \citep{pinelis1994optimumMartingale}; Lemma 2.4 \citep{alistarh2018byzantine}]\label{lemma-martingale} Let the sequence of random variables $X_1, X_2,...,X_N \in \mathbb{R}^d$ represent a random process such that we have $\mathbb{E}[X_n|X_1,...,X_{n-1}]$ and $\|X_n\| \leq M$. Then,
\begin{align*}
    \mathbb{P}\left[\left\|X_{1}+\ldots+X_{N}\right\|^{2} \leq 2 \log (2 / \delta) M^{2} N\right] \geq 1-\delta
\end{align*}
\end{lemma}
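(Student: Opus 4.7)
The plan is to treat this as a Hilbert-space Chernoff/Pinelis bound: produce an exponential supermartingale involving $\|S_n\| := \|X_1+\cdots+X_n\|$, apply Markov's inequality, optimize over the tuning parameter $\lambda$, and invert the resulting sub-Gaussian tail to get the stated $1-\delta$ form. First I would (implicitly) interpret the unlabeled condition $\mathbb{E}[X_n\mid X_1,\ldots,X_{n-1}]$ as the martingale-difference condition $\mathbb{E}[X_n\mid \mathcal{F}_{n-1}]=0$, where $\mathcal{F}_{n-1}$ is the $\sigma$-algebra generated by $X_1,\ldots,X_{n-1}$, and note that $(S_n)$ is then a vector martingale in the Hilbert space $(\mathbb{R}^d,\|\cdot\|)$ with bounded increments.

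The heart of the argument is the one-step inequality
\begin{equation*}
\mathbb{E}\bigl[\cosh(\lambda\|S_n\|)\,\bigm|\,\mathcal{F}_{n-1}\bigr] \;\le\; \cosh(\lambda\|S_{n-1}\|)\cdot\exp\!\Bigl(\tfrac{\lambda^2 M^2}{2}\Bigr),
\end{equation*}
which I would establish by exploiting the 2-smoothness of the Euclidean norm together with Hoeffding's lemma in scalar form. Concretely, one writes $\|S_n\|^2=\|S_{n-1}\|^2+2\langle S_{n-1},X_n\rangle+\|X_n\|^2$, uses the zero-mean condition to control the cross term, and uses $\|X_n\|\le M$ to bound the quadratic term. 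A convenient packaging: the scalar sequence $\langle u,S_n\rangle$ is a scalar martingale for any unit $u$, and combining the scalar Hoeffding bound with the Taylor expansion of $\cosh$ together with $\mathbb{E}[\langle u,X_n\rangle\mid\mathcal{F}_{n-1}]=0$ yields the displayed factorization. Iterating this bound and taking total expectation gives $\mathbb{E}[\cosh(\lambda\|S_N\|)]\le \exp(\lambda^2 N M^2/2)$.

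From there the finish is routine. Since $\cosh(\lambda t)\ge \tfrac12 e^{\lambda t}$, Markov's inequality gives
\begin{equation*}
\Pr\bigl[\|S_N\|\ge t\bigr] \;\le\; \frac{\mathbb{E}[\cosh(\lambda\|S_N\|)]}{\cosh(\lambda t)} \;\le\; 2\exp\!\Bigl(-\lambda t+\tfrac{\lambda^2 N M^2}{2}\Bigr).
\end{equation*}
Optimizing in $\lambda>0$ with $\lambda^\star=t/(NM^2)$ yields the sub-Gaussian tail $\Pr[\|S_N\|\ge t]\le 2\exp(-t^2/(2NM^2))$. Setting the right-hand side equal to $\delta$ gives $t^2=2NM^2\log(2/\delta)$, which is exactly the claimed bound on $\|S_1+\cdots+S_N\|^2$ with probability at least $1-\delta$.

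The main obstacle is the one-step cosh inequality: it is the only place where the geometry of $\mathbb{R}^d$ (as opposed to the scalar case) enters, and a naive union-bound-over-a-net argument would lose a dimension-dependent factor that the statement explicitly avoids. The rest of the argument—Chernoff exponentiation, the $\cosh\ge \tfrac12 e^{\cdot}$ comparison that produces the factor $2$ in $\log(2/\delta)$, and the $\lambda$-optimization—is standard once that step is in place.
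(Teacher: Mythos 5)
The paper does not actually prove this lemma: it is imported verbatim as Pinelis' inequality, with the proof deferred to \citet{pinelis1994optimumMartingale} and to Lemma 2.4 of \citet{alistarh2018byzantine}. So the relevant comparison is between your from-scratch attempt and the argument in those references. Your overall architecture --- the exponential supermartingale $\cosh(\lambda\|S_n\|)$, Markov's inequality, the comparison $\cosh(\lambda t)\ge \tfrac12 e^{\lambda t}$ that produces the $2$ inside $\log(2/\delta)$, and the optimization $\lambda^\star=t/(NM^2)$ giving $t^2=2NM^2\log(2/\delta)$ --- is exactly the route Pinelis takes, and your endgame computations are correct. You are also right to read the garbled hypothesis as the martingale-difference condition $\mathbb{E}[X_n\mid X_1,\dots,X_{n-1}]=0$.

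The gap is in the one step you yourself flag as the crux. The justification you offer for $\mathbb{E}[\cosh(\lambda\|S_n\|)\mid\mathcal{F}_{n-1}]\le\cosh(\lambda\|S_{n-1}\|)\,e^{\lambda^2M^2/2}$ does not go through. Projecting onto a fixed, $\mathcal{F}_{n-1}$-measurable unit vector $u$ gives $\langle u,S_n\rangle\le\|S_n\|$, so the scalar Hoeffding bound controls $\cosh(\lambda\langle u,S_n\rangle)$, which is a \emph{lower} bound for the quantity you must bound from above; choosing $u=S_n/\|S_n\|$ to force equality destroys predictability, so $\langle u,X_n\rangle$ is no longer a martingale difference. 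Likewise, expanding $\|S_n\|^2=\|S_{n-1}\|^2+2\langle S_{n-1},X_n\rangle+\|X_n\|^2$ and feeding it into the power series of $\cosh(\lambda\sqrt{\cdot})$ does not close: the terms $\|S_n\|^{2k}$ for $k\ge 2$ contain odd powers of the cross term whose conditional expectations neither vanish nor have a controlled sign. This one-step inequality is precisely the content of Pinelis' theorem for $(2,1)$-smooth spaces and needs his actual argument --- for instance, setting $f(\theta)=\|x+\theta v\|$ and exploiting $f f''\le\|v\|^2$ together with $(f')^2\le\|v\|^2$ to derive a differential inequality for $\theta\mapsto\cosh(\lambda f(\theta))$, with the zero-mean condition killing the first-order term --- or else must simply be cited, which is what the paper does.
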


\begin{lemma}[Adapted from \citep{khanduri2019byzantine}]
    \label{lemma-error-bound}
    If we choose $\delta$ and $B_t$ in Algorithm \ref{alg:FedPG-BR} such that: \\
    (i) $\mathrm{e}^{\frac{\delta B_{t}}{2(1-2 \delta)}} \leq \frac{2 K}{\delta} \leq \mathrm{e}^{\frac{B_{t}}{2}}$ \\
    (ii) $\delta \leq \frac{1}{5 K B_{t}}$ \\
    then we have the following bound for $\mathbb{E}\|e_t\|^2$:
    
    \begin{align*}
        \mathbb{E}\left\|e_{t}\right\|^{2} \leq \frac{4 \sigma^{2}}{(1-\alpha)^{2} K B_{t}}+\frac{48 \alpha^{2} \sigma^{2} V}{(1-\alpha)^{2} B_{t}}
    \end{align*}    
\end{lemma}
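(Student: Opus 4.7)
The plan is to decompose $e_t = \mu_t - \nabla J(\boldsymbol{\theta}_0^t)$ into contributions from good and from Byzantine agents that survive the filter, bound each piece separately, and then average over the two filtering branches R1/R2 using the law of total expectation. By Claim \ref{claim:R2-ensures-all-good-agents-are-included}, the filter always retains every good agent, so $\mathcal{G} \subseteq \mathcal{G}_t$, $|\mathcal{G}_t \cap \mathcal{G}| = (1-\alpha)K$, and $|\mathcal{G}_t \setminus \mathcal{G}| \le \alpha K$. Splitting the average $\mu_t = |\mathcal{G}_t|^{-1}\sum_{k \in \mathcal{G}_t} \mu_t^{(k)}$ along this partition, applying $\|a+b\|^2 \le 2\|a\|^2 + 2\|b\|^2$, and Cauchy--Schwarz on the Byzantine part gives
\begin{equation*}
\|e_t\|^2 \;\le\; \frac{2}{|\mathcal{G}_t|^2}\Big\|\sum_{k \in \mathcal{G}}\big(\mu_t^{(k)} - \nabla J(\boldsymbol{\theta}_0^t)\big)\Big\|^2 \;+\; \frac{2\,|\mathcal{G}_t \setminus \mathcal{G}|}{|\mathcal{G}_t|^2} \sum_{k \in \mathcal{G}_t \setminus \mathcal{G}} \big\|\mu_t^{(k)} - \nabla J(\boldsymbol{\theta}_0^t)\big\|^2.
\end{equation*}

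For the good-agent term, each $\mu_t^{(k)}$ with $k \in \mathcal{G}$ is the mean of $B_t$ independent trajectories sampled under $\boldsymbol{\theta}_0^t$, hence an unbiased estimator of $\nabla J(\boldsymbol{\theta}_0^t)$ whose squared error is bounded via Assumption \ref{assumption-bounded-variance} by $\sigma^2/B_t$. The different good agents are mutually independent, so the cross terms vanish and $\mathbb{E}\|\sum_{k \in \mathcal{G}}(\mu_t^{(k)}-\nabla J(\boldsymbol{\theta}_0^t))\|^2 \le (1-\alpha)K\,\sigma^2/B_t$. Combined with $|\mathcal{G}_t| \ge (1-\alpha)K$, this contributes $\le 2\sigma^2/[(1-\alpha)KB_t]$, which is absorbed into $4\sigma^2/[(1-\alpha)^2 KB_t]$ using $1-\alpha \le 2$.

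For the Byzantine term, I condition on which filtering rule is triggered. Under R1 (which, by the argument in Appendix \ref{proof-claims-in-filtering-strategy} based on Lemma \ref{lemma-martingale}, holds with probability at least $1-\delta$), every surviving Byzantine agent lies within distance $3\sigma\sqrt{V/B_t}$ of $\nabla J(\boldsymbol{\theta}_0^t)$; plugging this into the Cauchy--Schwarz bound above yields a contribution of at most $18\alpha^2\sigma^2 V/[(1-\alpha)^2 B_t]$. Under the complementary event (probability $\le \delta$), R2 is invoked and Claim \ref{claim2:distance-in-filtering} gives only the weaker bound $3\sigma$, producing at most $18\alpha^2\sigma^2/(1-\alpha)^2$. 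Summing these by the law of total expectation gives
\begin{equation*}
\mathbb{E}\|e_t\|^2 \;\le\; \frac{2\sigma^2}{(1-\alpha)KB_t} \;+\; \frac{18\alpha^2\sigma^2 V}{(1-\alpha)^2 B_t} \;+\; \delta\cdot\frac{18\alpha^2\sigma^2}{(1-\alpha)^2}.
\end{equation*}

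The remaining step, and the place where the technical parameter conditions are consumed, is to absorb the R2 remainder $\delta \cdot 18\alpha^2\sigma^2/(1-\alpha)^2$ into the $V/B_t$ term. Using the hypothesis $\delta \le 1/(5KB_t)$ together with $V = 2\log(2K/\delta) \ge 1$ for any reasonable $K, \delta$, one checks that $\delta/1 \le 5V/(3B_t)$, so the residual term is dominated by $30\alpha^2\sigma^2 V/[(1-\alpha)^2 B_t]$, and $18 + 30 = 48$ yields exactly the advertised constant. The first-term constant $4$ follows from $2/(1-\alpha) \le 4/(1-\alpha)^2$. The main delicate step is the case split on R1 versus R2: the whole benefit of the two-stage filter lies in the fact that the likely branch R1 gives a much tighter radius $3\sigma\sqrt{V/B_t}$ rather than $3\sigma$, and one has to verify that the parameter conditions on $\delta$ make the low-probability R2 branch harmless even with its crude radius. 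Once this is done the rest is bookkeeping.
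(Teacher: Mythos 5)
Your write-up is considerably more explicit than the paper's own treatment: the paper proves Lemma \ref{lemma-error-bound} only by deferring to Lemma 7 of \citet{khanduri2019byzantine}, noting that the base conditions $(3\sigma, 4\sigma, 7\sigma)$ there become $(\sigma, 2\sigma, 3\sigma)$ here. Your skeleton — split $e_t$ into good and surviving-Byzantine contributions, use unbiasedness and independence to get the $\sigma^2/(KB_t)$ rate for the good part, use the filter radii for the Byzantine part, and weight the two filtering branches by their probabilities — is indeed the skeleton of that argument, and your constant bookkeeping ($2/(1-\alpha)\le 4/(1-\alpha)^2$, and $18+30=48$ via $\delta\le 1/(5KB_t)$ together with $V\ge 1$) does reproduce the stated bound.

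There is, however, a genuine gap in the case analysis. You partition the randomness into ``R1 holds with the concentration radius $3\sigma\sqrt{V/B_t}$'' (probability $\ge 1-\delta$) and ``R2 is invoked'' (probability $\le\delta$). These two events are not exhaustive. R2 is triggered only when $|\mathcal{G}_t|<(1-\alpha)K$ after R1 (line 5 of Algorithm \ref{alg:FedAgg-BR}), and it is possible that the concentration event of Lemma \ref{lemma-martingale} fails while R1 nevertheless returns a set of size at least $(1-\alpha)K$ — for instance by admitting Byzantine vectors while dropping some good agents. On that residual event (probability $\le\delta$, but distinct from your second case) neither of your bounds applies: the surviving vectors are only guaranteed to be within $\mathfrak{T}_{\mu}$ of $\mu_t^{\text{mom}}$, not within $3\sigma\sqrt{V/B_t}$ of $\nabla J(\boldsymbol{\theta}_0^t)$, and — more seriously — the inclusion $\mathcal{G}\subseteq\mathcal{G}_t$ on which your opening decomposition rests (via Claim \ref{claim:R2-ensures-all-good-agents-are-included}, which is a statement about R2 only) can fail, so the split of $e_t$ into a sum over all of $\mathcal{G}$ plus a Byzantine remainder is no longer valid. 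You need a third branch that bounds $\|e_t\|$ directly on this event, e.g.\ by showing every member of $\mathcal{G}_t$ is still within $\sigma+2\mathfrak{T}_{\mu}\le 5\sigma$ of $\nabla J(\boldsymbol{\theta}_0^t)$ using the majority argument and Assumption \ref{assumption-bounded-variance}, and then absorbing the resulting $O(\delta\sigma^2)$ term via condition (ii). The tell-tale symptom is that your proof never consumes condition (i), $e^{\delta B_t/(2(1-2\delta))}\le 2K/\delta\le e^{B_t/2}$: its right-hand inequality is what gives $V\le B_t$, i.e.\ $\mathfrak{T}_{\mu}\le 2\sigma$, which is exactly what makes this third branch controllable. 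Once it is added, the stated constants $4$ and $48$ need to be rechecked, since the crude $\delta\cdot 25\sigma^2\le 5\sigma^2/(KB_t)$ does not immediately fit inside the first term; the overall structure and rate survive, but the verbatim constants require the more careful accounting done in \citet{khanduri2019byzantine}.
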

\begin{proof}
The proof of this lemma is similar to that of Lemma 7 of \citet{khanduri2019byzantine}. The key difference lays on the base conditions used to define the probabilistic events.

In FedPG-BR, the following refined conditions (results of Claims \ref{claim:R2-ensures-all-good-agents-are-included} and \ref{claim2:distance-in-filtering}) are used, 
\begin{align*}
    \|\mu_t^{\operatorname{mom}} - \nabla J(\boldsymbol{\theta})\| \leq \sigma, \|\mu_t^{(k)} - \mu_t^{\operatorname{mom}}\| \leq 2\sigma, \|\mu_t^{(k)} - \nabla J(\boldsymbol{\theta})\| \leq 3\sigma, \forall k \in \mathcal{G}_t
\end{align*}
whereas \citet{khanduri2019byzantine} needs the following:
\begin{align*}
    \|\mu_t^{\operatorname{med}} - \nabla J(\boldsymbol{\theta})\| \leq 3\sigma, \|\mu_t^{(k)} - \mu_t^{\operatorname{med}}\| \leq 4\sigma, \|\mu_t^{(k)} - \nabla J(\boldsymbol{\theta})\| \leq 7\sigma, \forall k \in \mathcal{G}_t
\end{align*}
The detailed proof of Lemma \ref{lemma-error-bound} can be obtained following the derivation of Lemma 7 of \citet{khanduri2019byzantine} by modifying the base conditions.
\end{proof}
\begin{lemma}
\label{lemma-GeoSampling}
If $N \sim Geom(\Gamma)$ for $\Gamma > 0$. Then for any sequence $D_0, D_1, . . . $ with $ \mathbb{E}\|D_N \| \leq~\infty$, we
have
\begin{align*}
    \mathbb{E}\left[D_{N}-D_{N+1}\right]=(\frac{1}{\Gamma}-1)(D_{0}-\mathbb{E} D_{N})
\end{align*}
\end{lemma}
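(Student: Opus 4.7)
The plan is to prove the identity by direct computation using the explicit probability mass function of the geometric distribution. Reading the convention from the algorithm (where $\Gamma = B_t/(B_t + b_t)$ is supposed to give $\mathbb{E}[N_t] = B_t/b_t$) and from the usage in Lemma~\ref{lemma-eta-E} (where $1/\Gamma - 1$ is evaluated to $1/B_t$), the intended parameterization is $P(N=k) = (1-\Gamma)\Gamma^k$ for $k = 0, 1, 2, \ldots$, which is also consistent with \citet{lei2017scsg2}. Under this convention, I would write
\[
\mathbb{E}[D_N] = \sum_{k=0}^{\infty}(1-\Gamma)\Gamma^{k} D_k, \qquad \mathbb{E}[D_{N+1}] = \sum_{k=0}^{\infty}(1-\Gamma)\Gamma^{k} D_{k+1}.
\]
The hypothesis $\mathbb{E}\|D_N\| < \infty$ makes the first series absolutely convergent, and reindexing gives $\mathbb{E}\|D_{N+1}\| \leq \mathbb{E}\|D_N\|/\Gamma < \infty$ for the second, so all rearrangements below are legitimate.

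Next, I would shift the index in $\mathbb{E}[D_{N+1}]$ by setting $j = k+1$:
\[
\mathbb{E}[D_{N+1}] = \sum_{j=1}^{\infty}(1-\Gamma)\Gamma^{j-1} D_j = \frac{1}{\Gamma}\sum_{j=1}^{\infty}(1-\Gamma)\Gamma^{j} D_j = \frac{1}{\Gamma}\bigl(\mathbb{E}[D_N] - (1-\Gamma) D_0\bigr),
\]
where the last equality simply pulls out the missing $j=0$ term from the full series $\mathbb{E}[D_N]$. Subtracting from $\mathbb{E}[D_N]$ and simplifying,
\[
\mathbb{E}[D_N - D_{N+1}] = \left(1 - \frac{1}{\Gamma}\right)\mathbb{E}[D_N] + \frac{1-\Gamma}{\Gamma}D_0 = \frac{1-\Gamma}{\Gamma}\bigl(D_0 - \mathbb{E}[D_N]\bigr) = \left(\frac{1}{\Gamma}-1\right)\bigl(D_0 - \mathbb{E}[D_N]\bigr),
\]
which is the claimed identity.

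There is no real obstacle: the proof is a one-line manipulation of a geometric-weighted series once the parameterization is fixed. The only thing worth being careful about is precisely that parameterization convention, since $(1/\Gamma - 1)$ versus $\Gamma/(1-\Gamma)$ differ in general, and the stated formula only matches one of the two standard parameterizations of $\mathrm{Geom}(\Gamma)$. As a sanity check, the special case $D_k = k$ gives $\mathbb{E}[D_N - D_{N+1}] = -1$ and $(1/\Gamma - 1)(0 - \Gamma/(1-\Gamma)) = -1$, which agrees, and substituting $\Gamma = B_t/(B_t+1)$ reproduces the factor $1/B_t$ used in the downstream proofs.
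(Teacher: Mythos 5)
Your proof is correct. The paper does not reproduce an argument for this lemma---it simply defers to \citet{lei2017scsg2}---and your direct computation (reindexing the geometric-weighted series, with the correct parameterization $P(N=k)=(1-\Gamma)\Gamma^k$ for $k\geq 0$ and the integrability check that justifies the rearrangement) is exactly the standard argument given there, so there is nothing to add.
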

\begin{proof}
The proof can be found in \citet{lei2017scsg2}.
\end{proof}

\begin{lemma}[Young's inequality (Peter-Paul inequality)]\label{lemma:Youngs-inequality}
For all real numbers $a$ and $b$ and all $\beta > 0$, we have
\begin{align*}
    ab \leq \frac{a^2}{2\beta} + \frac{\beta b^2}{2}
\end{align*}
\end{lemma}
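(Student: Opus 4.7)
The plan is to derive the claimed bound from the non-negativity of a carefully chosen perfect square, which is the classical route for this family of inequalities. Specifically, since $\beta > 0$, the quantity $a/\sqrt{\beta}$ is well-defined, so I would start from
$$\left(\frac{a}{\sqrt{\beta}} - \sqrt{\beta}\, b\right)^2 \geq 0,$$
which holds for any real $a, b$. Expanding the square gives $a^2/\beta - 2ab + \beta b^2 \geq 0$, and rearranging yields $2ab \leq a^2/\beta + \beta b^2$. Dividing through by $2$ produces exactly the stated inequality.

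As a cross-check, I would note that the inequality becomes an equality precisely when $a/\sqrt{\beta} = \sqrt{\beta}\, b$, i.e.\ when $a = \beta b$, which matches the well-known tightness condition for Young's inequality in this form. An equivalent route is AM--GM applied to the two non-negative reals $a^2/\beta$ and $\beta b^2$: their arithmetic mean dominates their geometric mean, giving $\tfrac{1}{2}\bigl(a^2/\beta + \beta b^2\bigr) \geq \sqrt{a^2 b^2} = |ab| \geq ab$, which is the desired bound. Either derivation is a one-liner.

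There is essentially no obstacle; the only subtlety worth flagging is that the statement permits $a, b$ to be any real numbers, so $ab$ may be negative, while the right-hand side is always non-negative. In that regime the inequality is trivially (and loosely) satisfied, and the interesting case is $ab > 0$, where the parameter $\beta$ controls the trade-off between the two terms on the right — this is the ``Peter--Paul'' interpretation that the usage in Lemma~\ref{lemma-neg-2eta-E} and the proof of Theorem~\ref{main-theorem} exploits by choosing $\beta$ to cancel a specific coefficient.
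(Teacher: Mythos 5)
Your proof is correct and is the standard argument; the paper states this lemma without proof (it is a textbook fact), and your completing-the-square derivation from $\left(a/\sqrt{\beta} - \sqrt{\beta}\,b\right)^2 \geq 0$ is exactly the canonical justification, with the equality condition and the AM--GM alternative both checked correctly.
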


\renewcommand\theequation{\ref{main-theorem}-\arabic{equation}}
\section{Experimental details}\label{EXP-Settings-appendix}
\subsection{Hyperparameters}
We follow the setups of SVRPG \citep{papini2018stochastic} to parameterize the policies using neural networks. For all the algorithms under comparison in the experiments (Section \ref{section:experiments}), Adam\citep{kingma2014adam} is used as the gradient optimizer. The 10 random seeds are $[0\!-\!9]$. All other hyperparameters used in all the experiments are reported in Table \ref{tab:params}. 

\begin{table}[ht]
\centering
\small
\renewcommand\arraystretch{1.15}
\caption{Hyperparameters used in the experiments.}
\hspace{0.1mm}
\label{tab:params}
    \resizebox{\textwidth}{!}{%
    \begin{tabular}{@{}ccccc@{}}
    \hline
    Hyperparameters & Algorithms & CartPole-v1 & LunarLander-v2 & HalfCheetach-v2 \\ \hline
    NN policy & - & Categorical MLP & Categorical MLP & Gaussian MLP \\
    NN hidden weights & - & 16,16 & 64,64 & 64,64 \\
    NN activation & - & ReLU & Tanh & Tanh \\
    NN output activation & - & Tanh & Tanh & Tanh \\
    Step size (Adam) $\eta$ & - & 1e-3 & 1e-3 & 8e-5 \\
    Discount factor $\gamma$ & -  & 0.999 & 0.990 & 0.995 \\
    Maximum trajectories & -  & 5000 & 10000 & 10000 \\
    Task horizon $H$ (for training) & - & 500 & 1000 & 500 \\
    Task horizon $H$ (for test) & - & 500 & 1000 & 1000 \\
    $\alpha$ (for practical setup) & - & 0.3 & 0.3 & 0.3 \\
    Number of runs & - & 10 & 10 & 10 \\ \cline{2-5} 
    \multirow{3}{*}{Batch size $B_t$} & GPOMDP & 16 & 32 & 48 \\
     & SVRPG & 16 & 32 & 48 \\
     & FedPG-BR & sampled from $[12,20]$ & sampled from $[26,38]$ & sampled from $[46,50]$ \\ \cline{2-5} 
    \multirow{3}{*}{Mini-Batch size $b_t$} & GPOMDP & - & - & - \\
     & SVRPG & 4 & 8 & 16 \\
     & FedPG-BR & 4 & 8 & 16 \\ \cline{2-5} 
    \multirow{3}{*}{Number of steps $N_t$} & GPOMDP & 1 & 1 & 1 \\
     & SVRPG & 3 & 3 & 3 \\
     & FedPG-BR & $N_t \sim Geom(\frac{B_t}{B_t + b_t})$ &  $N_t \sim Geom(\frac{B_t}{B_t + b_t})$ &  $N_t \sim Geom(\frac{B_t}{B_t + b_t})$\\ \cline{2-5} 
    \multirow{3}{*}{\begin{tabular}[c]{@{}c@{}}Variance bound $\sigma$ \\ (Estimated by server)\end{tabular}
    } & GPOMDP & - & - & - \\
     & SVRPG & - & - & - \\
     & FedPG-BR & 0.06 & 0.07 & 0.9 \\ \cline{2-5} 
    \multirow{3}{*}{Confidence parameter $\delta$} & GPOMDP & - & - & - \\
     & SVRPG & - & - & - \\
     & FedPG-BR & 0.6 & 0.6 & 0.6 \\ \hline
    \end{tabular}
    }
\end{table}
%
%
%
    %
    
\subsection{Computing Infrastructure}\label{appendix:computing-infrastruture}
All experiments are conducted on a computing server {without GPUs}. The server is equipped with 14 cores (28 threads) \textit{Intel(R) Core(TM) i9-10940X CPU @ 3.30GHz} and 64G memory. The average runtime for each run of FedPG-BR (K=10 B=3) is 2.5 hours for the CartPole task, 4 hours for the HalfCheetah task, and 12 hours for the LunarLander task.    
\section{Additional experiments}\label{additional-experiments}
\subsection{Performance of FedPG-BR in practical systems with $\alpha > 0$ for the CartPole and the LunarLander tasks}\label{appendix:exp-exp2=lunarlander}
The results for the CartPole and the LunarLander tasks which yield the same insights as discussed in experiments (Section \ref{section:experiments}) are plotted in Figure \ref{fig:exp2-cartpole} and Figure \ref{fig:exp2-lunarlander}. As discussed earlier, for both GPOMDP and SVRPG, the federation of more agents in practical systems which are subject to the presence of Byzantine agents, i.e., random failures or adversarial attacks, causes the performance of their federation to be worse than that in the single-agent setting. In particular, RA agents (middle figure) and SF agents (right figure) render GPOMDP and SVRPG unlearnable, i.e., unable to converge at all. This is in contrast to the performance of FedPG-BR. That is, FedPG-BR ($K=10 B=3$) is able to deliver superior performances even in the presence of Byzantine agents for all three tasks: CartPole (Figure \ref{fig:exp2-cartpole}), LunarLander (Figure \ref{fig:exp2-lunarlander}), and HalfCheetah (Figure \ref{fig:exp2} in Section \ref{section:experiments}). This provides an assurance on the reliability of our FedPG-BR algorithm to promote its practical deployment, and significantly improves the practicality of FRL.
\begin{figure}[t]
    \centering
    \includegraphics[height=1.3in] {./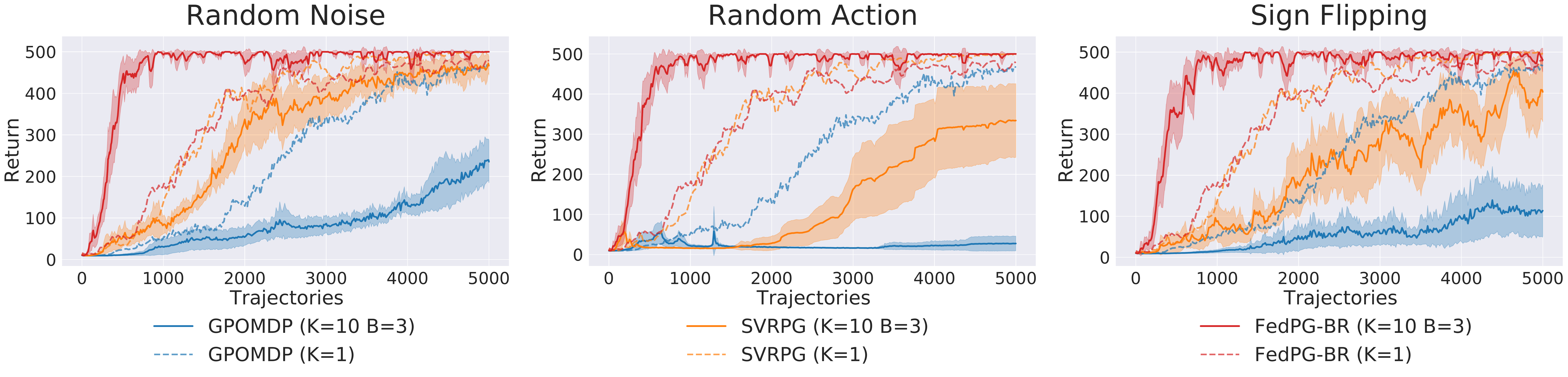}
    \caption{Performance of FedPG-BR in practical systems with $\alpha>0$ for CartPole. 
    Each subplot corresponds to a different type of Byzantine failure exercised by the 3 Byzantine agents.}
    \label{fig:exp2-cartpole}
\end{figure}
\begin{figure}[!t]
    \centering
    \includegraphics[height=1.25in] {./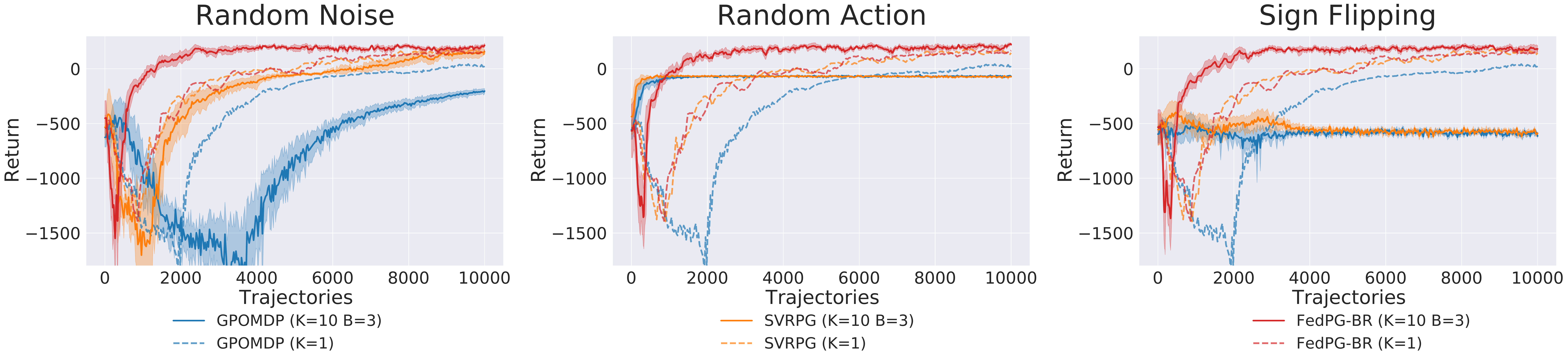}
    \caption{Performance of FedPG-BR in practical systems with $\alpha>0$ for LunarLander. 
    Each subplot corresponds to a different type of Byzantine failure exercised by the 3 Byzantine agents.}
    \label{fig:exp2-lunarlander}
\end{figure}

\subsection{Performance of FedPG-BR against the Variance Attack}
We have discussed in Section~\ref{subsection:technical-challenges} where the high variance in PG estimation renders the FRL system vunlnerable to variance-based attacks such as the Variance Attack (VA) proposed by \citet{baruch2019Byzantine-VA-attack}. The VA attackers collude together to estimate the population mean and the standard-deviation of gradients at each round, and move the mean by the largest value such that their values are still within the population variance. Intuitively, this non-omniscient attack works by exploiting the high variance in gradient estimation of the population and crafting values that contribute most to the population variance, hence gradually shifting the population mean. According to \citet{cao2019adversarialAttack-AV}, existing defenses will fail to remove those non-omniscient attackers and the convergence will be significantly worsened if the population variance is large enough. 
 
\begin{figure}[]
    \centering
    \includegraphics[height=1.2in] {./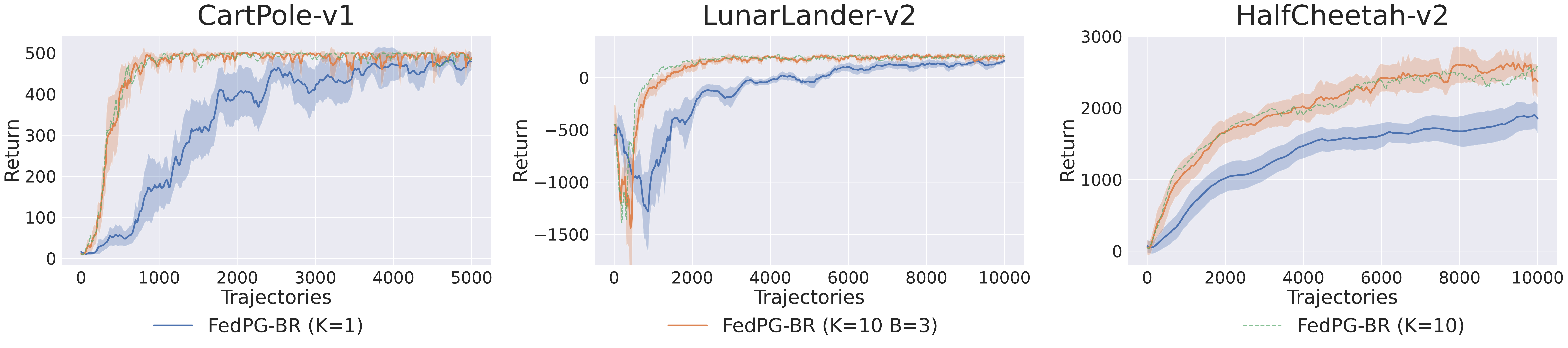}
    \caption{Performance of FedPG-BR in practical systems with $\alpha>0$ for CartPole. 
    Among the $K=10$ participating agents, 3 Byzantine agents are colluding together to launch the VA attack.}
    \label{fig:exp-VA}
\end{figure}

We are thus motivated to look for solutions that theoretically reduce the variance in policy gradient estimation. Inspired by the variance-reduced policy gradient works \citep[e.g.,][]{papini2018stochastic,xu2020improvedUAI}, we adapt the SCSG optimization \citep{lei2017scsg2} to our federated policy gradient framework for a refined control over the estimation variance. Through our adaptation, we are able to control the variance by the semi-stochastic gradient (line 11 in Algorithm~\ref{alg:FedPG-BR}), hence resulting in the fault-tolerant FRL system that can defend the VA attackers. Each plot in Figure~\ref{fig:exp-VA} shows the experiment for each of the three tasks correspondingly, where 3 Byzantine agents are implemented as the VA attackers \citep{cao2019adversarialAttack-AV} ($z^{max}$ is $0.18$ in our setup). We again include the corresponding single-agent performance ($K=1$) and the federation of 10 good agents ($K=10$) in the plots for reference. The results show that in all three tasks, FedPG-BR ($K=10 B=3$) still manages to significantly outperform FedPG-BR ($K=1$) in the single-agent setting. Furthermore, the performance of FedPG-BR ($K=10 B=3$) is barely worsened compared with FedPG-BR ($K=10$) with 10 good agents. This shows that, with the adaptation of SCSG, our fault-tolerant FRL system can perfectly defend the VA attack from the literature, which further corroborates our analysis on our Byzantine filtering step (Section~\ref{subsection:algorithm-description}) showing that if gradients from Byzantine agents are not filtered out, their impact is limited since their maximum distance to $\nabla J(\boldsymbol{\theta}^t_0)$ is bounded by $3\sigma$ (Claim~\ref{claim2:distance-in-filtering}).


\subsection{Environment Setup}
On a Linux system, navigate into the root directory of this project and execute the following commands: 
\begin{lstlisting}[language=bash]
    $ conda create -n FT-FRL pytorch=1.5.0
    $ conda activate FT-FRL
    $ pip install -r requirements.txt
    $ cd codes
\end{lstlisting}

To run experiments in HalfCheetah, a mujoco license\footnote{\url{http://www.mujoco.org}} is required. After obtaining the license, install the mujoco-py library by following the instructions from OpenAI.\footnote{\url{https://github.com/openai/mujoco-py}}

\subsection{Examples}
To reproduce the results of FedPG-BR ($K=10$) in Figure \ref{fig:exp1} for the HalfCheetah task, run the following command:
\begin{lstlisting}[language=bash]
    $ python run.py --env_name HalfCheetah-v2 --FT_FedPG
    --num_worker 10 --num_Byzantine 0 
    --log_dir ./logs_HalfCheetah --multiple_run 10 
    --run_name HalfCheetah_FT-FRL_W10B0
\end{lstlisting}
To reproduce the results of FedPG-BR ($K=10\ B=3$) in Figure \ref{fig:exp2} where 3 Byzantine agents are Random Noise in the HalfCheetah task environment, run the following command:
\begin{lstlisting}[language=bash]
    $ python run.py --env_name CartPole-v1 --FT_FedPG
    --num_worker 10 --num_Byzantine 3 
    --attack_type random-noise 
    --log_dir ./logs_Cartpole --multiple_run 10 
    --run_name Cartpole_FT-FRL_W10B3
\end{lstlisting}
Replace \lstinline{`--FT_FedPG'} with \lstinline{`--SVRPG'} for the results of SVRPG in the same experiment. All results including all statistics will be logged into the directory indicated by \lstinline{`--log_dir`}, which can be visualized in tensorboard.

To visualize the behavior of the learnt policy, run the experiment in evaluation mode with rendering option on. For example:
\begin{lstlisting}[language=bash]
    $ python run.py --env_name CartPole-v1 --FT_FedPG
    --eval_only --render 
    --load_path PATH_TO_THE_SAVED_POLICY_MODEL
\end{lstlisting}

\end{document}